\documentclass[letterpaper, 10 pt, journal, twoside]{ieeetran}  
\IEEEoverridecommandlockouts                              

\usepackage{bm}				
\usepackage{bbm}	
\usepackage{amsthm}
\usepackage{amsfonts}
\usepackage{amsmath}
\usepackage{mathtools}      
\usepackage{floatrow}
\floatsetup[table]{capposition=top}
\usepackage{mathptmx}       
\usepackage{helvet}         
\usepackage{courier}        
\usepackage{type1cm}        

\usepackage{makeidx}         
\usepackage{graphicx}        
\usepackage{multicol}        
\usepackage{multirow}
\usepackage[bottom]{footmisc}
\usepackage{array}
\usepackage{textcomp}
\usepackage{comment}
\usepackage{cite}
\usepackage{wrapfig}
\usepackage[rightcaption]{sidecap}
\usepackage{mathrsfs}
\usepackage{subfigure}
\usepackage[colorinlistoftodos]{todonotes}

\usepackage{algorithmic} 
\usepackage[linesnumbered,ruled, vlined,commentsnumbered]{algorithm2e}
\SetKwComment{Comment}{}{}
\DeclareMathAlphabet\mathbfcal{OMS}{cmsy}{b}{n}
\DeclareMathAlphabet\mathcal{OMS}{cmsy}{m}{n}
\DeclareFontFamily{OT1}{mathc}{}
\DeclareFontShape{OT1}{mathc}{m}{n}{ <-> mathc10 }{}

\newtheorem{Pro}{Problem}

\newtheorem{Lem}{Lemma}
\newtheorem{Cor}{Corollary}

\newtheorem{remark}{Remark}

\newcommand{\argmin}[1]{\underset{#1}
	{\operatorname{arg}\,\operatorname{min}}\;}


\makeindex             

\title{\LARGE Graph-based Proprioceptive Localization Using a Discrete Heading-Length Feature Sequence Matching Approach} 

\author{Hsin-Min Cheng and Dezhen Song 
\thanks{H. Cheng and D. Song are with CSE Department, Texas A\&M University, College Station, TX 77843, USA, Emails: \texttt{hmcheng@tamu.edu} and \texttt{dzsong@cs.tamu.edu}.}
\thanks{This work was supported in part by National Science Foundation under NRI-1748161 and NRI-1925037.}
}

\begin{document}
\maketitle
\begin{abstract}
Proprioceptive localization refers to a new class of robot egocentric localization methods that do not rely on the perception and recognition of external landmarks. These methods are naturally immune to bad weather, poor lighting conditions, or other extreme environmental conditions that may hinder exteroceptive sensors such as a camera or a laser ranger finder. These methods depend on proprioceptive sensors such as inertial measurement units (IMUs) and/or wheel encoders. Assisted by magnetoreception, the sensors can provide a rudimentary estimation of vehicle trajectory which is used to query a prior known map to obtain location. Named as graph-based proprioceptive localization (GBPL), we provide a low cost fallback solution for localization under challenging environmental conditions. As a robot/vehicle travels, we extract a sequence of heading-length values for straight segments from the trajectory and match the sequence with a pre-processed heading-length graph (HLG) abstracted from the prior known map to localize the robot under a graph-matching approach. Using the information from HLG, our location alignment and verification module compensates for trajectory drift, wheel slip, or tire inflation level. We have implemented our algorithm and tested it in both simulated and physical experiments. The algorithm runs successfully in finding robot location continuously and achieves localization accurate at the level that the prior map allows (less than 10m). 
\end{abstract}

\section{Introduction}

Localization is a critical navigation function for vehicles or robots in urban area. Common localization methods employ global position system (GPS), a laser ranger finder, and a camera which are exteroceptive sensors relying on the perception and recognition of landmarks in the environment. However, high-rise buildings may block GPS signals. Poor weather and lighting conditions may challenge all exteroceptive sensors. What is needed is a fallback solution that enables vehicles to localize themselves under challenging conditions. This complements existing exteroceptive sensor-based localization methods. Inspired by biological systems, we combine proprioceptive sensors, such as inertial measurement units (IMU) and wheel encoders, with magnetoreception, to develop a map-based localization method to address the problem, which is named as graph-based proprioceptive localization (GBPL).

In a nutshell, our new GBPL method employs the proprioceptive sensors to estimate vehicle trajectory and match it with a prior known map. However, this is non-trivial because 1) there is a significant drift issue in the dead reckoning process and 2) the true vehicle trajectory does not necessarily match the street GPS waypoints on the map due to the fact that a street may contain multiple lanes and street GPS waypoints may be inaccurate. This determines that a simple trajectory matching would not work. Instead, we focus on matching features which are straight segments of the trajectory (Fig.~\ref{fig:system}). We keep track of connectivity, heading and length of each segment which converts the trajectory to a discrete and connected query sequence. This allows us to formulate the GBPL problem as a probabilistic graph matching problem.  To facilitate the Bayesian graph matching, we pre-process the prior known map consisting of GPS waypoints into a heading-length graph (HLG) to capture the connectivity of straight segments and their corresponding heading and length information. As the robot travels, we perform sequential Bayesian probability estimation until it converges to a unique solution. With global location obtained, we track robot locations continuously and align the trajectory with HLG to bound error drift.

\begin{figure}[t!]
	\center
	\includegraphics[width=1\linewidth, viewport=10 400 860 761, clip=true]{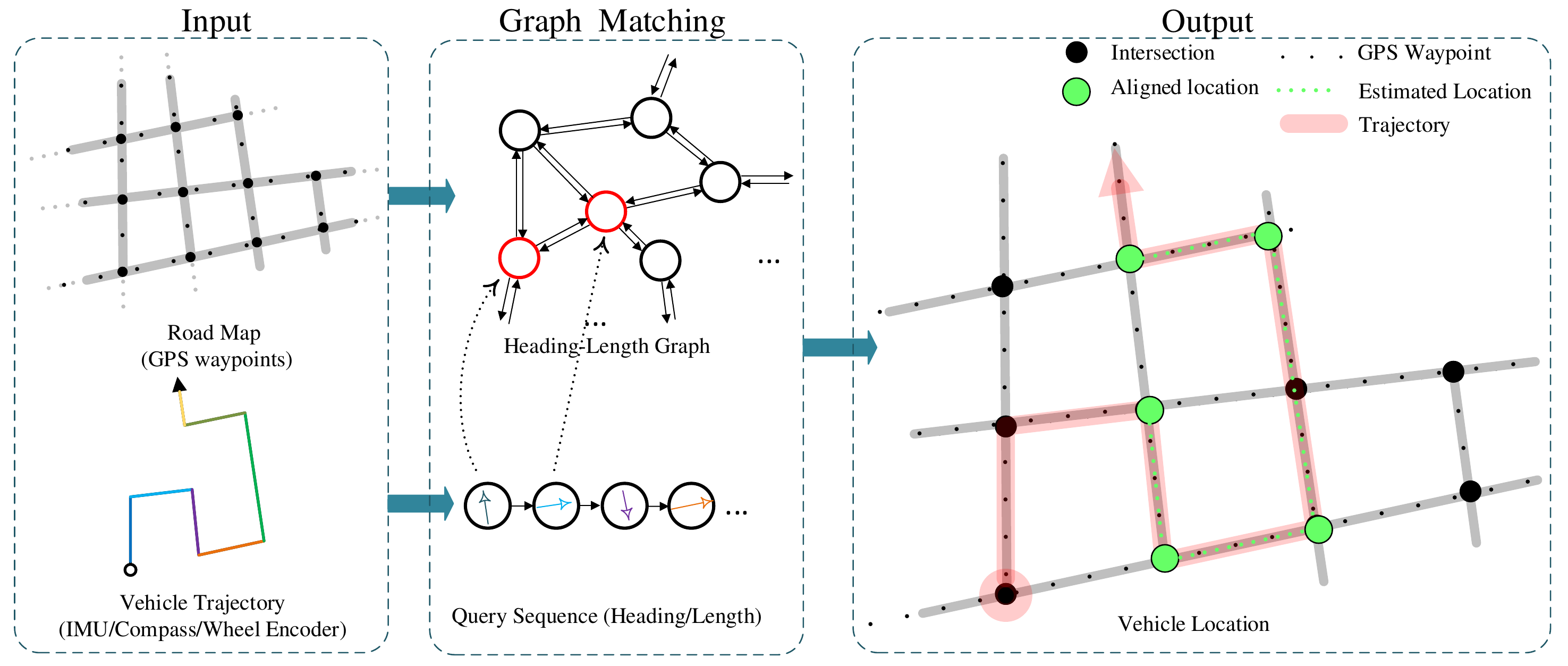}
	\caption{An illustration of GBPL method. Left: our inputs include a prior known map and the trajectory estimated from an IMU, a compass, and a wheel encoder. Middle: we process the prior map in to a straight segment connectivity graph and also the trajecory into a query sequence of headings and lengths of straight segments. Right: Aligned trajectory to the map after graph matching.}
	\label{fig:system}
\end{figure}

We have implemented our algorithm and tested it in physical experiments using our own collected data and an open dataset. The algorithm successfully and continuously localizes the robot. The experimental results show that our method outperforms in localization speed and robustness when compared with the counterpart in~\cite{cheng2018plam}. The algorithm achieves localization accurate at the level that the prior map allows (less than 10m). 

The rest of the paper is organized as follows. 
After a review of related work in Section~\ref{sec:relatedwork}, we define the problem in Section~\ref{sec:pro_def}. We introduce overall system design and detail GBPL in Section~\ref{sec:plam_model}. We validate our system and algorithm with simulation and physical experiments in Section~\ref{sec:exps} and conclude the paper in Section~\ref{sec:conclusion}.
 
\section{Related Work} \label{sec:relatedwork}
Our GBPL is related to localization using different sensor modalities, dead-reckoning, and map-based localization. 

We can classify the localization methods into two categories based on sensor modalities: exteroceptive sensors and proprioceptive sensors. Exteroceptive sensors mainly rely on the perception and recognition of landmarks in the environment to estimate location. Mainstream exteroceptive sensors include cameras~\cite{Brubaker16selfloc,lowry2016visual,yan-mfg-lba-tro-2015} and laser range finders~\cite{hahnel2003efficient,hess2016real,levinson2010robust}. These methods are often challenged by poor lighting conditions or weather conditions. GPS receiver~\cite{hunter2014path,cui03gpsloc} is another commonly-used sensor but it malfunctions when the vehicle travels close to high-rise buildings or inside tunnels. On the other hand, proprioceptive sensors, such as IMUs~\cite{Aly17DrAnchor} and wheel encoders~\cite{chou-icra-2018}, are inherently immune to external conditions. However, they are more susceptible to error drift and suffer from limited accuracy. Recent sensor fusion approaches that combine an exteroceptive sensor, such as a camera or a laser ranger finder, with a proprioceptive sensor such as an IMU, greatly improve system robustness and become popular in applications~\cite{li2013high}. However, the sensor fusion approaches still strongly depend on exteroceptive sensor and cannot handle the aforementioned challenging conditions. 

To utilize proprioceptive sensors for navigation, dead reckoning integrates sensor measurements to compute robot/vehicle trajectory. The sensor measurements often include readings from accelerometers, gyroscopes, and/or wheel encoders~\cite{yi2007imu}.  
There are many applications using the dead reckoning approach such as autonomous underwater vehicles (AUVs)~\cite{paull2013auv} and pedestrian step measurement~\cite{kang2014smartpdr,constandache2010compacc}. To estimate the state of the robot/vehicle, filtering-based schemes such as unscented Kalman filter (UKF)~\cite{karras2010line} and particle filter (PF)~\cite{gustafsson2002particle,huang2010autonomous} are frequently employed. However, the nature of dead reckoning causes it to inevitably accumulate errors over time and lead to significant drift. To reduce the error drift, different methods have been proposed such as applying velocity constraint on wheeled robots~\cite{siciliano2016springer} and modeling the wheel slip for skid-steered mobile robots~\cite{yi2007imu}. These approaches have reduced error drift but cannot remove it completely. Error still accumulates over time and causes localization failure. To fix the issue, we will show that drift can be bounded to map accuracy level by using map matching if the filtering-based approach with graph matching are combined.

Our method is a map-based localization~\cite{Brubaker16selfloc,merriaux2015fast,ruchti2015localization,jiang2017geometric,jin2016robust}.  
According to~\cite{Thrun2005}, map representation can be classified into two categories: the location-based and the feature-based. The location-based maps are represented with specific locations of objects. For example, those existing geographic maps consisted of coordinate of locations such as OpenStreetMaps\texttrademark~(OSM)~\cite{OpenStreetMap} and Google Maps~\cite{GoogleMaps}. Geographic maps have been widely used to improve upon GPS measurements and there are common measures being used such as point-to-point, point-to-curve, curve-to-curve matching or advanced techniques\cite{quddus2015GPSmap}. The feature-based map is consisted with features of interest with its location. An example is ORB features~\cite{rublee2011orb} for visual simultaneous localization and mapping. In this work, we extract heading-length graph from geographic maps which converts a location-based map to a feature-based map to facilitate robust localization which also reduces graph size to speed up computation in the process. 

Closely-related works include~\cite{merriaux2015fast,Wahlstrom2016MapDR,yu2019hybrid}, which focus on map-aided localization using proprioceptive sensors for mobile robots. In \cite{Wahlstrom2016MapDR}, only vehicle speed and speed limit information from map are used as a minimal sensor setup. However, known initial position is required and the method achieves an accuracy of around 100 meters. In~\cite{merriaux2015fast},  the velocity from wheel encoder and steering angles are used for odometry and a particle filter based map matching scheme helps estimate vehicle positions. It does not consider velocity errors from the wheel encoder such as slippery or inflation levels. In~\cite{yu2019hybrid}, odometer and gyroscope readings are used for extended Kalman filter (EKF)-based dead reckoning and a map is used to correct errors when driving a long distance or turning at road intersections. The average positional error is 5.2 meters, but it again requires an initial position from GPS. It is worth noting that our localization solution does not require a known initial position. 

This paper is a significant improvement over our early work~\cite{cheng2018plam} where only heading sequence is used and localization is only intermittent for turns. The new method enables continuous localization by considering wheel encoder inputs and is less limited by map degeneracy (e.g. rectilinear environments). Also, we bound error drift in location alignment and verification after graph matching.

\section{Problem Formulation}\label{sec:pro_def} 
\begin{figure*}[ht!]
	\includegraphics[width=1\linewidth, viewport=55 104 1715  623, clip=true]{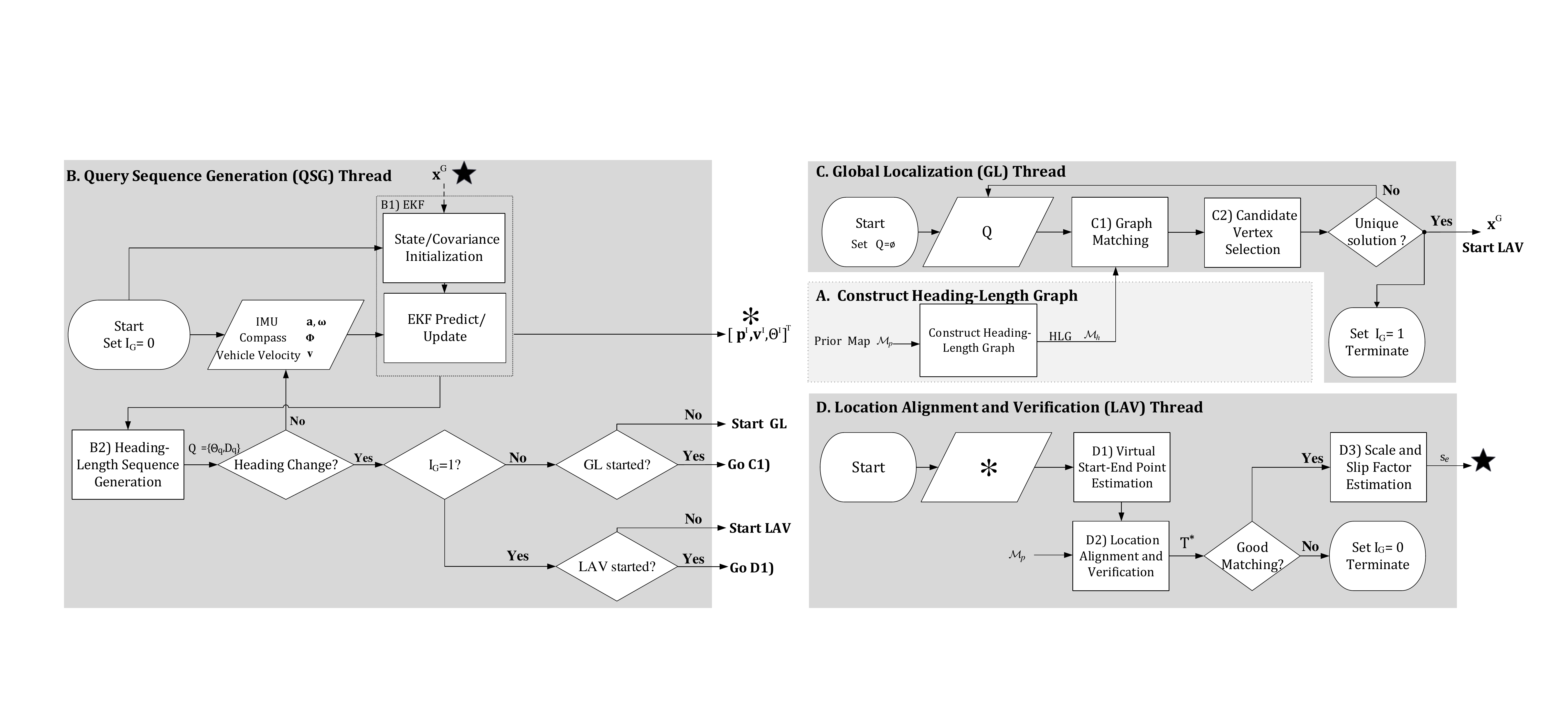}\label{fig:system-ET}
	\caption{System Diagram}
	\label{fig:SystemDiagram}
\end{figure*}

In our set up, a robot or a vehicle (We interchangeably use ``robot" and ``vehicle.'') is navigating in a poor weather conditions such as a severe thunderstorm or a whiteout snowstorm. No other exteroceptive sensors work properly. However, it is still necessary for the vehicle to find its location. 

The vehicle/robot is equipped with an IMU, a digital compass or a magnetometer, and an on-board diagnostics (OBD) scanner which provides velocity feedback while navigating in an area with a given prior road map, e.g. OpenStreetMaps (OSM)~\cite{OpenStreetMap}.  We have the following assumptions:
 
\begin{enumerate}
    \item[a.0] If needed, the vehicle is willing to change its course by making additional turns to assist our algorithm to find its location.
	\item[a.1]
	The robot is a nonholonomic system,. i.e. it only performs longitudinal motion without lateral or vertical motions. 
	\item[a.2]
	The IMU and the compass are co-located, pre-calibrated, and fixed at the vehicle geometric center. 
    \item[a.3]
     The IMU, compass, and velocity readings are synchronized and time-stamped. 
\end{enumerate}

As part of the input of the problem, a prior road map consisting of a set of roads with GPS waypoints is required. The typical distance between adjacent waypoints is around $20$m. Common notations are defined as follows,
\begin{itemize}
	\item $\mathcal{M}_p:=\{\mathbf{x}_{m}=[x_m,y_m]^\mathsf{T}\in\mathbb{R}^2 \vert m \in \mathscr{M}\}$ represents the prior road map which is a set of GPS positions where $\mathscr{M}$ is the position index set.  Note that these GPS positions are map points instead of live GPS inputs. We do NOT use GPS receiver in our algorithm design.
	\item $\mathbf{a}=\{\mathbf{a}_{j}\in\mathbb{R}^3|j = 0, 1,\cdots, N_j\}$ and $\omega=\{\mathbf{\omega}_{j}\in\mathbb{R}^3|j = 0, 1,\cdots, N_j\}$ 
	denote accelerometer readings and gyroscope angular velocities from the IMU, respectively. 
    \item $\phi=\{\phi_{j_{\phi}}\in \mathbb{R}|j_{\phi} = 0, \cdots, \lfloor\frac{N_j}{c_{\phi}}\rfloor\}$ denotes compass readings where $c_{\phi}\geq 1$ since a compass often has lower sampling frequency than that of the IMU.
	\item $\mathbf{v}=\{v_{j_{v}}\in \mathbb{R}|j_{v} = 0, \cdots, \lfloor\frac{N_j}{c_v}\rfloor\}$ denotes wheel speed readings from OBD where $c_{v}\geq 1$ because it has a lower sampling frequency than that of IMU. And $v_{j_{v}}$ is the speed at midpoint of car rear wheels. 
\end{itemize}
The GBPL problem is defined as follows.
\begin{Pro}
Given $\mathcal{M}_p$, 
$\mathbf{a}$, $\omega$, $\phi$ and $\mathbf{v}$, localize the robot after its heading changes. As its localized, report robot location continuously.
\end{Pro}

\section{GBPL Modeling and Design} \label{sec:plam_model}

Our system diagram is illustrated in Fig.~\ref{fig:SystemDiagram} which consists of four main building blocks: HLG construction, query sequence generation (QSG) thread, global localization (GL) thread, and location alignment and verification (LAV) thread. HLG construction is shaded in light gray which converts the prior geographic map into an HLG which runs only once in advance. For the rest shaded in dark gray, we refer to them as threads because they can be implemented as a parallel multi-threaded application. The QSG thread runs EKF constantly at the back end as the system receives sensory readings $\mathbf{a}$, $\omega$, $\phi$ and $\mathbf{v}$ and outputs the estimated trajectory. GL thread searches for the global location on a turn-by-turn basis. GL thread performs Bayesian graph matching between the query sequence extracted from the trajectory and the HLG. After the global location is obtained, GL terminates and LAV aligns the latest segment with the map and uses the result to rectify error drifting in the EKF in QSG. If no satisfying alignment is found, LAV terminates and the system restarts GL. In fact, GL thread and LAV thread work alternatively depending on whether the robot is localized or not. We begin with HLG construction.

\subsection{HLG Construction}\label{ssec:GraphConstruction}
We pre-process map $\mathcal{M}_p$ to construct an HLG to facilitate heading-length matching. There are three reasons for using HLG instead of matching on $\mathcal{M}_p$ directly. 
\begin{itemize}
\item First, the vehicle trajectory may not exactly match with $\mathcal{M}_p$. Since $\mathcal{M}_p$ and most maps do not have lane-level information, the discrepancy between the estimated trajectory and $\mathcal{M}_p$ is non-negligible which makes the direct trajectory-to-map matching unreliable. Fig.~\ref{fig:MapTrajDiff} shows an example. For the same route, the trajectories may be different due to driving on different lanes, driver habit, traffic, etc. \item Second, matching trajectory with $\mathcal{M}_p$ directly is computationally expensive because the searching space grows with the total number of GPS waypoint positions in $\mathcal{M}_p$. 
\item Third, the inevitably accumulated trajectory drift deteriorates the matching quality and makes the matching unreliable. 
\end{itemize}

\begin{figure}[th!]
 	\includegraphics[width=0.8\linewidth, viewport=19 261 984 764, clip=true]{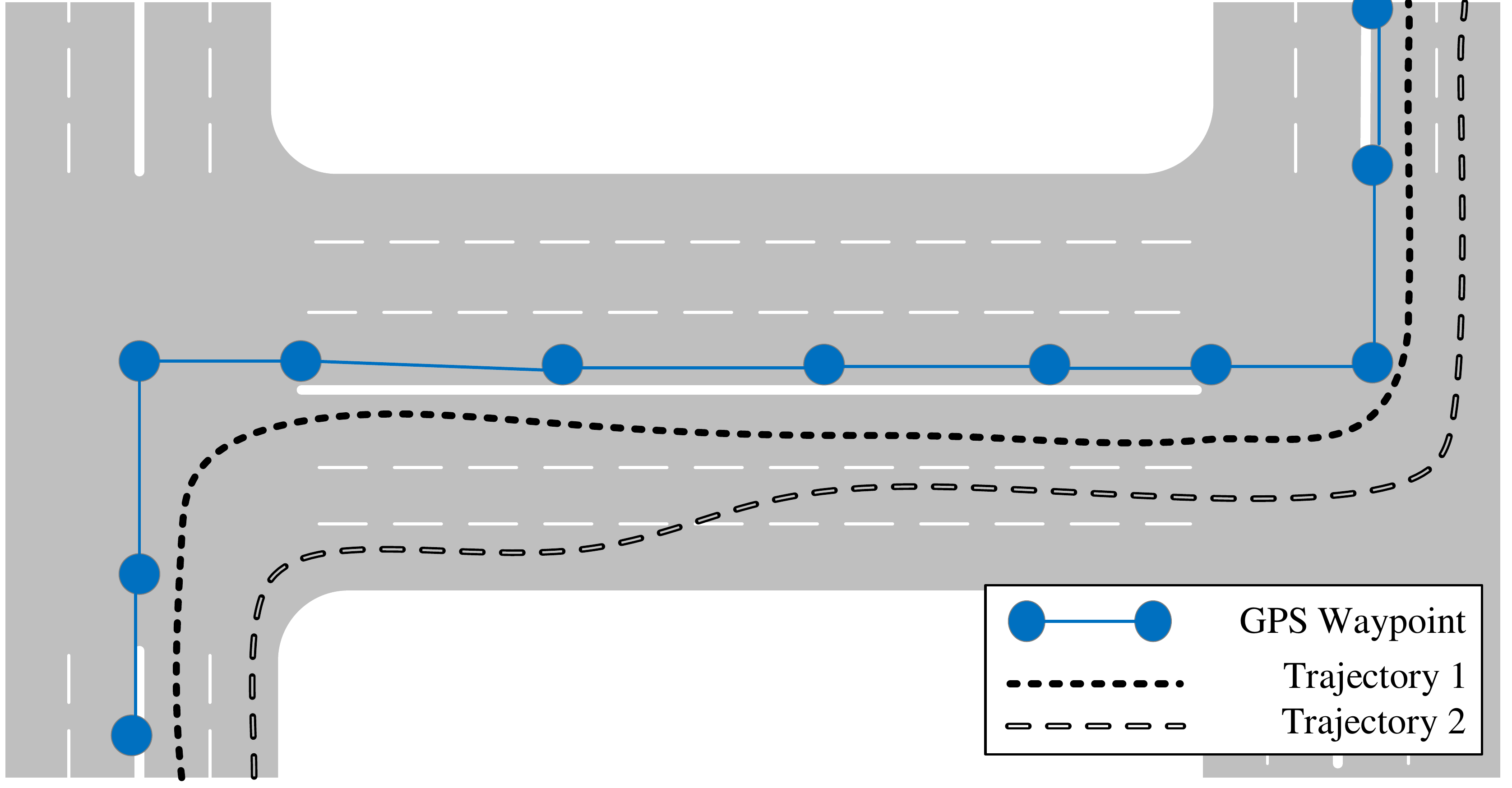}
 	\caption{Map and trajectory discrepancy illustration. Given the trajectory generated by proprioceptive sensors, directly matching trajectory with the map may not be desirable. For the same route, trajectories 1 and 2 appear quite differently. Neither of them matches blue waypoints in the map.}
 	\label{fig:MapTrajDiff}
\end{figure}

\begin{figure}[ht!]
 	\includegraphics[width=1\linewidth, viewport=142 840 752  1073, clip=true]{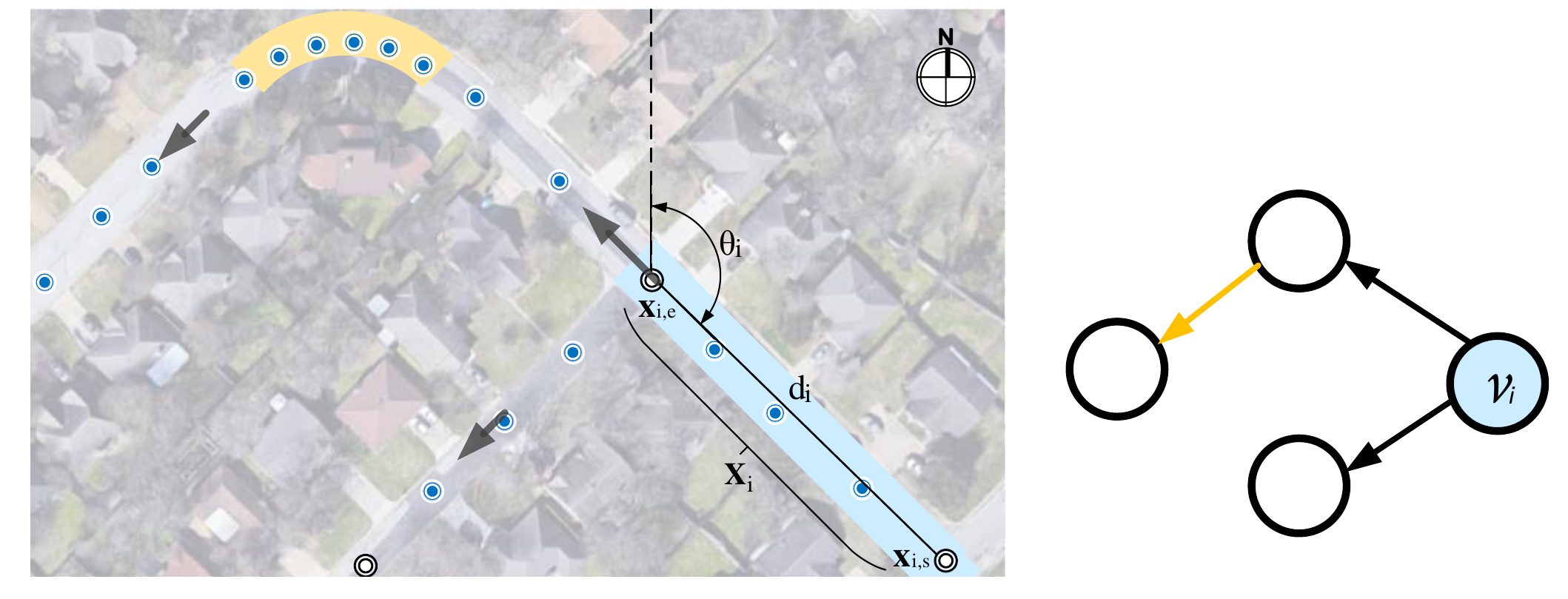}
 	\caption{HLG illustration in color. The left figure shows a satellite image with road map consisted of GPS waypoints (blue dots) overlaying on top of the image and intersections represented in small black circles. We estimate road curvature changes to capture heading change and construct HLG. As an example, we color a long and straight segment with light blue and a curve segment with light orange. The right figure shows the corresponding HLG, and we only employ long road segment vertices for localization. }
 	\label{fig:HLG}
\end{figure}
 
Therefore, we extract features from the map which are the long straight segments and represent them as the HLG. This leads to a graph matching approach that can mitigate the influence of the aforementioned three issue.  We start with HLG construction based on our prior work~\cite{cheng2018plam} where we have estimated road curvature changes to capture orientation change and construct a heading graph (HG). Build on~\cite{cheng2018plam}, we augment length information in HG to construct HLG for heading-length matching. Fig.~\ref{fig:HLG} illustrates an example. For completeness, we provide an overview here and more detail description of constructing the graph can be found in~\cite{cheng2018plam}. We denote the HLG by a directed graph $\mathcal{M}_{h}=\{\mathcal{V}_h,\mathcal{E}_h\}$ where $\mathcal{V}_h$ is the vertex set and $\mathcal{E}_h$ and is the edge set. A vertex $v_i\in\mathcal{V}_h$ represents a straight and continuous road segment with neither orientation changes nor intersections. An edge $e_{i,i'}\in \mathcal{E}_h$ captures the connectivity between nodes and characterizes the orientation change between the two connected vertices $v_i$ and $v_{i'}$. $\mathcal{M}_{h}$ has two types of edges: road intersections and curve segments; and two types of vertices: long straight segment vertices and short transitional segment vertices. The short transitional segment vertices are often formed between curve segments or curved roads entering intersections. 
 
 To build $\mathcal{M}_h$, we split each road at road intersections and further segment them into two types of segments to capture orientation changes: straight segments and curved segments~\cite{cheng2018plam}. With all roads segmented, we compute orientation and length for vertices corresponding to those long straight road segments. Each vertex contains the following information 
 \begin{equation}
 v_i=\{\mathbf{X}_i,\theta_i,d_i,b_i\},
 \end{equation} 
 where  $\mathbf{X}_i=[ \mathbf{x}_{i,s}^\mathsf{T},\cdots,\mathbf{x}_{i,e}^\mathsf{T}]^\mathsf{T}$ contained all 2D waypoint positions in GPS coordinates of the road segment with starting position $\mathbf{x}_{i,s}$ and ending position $\mathbf{x}_{i,e}$, orientation $\theta_i\in(-\pi,\pi]$ is the angle between the geographic north and the orientation of the road segment computed using $\mathbf{X}_i$ with a least squares estimation method adopted from~\cite{cheng2018plam}, $d_i$ is road segment length which is computed. by
 \begin{equation}
 d_i = ||\mathbf{x}_{i,s}-\mathbf{x}_{i,e}||,
 \label{eq:di}
 \end{equation}
 and $b_i$ is the binary variable indicate if the vertex is a long road segment. We only perform orientation estimation if $d_i > t_l$ where $t_l$ is the threshold for road segment length. That is,
 \begin{equation}
 b_i = 
 \begin{cases}
 1,~~d_i > t_l,\\
 0,~~\mbox{otherwise}.\\
 \end{cases}
 \end{equation}
 Only long road segments ($b_i=1$) will be used in localization which defines vertex subset $\mathcal{V}_{h,l} \subseteq \mathcal{V}_h$ corresponding to long straight segments. Note that $\theta_i$  depends on the robot traveling direction and hence $\mathcal{M}_{h}$ is a directed graph. 
 
 The errors of GPS waypoints in each entry of $X_i$ affect the accuracy of $\theta_i$ and $d_i$. To track map uncertainties caused by GPS errors, we derive the distribution of $\theta_i$ and $d_i$ using error variance propagation analysis~\cite{hartley2003multiple}. We model GPS errors by using Gaussian distribution and assuming GPS measurement noises to be independent and identically distributed. We denote the GPS measurement variance by $\sigma^2_g$. According to~\cite{Brubaker16selfloc}, typical consumer grade navigation systems offer positional accuracy around $\sigma_g=10$m. The distribution of $\theta_i$ that characterizes its uncertainty is 
 \begin{equation}
 \theta_i\sim\mathcal{N}(\mu_{\theta_i},\sigma^2_{\theta_i}),
 \label{eq:dist_thetai} 
\end{equation} 
 where $\sigma^2_{\theta_i}$ is derived in~\cite{cheng2018plam}. And the distribution of $d_i$ is  
 \begin{equation}
 d_i\sim\mathcal{N}(\mu_{d_i},\sigma^2_{d_i})=\mathcal{N}(\mu_{d_i},2\sigma^2_{g}).
 \label{eq:dist_di} 
 \end{equation}

 \subsection{Query Sequence Generation (QSG) Thread}\label{sec:EKFtrajory}
 To localize the vehicle on $\mathcal{M}_h$, we estimate the trajectory from sensory readings with an EKF-based approach. We then generate a discrete query consisting of a heading-length sequence extracted from the EKF trajectory results. It is worth noting that our method is not sensitive to the global drift of the EKF estimated trajectory because we only use short segmented trajectory to extract heading and length of its straight segments.

 \subsubsection{EKF-based Trajectory Estimation}\label{ssec:ETE}
 Note that readings from the IMU, the digital compass, and the vehicle velocity: $\mathbf{a}$, $\boldsymbol{\omega}$, $\boldsymbol{\phi}$, and $\mathbf{v}$, are the inputs to the EKF-based approach to estimate vehicle trajectory~\cite{bar2004estimation,Yi2009SkidEKF,cheng2017localization}. To start the EKF, we need a stabilized initial compass reading $\phi_0$ to determine the initial vehicle orientation which can be obtained by driving on a long and straight segment of road (Assumption a.0). We define two right-handed coordinate systems: IMU/compass device body frame $\{B\}$ (also overlapping with vehicle geometric center), the fixed inertial frame $\{I\}$ which shares its origin with $\{B\}$ at the initial pose. Frame $\{I\}$'s $X$-$Y$ plane is a horizontal plane parallel to the ground plane with $Y$ axis pointing to magnetic north direction and $Z$ axis is vertical and points upward. In the state representation, let state vector $\textbf{X}_{s,j}$ at time $j$ be:
 \begin{equation}\label{eq:EKFstate}
 \textbf{X}_{s,j}:=[\mathbf{p}^{I}_j,\mathbf{v}^{I}_j,\mathbf{\boldsymbol{\Theta}}^{I}_j, s_j]^{\mathsf{T}},
 \end{equation}
 which includes position $\mathbf{p}^{I} =[x, y, z]^\mathsf{T}\in \mathbb{R}^3$, velocity $\mathbf{v}^{I} =[\dot{x},\dot{y},\dot{z}]^\mathsf{T}\in\mathbb{R}^3$, and the Euler angles $\boldsymbol{\Theta}^{I}:=[\alpha,\beta ,\gamma]^{\mathsf{T}}$ in $\{I\}$ in $X\mbox{-}Y\mbox{-}Z$ order, and scale/slip factor (SSF) $s$. We define $s$ here to address vehicle velocity error which can be caused by tire radius error such as inflation level, road slippery, etc. The superscripts indicate in which frame the vector is defined. The transformation from $\{I\}$ to $\{B\}$ is the $Z\mbox{-}Y\mbox{-}X$ ordered Euler angle rotation. 
 The state transition equations are described as follows:
 \begin{equation}\label{eq:kine_eq}
 \begin{aligned}
 \mathbf{p}^{I}_j &= \mathbf{p}^{I}_{j-1} +\tau_{\omega}\mathbf{v}^{I}\\
 \mathbf{v}^{I}_j &= \mathbf{v}^{I}_{j-1} +\tau_{\omega}(\prescript{I}{B}{\mathbf{R}}(\mathbf{a}) - \textbf{G}) \\
 \Theta_{j} &= \Theta_{j-1}+\tau_{\omega}\prescript{I}{B}{\mathbf{E}}(\boldsymbol{\omega})+\mathbf{c}_{\gamma} \\
 s_{j} & = s_{j-1},
 \end{aligned}
 \end{equation}
where $\tau_{\omega}$ is the IMU sampling interval, $\textbf{G}=[0 \ 0 \ -9.8]^\mathsf{T}$ is the gravitational vector, $\mathbf{c}_{\gamma}=[0 \ 0 \ \phi_0]^\mathsf{T}$ is the initial orientation determined by $\phi_0$, $\prescript{I}{B}{\mathbf{R}}$ is the rotation matrix from $\{B\}$ to $\{I\}$, and $\prescript{I}{B}{\mathbf{E}}$ is the rotation rate matrix from $\{B\}$ to $\{I\}$.

For EKF observation models, we use velocity constraint from vehicle movement, sensory readings $\boldsymbol{\phi}$ and $\mathbf{v}$, and estimated scale by matching trajectory with map which will be discussed in Section~\ref{ssec:ScaleAdjustment}. First, according to Assumptions a.1 there is no lateral or vertical movements in $\{B\}$, the velocities along $Y$ axis and $Z$ axis in $\{B\}$ are set to be zeros. The velocity constraint is written as:
\begin{equation} \label{eq:obs_velconstraint}
(\prescript{B}{I}{\mathbf{R}})_{2:3} \mathbf{v}^{I}_j =
\begin{bmatrix}
0 &0
\end{bmatrix}
^\mathsf{T},
\end{equation}
where $\prescript{B}{I}{\mathbf{R}}_{2:3}$ is the second and third rows of $\prescript{B}{I}{\mathbf{R}}$. 

From the coordinate definition, the heading direction is $\gamma$ defined in $\{I\}$ (last component of $\boldsymbol{\Theta}^{I}$), we take compass reading $\boldsymbol{\phi}$ as its observation. In our physical system, compass readings have a lower sampling frequency than that of the IMU readings, we use the latest available reading. Also, compass readings may be polluted by other magnetic fields, we can recognize faulty readings by cross-validating compass readings with IMU readings. We discard the faulty compass readings if the difference between the estimated heading state and the compass reading exceeds an threshold. With the cross-validated compass reading, we update heading direction $\gamma$ by 
\begin{equation} \label{eq:obs_phi}
\gamma_j = 
\begin{cases}
\phi_{j_\phi},  &\mbox{~if~$j=c_\phi j_{\phi}$}\\
\phi_{j_\phi-1}, &\mbox{~otherwise.}
\end{cases}
\end{equation}
	
We compensate SSF $s_j$ by estimating its value from aligned map data after taking a turn. We will detail 
how to compute $s_{ssf}$ and its variance in Section~\ref{ssec:ScaleAdjustment}. For $s$, we have 
\begin{equation} \label{eq:obs_scale}
s_j = s_{ssf},
\end{equation}
where $s_{ssf}$ is  the  ratio of the trajectory length from the map versus that from the query. Lastly, we take wheel velocity $\mathbf{v}$ as observations. Similar to $\phi$ that the sampling frequency is lower than IMU readings, we have 
 \begin{equation} \label{eq:obs_obdvel}
 ||\mathbf{v}^{I}_j||=
 \begin{cases}
 s_jv_{j_v}, &\mbox{~if~$j=c_v j_v$} \\
s_jv_{j_v-1}, &\mbox{~otherwise.}
 \end{cases}
\end{equation}
 Combining \eqref{eq:obs_phi}, \eqref{eq:obs_velconstraint}, \eqref{eq:obs_obdvel}, and \eqref{eq:obs_scale}, we complete the observation model functions. The rest is to follow the standard EKF setup. Fig.~\ref{fig:EKF-gps-traj} shows the estimated EKF trajectory  compared with the corresponding GPS ground truth trajectory. Note that the vehicle takes some additional turns to assist localization (Assumption a.0) and the trajectory is not the shortest. 
 
 
 \begin{figure}[ht!]
 	\subfigure[]{\includegraphics[width=0.285\linewidth, viewport=161 243 432 540, clip=true]{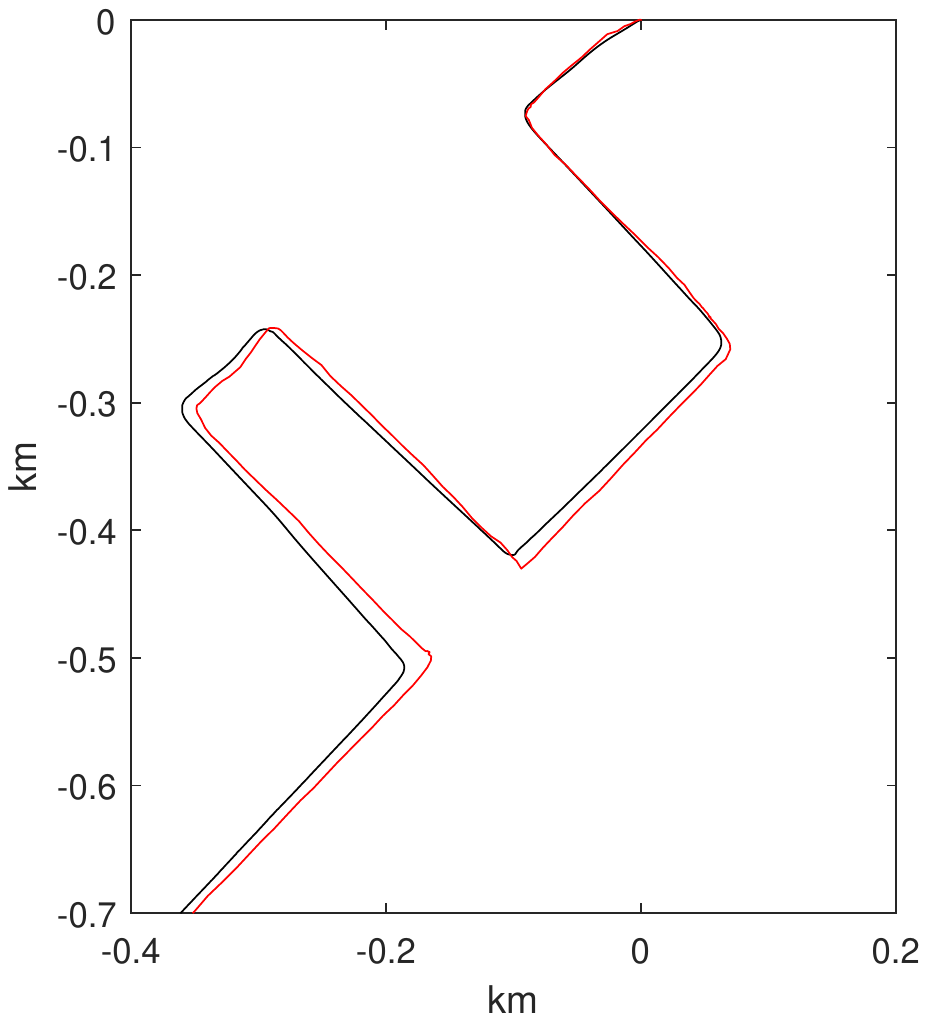}\label{fig:EKF-gps-traj}}
 	\subfigure[]{\includegraphics[width=0.40\linewidth, viewport=110 243 485 540, clip=true]{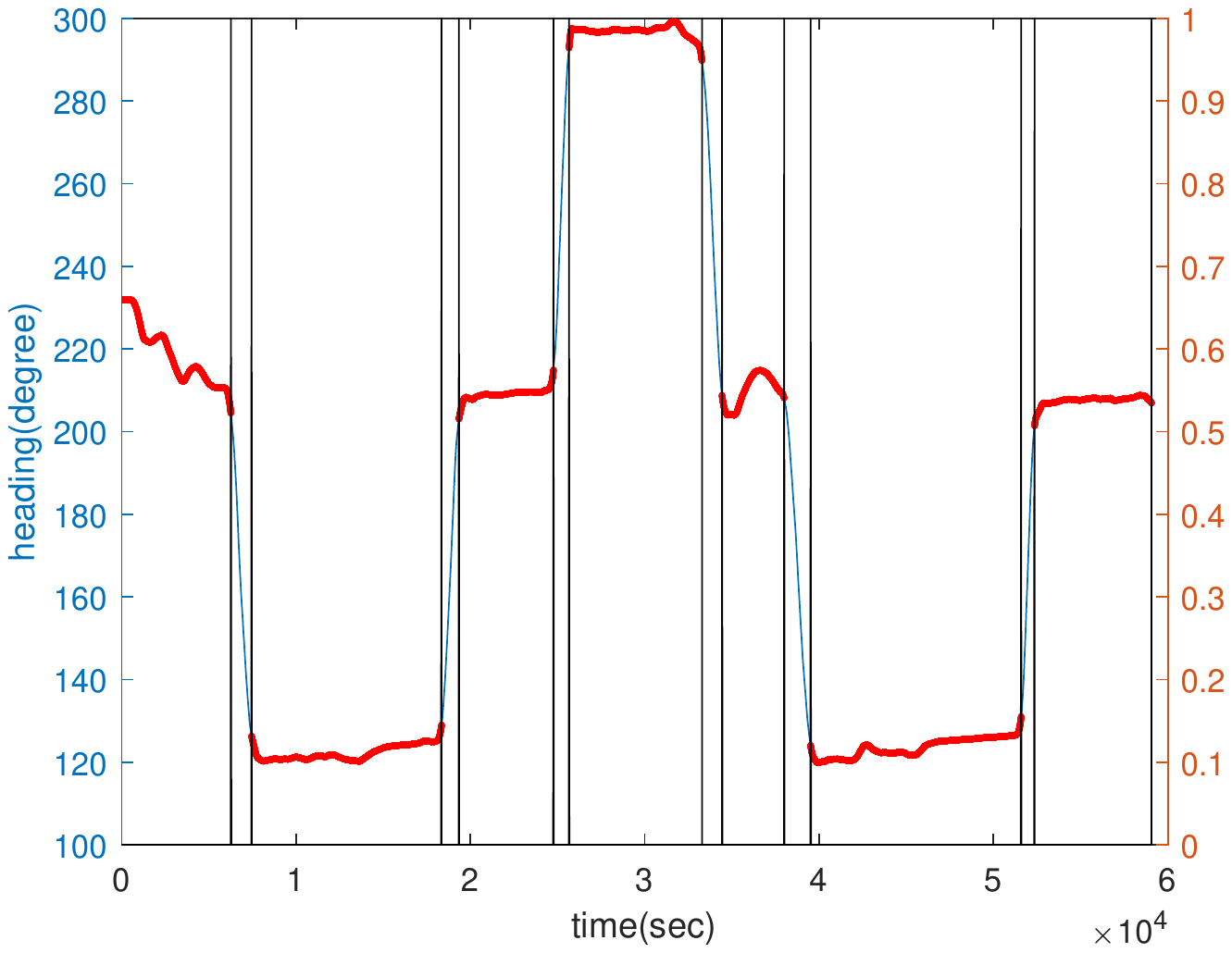}\label{fig:EKF-angle}}
 	\subfigure[]{\includegraphics[width=0.285\linewidth, viewport=161 243 432 540, clip=true]{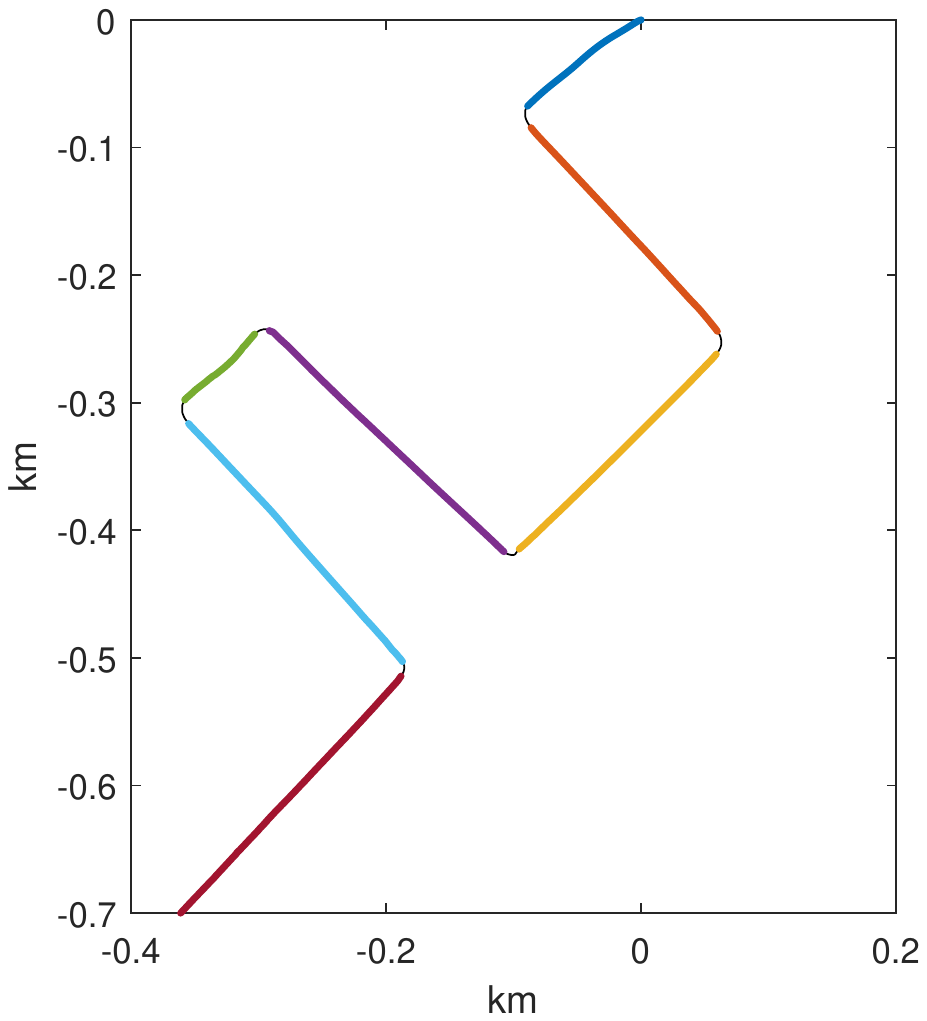}\label{fig:EKF-markedtraj}}
 	\caption{(a) Trajectory estimation result: the red line is the GPS ground truth, and black line illustrates the EKF estimated trajectory. (b) Query heading representations. Blue line is estimated heading, black vertical lines are indices where data segmented, red lines mark out stable heading segments and unmarked segments are detected turns. (c) Corresponding travel heading and length segment representations. Different segments are marked in different colors.}
 	\label{fig:EKF}
 \end{figure}

\subsubsection{Heading-Length Sequence Generation}\label{ssec:TurnDetection_QuerySeq}
With the estimated trajectory, we generate query heading-length sequence by capturing vehicle heading changes. We adopt the method for heading sequence generation from~\cite{cheng2018plam}
and augment corresponding length sequence in this work. To improve the robustness, we only keep headings when the vehicle is traveling on long and straight road segments. This means the headings should be stable and constant in a long stretch of travel time and corresponding travel distance is long. From the coordinate definition, the headings is $\gamma$ in $\{I\}$ and is denoted by $\gamma_{0:j}$. To obtain the query sequence, we segment $\gamma_{0:j}$ to get stable headings and remove false positive headings that do not correspond to long and straight road segments. 
In Fig.~\ref{fig:EKF-angle}, red horizontal segments are detected stable headings. Hence we obtain the set of query 
query heading sequence which is denoted by $\Theta_q = \{\Theta_{q,k}\vert k = 1,\cdots,n \}$ where $k$ is query data index, $n$ is the number of straight segments. Each subset $\Theta_{q,k}$ corresponding to continuous observations from EKF represents a straight segment. At the same time, we generate the corresponding travel length sequence which is denoted by $D_q=\{d_{q,k}\vert k = 1,\cdots,n\}$ where $d_{q,k}$ is the travel length of the segmented route (e.g. colored segments in Fig.~\ref{fig:EKF-markedtraj}). 

We denote the query heading-length sequence by $Q:=\{\Theta_{q},D_q\}$ which consists of the segmented heading-length sequence. The uncertainty of query sequence $Q$ is obtained from EKF variance estimation. For $\Theta_q$, we define $\overline{\theta_{q,k}}$ as the sample mean orientation of segment $\Theta_{q,k}$ which contains $n_{\theta_{q,k}}$ observations of random variable  $\theta_{q,k}$.  $\theta_{q,k}$ has it covariance matrix obtained from EKF. For $D_{q}$, the variance of can also be derived from EKF and we denote variance of $d_{q,k}$ by $\sigma^2_{d_{q,k}}$. Those variables will be used later in the analysis part.

It is worth noting that each entry of the sequence is not sensitive to the overall trajectory drift due to the localized computation. The resulting sequence also can be understood as local features for the trajectory. Also, reducing the query to the discrete feature sequence helps in reducing computation complexity.

\subsection{Global Localization Thread}\label{ssec:GlobalLoc}

\subsubsection{GL Overview}

With the query sequence obtained from on-board sensors, we are ready to match it with sequences on the HLG to search for the actual location. This is a graph matching problem. In the GL thread, we localize the robot when the robot changes its heading which is the moment the query sequence grows its length. It is worth noting that GL is an intermittent localization. The continuous localization will be address later in the paper.

Given the query heading-length sequence, we search for the best match of heading-length sequence in the HLG $\mathcal{M}_h$. For any long straight candidate vertex in $\mathcal{V}_{h,l}$, we match the query heading-length sequence with sequences of the vertices starting at the candidate vertex. We discard candidate vertices with poor matching. 
In each candidate sequence to query sequence matching, We model sensory and map uncertainties and formulate the matching process as a sequential hypothesis test problem. The result of GL depends on if a satisfying matching sequence can be found. 



\subsubsection{Graph Matching}\label{ssec:GraphMatching}
The center part of GL is the matching of query sequence and candidate sequence on the graph. To achieve this, we expand the heading sequence matching in \cite{cheng2018plam} to find the best heading-length matching in $\mathcal{M}_h$. Given query sequence $Q=\{\Theta_{q},D_{q}\}=\{(\theta_{q,k},d_{q,k})|k=1,\cdots,n\}$, let us denote a candidate heading-length vertex sequence in $\mathcal{M}_{h}$ by $M:=\{\Theta,D\}=\{(\theta_{k},d_{k})\vert  k = 1,\cdots,n\}$ correspondingly. 
As a convention in this paper, for random vector $\star$, $\mu_{\star}$ represents its mean vector. Following the convention, mean matrix of $Q$ is defined as $\mu_Q=[{\mu}_{\Theta_{q}}^\mathsf{T},{\mu}_{D_{q}}^\mathsf{T}]^\mathsf{T}$ where ${\mu}_{\Theta_{q}}=[\mu_{\theta_{q,1}},\cdots,\mu_{\theta_{q,n}}]^\mathsf{T}$ and 
${\mu}_{D_{q}}=[\mu_{d_{q,1}},\cdots,\mu_{d_{q,n}}]^\mathsf{T}$. The mean matrix of $M$ is denoted by $\mu_M=[{\mu}_{\Theta}^\mathsf{T},{\mu}_{D}^\mathsf{T}]^\mathsf{T}$ where $\mu_{\Theta}=[\mu_{\theta_{1}},\cdots,\mu_{\theta_{n}}]^\mathsf{T}$ and $\mu_{D}=[\mu_{d_{1}},\cdots,\mu_{d_{n}}]^\mathsf{T}$.

\begin{figure}[b!]
	\subfigure[]{\includegraphics[width=0.49\linewidth, viewport=110 243 485 540, clip=true]{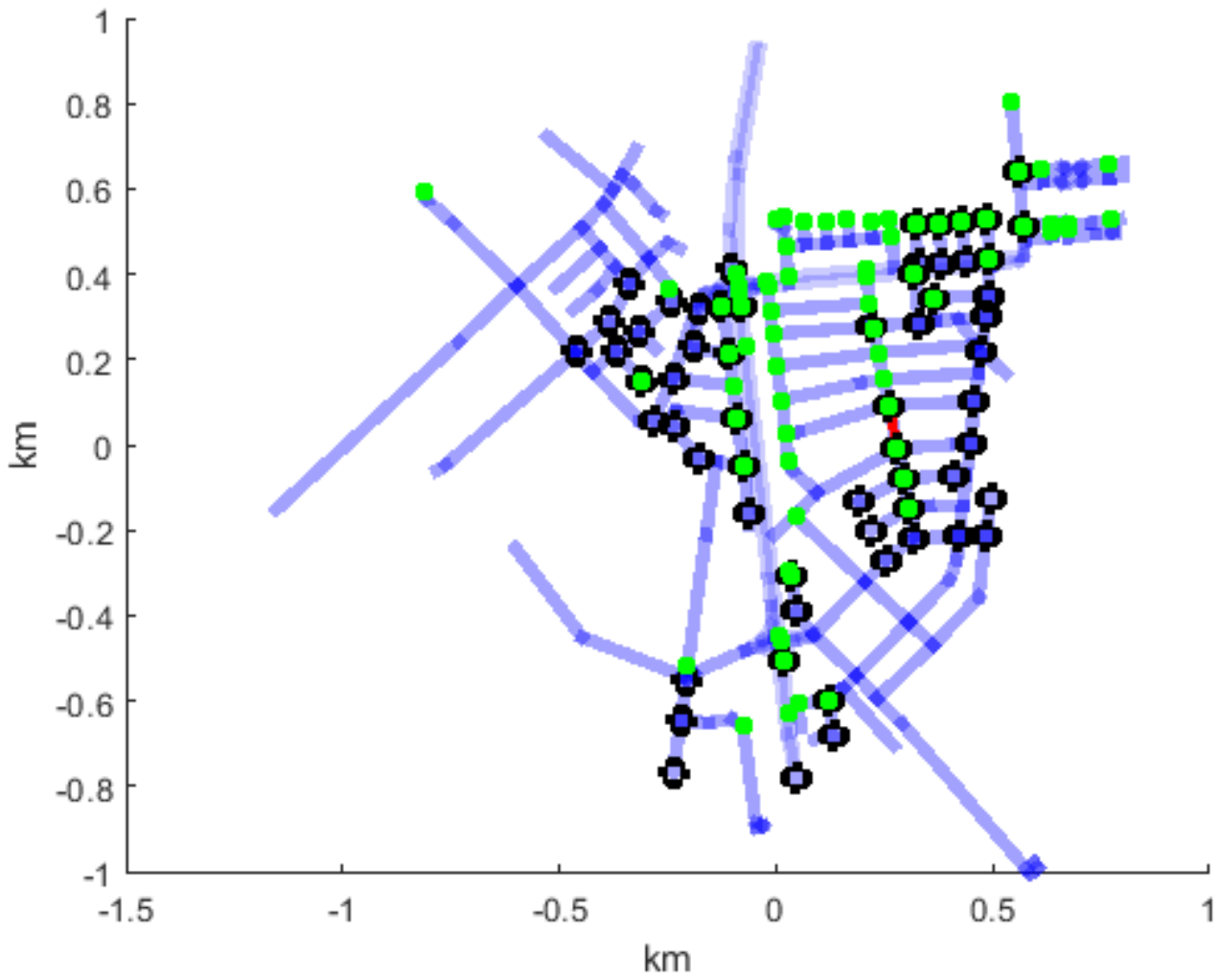}\label{fig:gloc_c1}}
	\subfigure[]{\includegraphics[width=0.49\linewidth, viewport=110 243 485 540, clip=true]{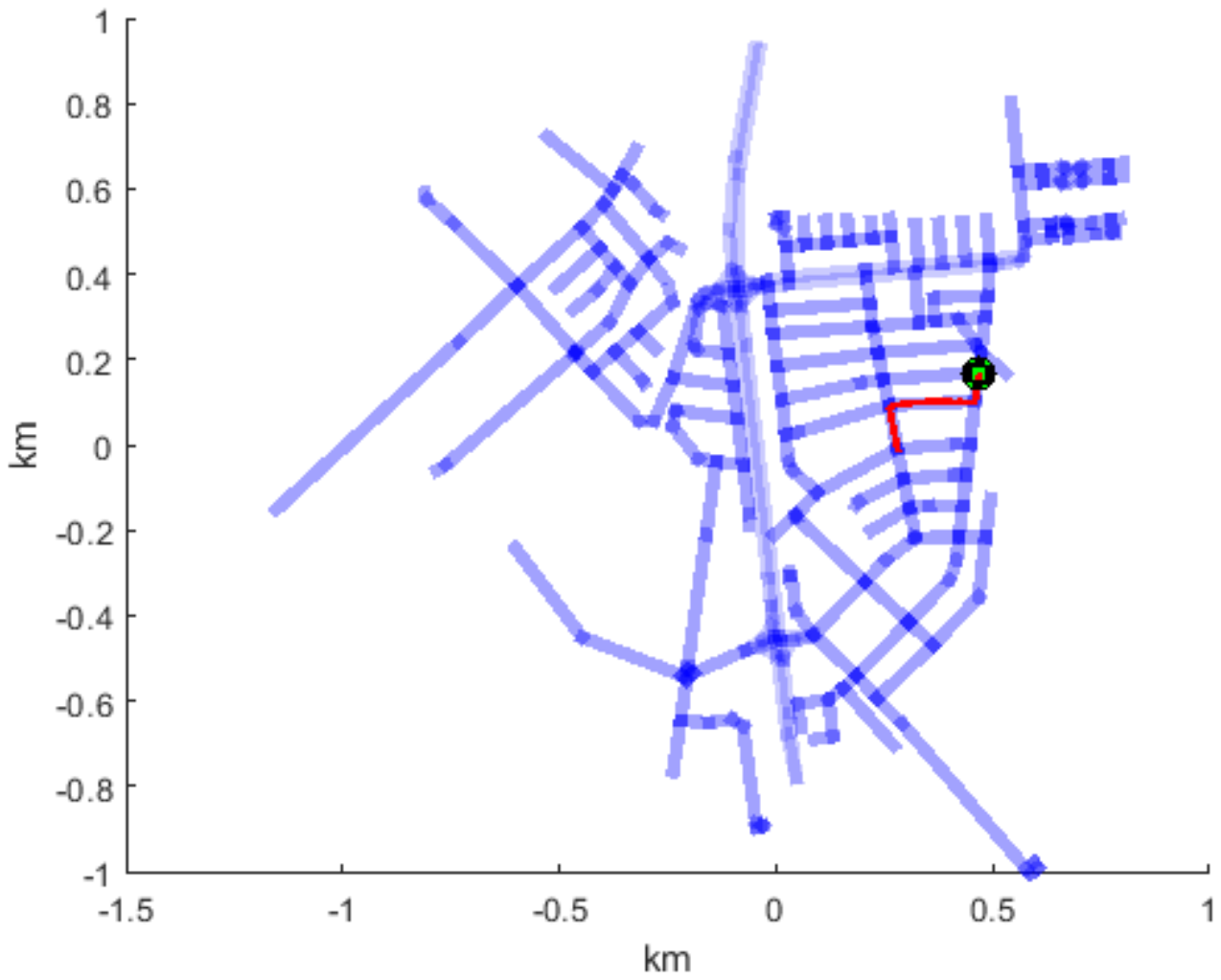}\label{fig:gloc_c3}}
	\caption{ An example of global localization. (a) The candidate locations using heading matching (green dots), length matching (black circle). We show that performing heading-length matching (locations with green dot and black circle) helps reducing candidates. (b) The candidate localization is reduced to the single solution if the joint distribution between heading and length is used.}
	\label{fig:GlobalLocalization}
\end{figure}

Due to independent measurement noises, the conditional matching probability between query sequence $Q:=\{\Theta_{q},D_{q}\}$ and a candidate sequence $M:=\{\Theta,D\}$ on HLG $\mathcal{M}_{h}$ is 
\begin{align} \label{eq:match_prob}
\notag&P(\mu_Q =\mu_M \vert Q,M) \\
&= P(\mu_{\Theta_q} =\mu_{\Theta}\vert \Theta_q, \Theta)P(\mu_{D_q} =\mu_{D}\vert  D_q, D).
\end{align}
From~\cite{cheng2018plam}, the conditional heading matching probability between $\Theta_q$ and $\Theta_h$ is
\begin{align} 
\label{eq:seq_heading_prob} P({\mu}_{\Theta_{q}} ={\mu}_{\Theta}\vert \Theta_{q},\Theta) \propto  \prod_{k=1}^{n}f_{T}(t(\theta_{q,k},\theta_k)),
\end{align}
due to independent sensor noises and $f_T(t(\theta_{q,k},\theta_k))$ is the probability density function (PDF) of Student's t-distribution. 
For length matching, the conditional matching probability between $D_q$ and $D$ is 
\begin{equation}\label{eq:normal_dist}
P(\mu_{D_q}=\mu_{D}\vert D_q,D)\propto \prod_{k=1}^{n} f(z(d_{q,k}, d_k)), 
\end{equation}
where $f(\cdot)$ is the PDF of standard normal distribution, and 
$z(d_{q,k},d_k)=\frac{d_{q,k}-d_k}{\sqrt{\sigma^2_{d_{k}}+\sigma^2_{d_{q,k}}}}.$
Combining~\eqref{eq:seq_heading_prob} and \eqref{eq:normal_dist} and recalling that $n$ is the number of straight segments in the query sequence, we rewrite \eqref{eq:match_prob} as follows,
\begin{align}\label{eq:seqm_prob}
P(\mu_Q =\mu_M \vert Q,M)\propto \prod_{k=1}^{n}f_{T}(t(\theta_{q,k},\theta_k))f(z(d_{q,k}-d_k)).
\end{align}

\subsubsection{Candidate Vertex Selection}\label{ssec:VertexThreshold}
To select on candidate vertices during matching, we perform statistical hypothesis testing to remove unlikely matchings. According to~\eqref{eq:match_prob}, sequence matching is considered as multiple pair matching. For each pair $(\{\theta_k,d_k\},\{\theta_{q,k},d_{q,k}\})$, it is a hypothesis testing 
\begin{eqnarray} 
&\mathbf{H_{0}}:& [\mu_{\theta_{q,k}}, \mu_{d_{q,k}}]^\mathsf{T} =[\mu_{\theta_{k}},\mu_{d_{k}}]^\mathsf{T}\nonumber\\
&\mathbf{H_{1}}:& \mbox{~otherwise.}  \label{eq:hypo_vertexmatch}
\end{eqnarray}
Hypothesis $H_0$ can be seen as two null hypotheses: $H_{0,\theta}:\mu_{\theta_{q,k}}=\mu_{\theta_{k}}$ and $H_{0,d}:\mu_{d_{q,k}}=\mu_{d_{k}}$. We perform two individual tests separately with significance level $1-\alpha$ where $\alpha$ is a small probability. Both $H_{0,\theta}$ and $H_{0,d}$ are two-tailed distributions. We choose $t_{\alpha/2,\nu}$ as the t-statistic with a cumulative probability of $(1-\frac{\alpha}{2})$ where $\nu$ is the degrees of freedom (DoF) and $z_{\alpha/2}$ as the z-statistic with a cumulative probability of $(1-\frac{\alpha}{2})$. We reject $H_0$ if 
\begin{equation}\label{eq:t_select}
(|t(\theta_k,\theta_{q,k})|>t_{\alpha/2,\nu}) \vee (|z(d_k,d_{q,k})|>z_{\alpha/2}).
\end{equation} 
By sequentially applying the hypothesis testing on each corresponding pair $(\{\theta_k,d_k\},\{\theta_{q,k},d_{q,k}\})$ from query sequence $Q$ and candidate sequence $M$ on HLG $\mathcal{M}_{h}$, we determine whether $M$ represents the actual trajectory.  Fig.~\ref{fig:GlobalLocalization} has shown that using the joint distribution of heading and length significantly reduce the number of solutions in the matching process. 

In the matching process, we might get many candidate solutions because the hypothesis test is conservative in rejection. To address the problem and check if we converge to a unique solution, we classify the computed probabilities of (\ref{eq:match_prob}) into two groups using the Ostu method\cite{otsu1979threshold}. The number of solutions is the group size. If the group with higher probability has only one candidate then the vehicle is localized. Otherwise, it means that the group with higher probability contains several trajectories with higher probabilities. It indicates that more observations are needed to localize the vehicle.

\subsubsection{GL Algorithm}

We summarize the heading-length matching method in Algorithm~\ref{alg:algo1}. In a nutshell, as we sequentially match the vertex down the query sequence, we compare it with the out-neighbor of remaining vertices on the graph using breadth-first search. 

Note that vertex $v_i$ may have adjacent vertices with same orientation. For example, consider the vehicle reaches a long straight road (with road intersections).
This long straight road corresponds a set of vertices with same orientation. We denote the set of straight path start from $v_i$ by $\mathcal{V}_s$. 

To reuse the computed information as the query sequence grows, we define the candidate vertex information set $\mathcal{C}_k$ where $k = 1,~\cdots, n$ is the length of the query sequence. The candidate vertex set is denoted by
$\mathcal{C}_k=\{\{v_i,\mathcal{V}_{M,i},p_i\}|i= 1,\cdots, n_{\mathcal{C}_k}\},$ where each element in $\mathcal{C}_k$ record the candidate vertex $v_{\mathtt{i}}$ (the starting vertex of the trajectory/path), $\mathcal{V}_{M,i}$ is the set of vertex path, and the matching probability $p_i$ in \eqref{eq:match_prob} and $n_{\mathcal{C}_k}$ is the cardinality of $\mathcal{C}_k$.
To initialize, we set $\mathcal{C}_0:=\{\{v_i,\emptyset,\frac{1}{|{V}_{h,l}|}\}|i=1,\cdots ,|\mathcal{V}_{h,l}|\}$ because each vertex in $\mathcal{V}_{h,l}$ is equally likely to be the path starting vertex.
The computational complexity of calculating each term in \eqref{eq:match_prob} is $O(1)$ using the alias sampling method~\cite{kronmal1979alias}. The upper bound of candidate vertex cardinality is $\vert\mathcal{V}_{h,l}\vert$ and thus it takes $O(|\mathcal{V}_{h,l}|)$ to compute probability of all candidate vertices. The size of straight path set takes $O(|\mathcal{V}_{s}|)$ which is related to variation of map road headings in Sec.~\ref{ssec:mapentropy}. With little variation in headings (e.g. Manhattan streets), $|\mathcal{V}_{s}|$ is larger. On the contrary, $|\mathcal{V}_{s}|$ is small compared to $|\mathcal{V}_{h,l}|$ with large variation in road headings. In this case, $O(|\mathcal{V}_{s}| = O(1)$. The classification of probabilities into two groups is $O(|\mathcal{V}_{h,l}|)$ using Hoare's selection algorithm. 

We summarize the computational complexity of Algorithm~\ref{alg:algo1} in Lemma~\ref{lem:complexity}. 

\begin{Lem}
	The computation complexity of the heading-length matching is $O(n|\mathcal{V}_{s}||\mathcal{V}_{h,l}|)$.
	\label{lem:complexity}	
\end{Lem}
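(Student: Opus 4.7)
The plan is to decompose the total work done by Algorithm~\ref{alg:algo1} into a per-iteration cost over the $n$ query-sequence steps, and then sum. I would first fix a single outer iteration indexed by $k \in \{1,\ldots,n\}$, which processes the $k$-th query pair $(\theta_{q,k}, d_{q,k})$ by extending each surviving entry of $\mathcal{C}_{k-1}$. The key invariant to state explicitly is that $|\mathcal{C}_{k-1}| \leq |\mathcal{V}_{h,l}|$: this holds because each candidate record is indexed by its starting vertex in $\mathcal{V}_{h,l}$, and the matching procedure only prunes (via the hypothesis test of Section~\ref{ssec:VertexThreshold}) rather than creates new starting vertices. This invariant lets us treat $|\mathcal{V}_{h,l}|$ as a hard ceiling on the number of probability updates performed at step $k$.

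Next I would account for the neighborhood expansion done per candidate. Because adjacent vertices on $\mathcal{M}_h$ may share the same orientation, the BFS extension from a candidate vertex $v_i$ has to walk through its same-heading straight path set $\mathcal{V}_s$, incurring $O(|\mathcal{V}_s|)$ work per candidate. For each extended vertex, computing the matching probability requires evaluating one factor of \eqref{eq:seqm_prob}, namely the product $f_T(t(\theta_{q,k},\theta_k))f(z(d_{q,k},d_k))$; by the alias-sampling argument cited from~\cite{kronmal1979alias}, each such evaluation is $O(1)$. Multiplying through, the extension cost at step $k$ is $O(|\mathcal{V}_{h,l}|\cdot|\mathcal{V}_s|)$.

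Then I would add the cost of the Otsu-based two-group classification used to decide convergence. As noted in the paragraph preceding the lemma, this reduces to a selection problem solvable in $O(|\mathcal{V}_{h,l}|)$ via Hoare's algorithm, which is dominated by the preceding extension term since $|\mathcal{V}_s|\geq 1$. So the per-iteration cost is $O(|\mathcal{V}_{h,l}|\cdot|\mathcal{V}_s|)$. Summing over $k=1,\ldots,n$ yields the claimed $O(n|\mathcal{V}_s||\mathcal{V}_{h,l}|)$ bound.

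The main obstacle I anticipate is justifying the candidate-cardinality invariant cleanly, since a naive reading of the breadth-first expansion might suggest that each candidate spawns $|\mathcal{V}_s|$ children and the set blows up geometrically in $n$. I would resolve this by emphasizing that $\mathcal{C}_k$ is keyed by the starting vertex $v_i \in \mathcal{V}_{h,l}$ (not by the full path $\mathcal{V}_{M,i}$), so the extension work is charged to traversal through $\mathcal{V}_s$ at step $k$ only, and the surviving candidates at step $k+1$ are again indexed by a subset of $\mathcal{V}_{h,l}$. With this bookkeeping, the per-step cost is $O(|\mathcal{V}_{h,l}|\cdot|\mathcal{V}_s|)$ rather than something exponential, and the lemma follows by a direct summation.
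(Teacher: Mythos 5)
Your proposal is correct and follows essentially the same accounting as the paper: $n$ outer iterations, at most $|\mathcal{V}_{h,l}|$ surviving candidates per iteration, an $O(|\mathcal{V}_{s}|)$ straight-path expansion per candidate with $O(1)$ probability evaluations via alias sampling, and a dominated Otsu/selection step. Your explicit justification of the candidate-cardinality ceiling $|\mathcal{C}_{k-1}|\leq|\mathcal{V}_{h,l}|$ is in fact slightly more careful than the paper, which simply asserts that bound.
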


\begin{algorithm} [t!]
	\caption{Heading-length Graph Matching} \label{alg:algo1}
	\KwIn{ $\mathcal{M}_{h}=\{\mathcal{V}_h,\mathcal{E}_h\}$,~$\mathcal{C}_{k-1}$ and  $\{\theta_{q,k},d_{q,k}\}$ }
	\KwOut{$\mathcal{C}_{k}$ or vehicle location}
	\algsetup{linenosize=\small}
	\scriptsize
	\SetKwFunction{FMain}{HLM}
	\DontPrintSemicolon    
	$\mathcal{C}_0:=\{\{v_i,\emptyset,\frac{1}{|{V}_{h,l}|}\}|i=1,\cdots ,|\mathcal{V}_{h,l}|\}$                                            \Comment*[r]{$O(1)$} 
	\For (\Comment*[f]{$O(n)$}){$k=1,\cdots,n$}{
		$\mathcal{C}_k \leftarrow\emptyset$;   \Comment*[r]{$O(1)$}
		\For (\Comment*[f]{$O(|\mathcal{V}_{h,l}|)$}){$i = 1,\cdots,n_{C_{k-1}}$}{
			\uIf{$k==1$}{
				Access straight path set $\mathcal{V}_{s}$ start from $v_i$;                        \Comment*[r]{$O(1)$}
			}\Else{
				$v_{i'}\leftarrow$ last vertex in path $\mathcal{V}_{M,i}$ \Comment*[r]{$O(1)$}   
				$V_{i'}\leftarrow $ adjacent  verteices of $v_{i'}$ ( with different angles); \Comment*[r]{$O(1)$}  
				Access straight path set $\mathcal{V}_{s}$ start from each vertex in $V_{i'}$;  \Comment*[r]{$O(1)$}
	    	}
			\For(\Comment*[f]{$O(|\mathcal{V}_{s}|)$}){ $V_{s} \in \mathcal{V}_{s}$}{ 
				Access $\theta_{s}$ and $d_{s}$ of $V_{s}$;   \Comment*[r]{$O(1)$} 
				compute $p \leftarrow f_{T}(t(\theta_{s}, \theta_{q,k}))f(z(d_{s}, d_{q,k}))$                              \Comment*[r]{$O(1)$} 
				\uIf{Pass hypothesis testing in \eqref{eq:hypo_vertexmatch}}{
					Update matching probability $p_{i'}\leftarrow p_i\cdot p$                                   \Comment*[r]{$O(1)$}  
					$V_{M,i'} \leftarrow$ Append $\mathcal{V}_{s}$ to $\mathcal{V}_{M,i}$           \Comment*[r]{$O(1)$} 
					$\mathcal{C}_k\leftarrow \mathcal{C}_k \cup \{v_{i}, \mathcal{V}_{M,i'}, p_{i'}\}$  \Comment*[r]{$O(1)$}                             
				}			
			}	                                            	
		} 
		Classify probabilies in \eqref{eq:match_prob} of $C_k$ using Otsu's method;                         \Comment*[r]{$O(|\mathcal{V}_{s}||\mathcal{V}_{h,l}|)$}
		Remove group in $C_k$ with lower probabilities;                                                           \Comment*[r]{$O(1)$} 
		\uIf{$|C_k|>1$}{
		Return $C_k$;            \Comment*[r]{$O(1)$}
		}\Else{
		Set $I_G = 1$;           \Comment*[r]{$O(1)$}
		Return vehicle location; \Comment*[r]{$O(1)$}
		}   		
	}
\end{algorithm}

\subsubsection{Localization Analysis} The remaining problem is whether this sequence of hypothesis testing would converge to the true trajectory as the length of the sequence grows. To analyze this, let us define three binary events:  $A_k=1$ if $\mu_{d_{q,k}}=\mu_{d_{k}}$, $B_k=1$ if $\mu_{\theta_{q,k}}=\mu_{\theta_{k}}$, and $C_k=1$ if vertex $k$ in $M_{h}$ is the actual location. The joint event $C_1\cdots C_n=1$ is to say $M:=\{\Theta, D\}$ represent the true trajectory, whereas we know $A_1\cdots A_nB_1\cdots B_n$ from sequence matching. In the analysis, we denote $n_v= \vert \mathcal{V}_{h,l}\vert$ as the cardinality of $\mathcal{V}_{h,l}$ and $n_b$ as the expected number of neighbors for each vertex. We describe map/trajectory property in a rudimentary way by assuming $k_d$ levels of distinguishable discrete headings in $[0, 2\pi)$ and $k_l$ levels of distinguishable discrete road lengths. Each vertex takes a heading value and length value with equal probabilities of $1/k_d$ and $1/k_l$ correspondingly. Generally speaking, we know $n_v \gg k_d \geq n_b$ and $n_v \gg k_l \geq n_b$ for most maps. we have the following lemma.
\begin{Lem}\label{lem:prob_truematch1}	 
The conditional probability that $M=\{\Theta, D\}$ is the true matching sequence given that $Q=\{\Theta_q, D_q\}$ matches $M$ is,
\begin{equation}\label{eq:prob_truematch1}	 
\resizebox{.9 \linewidth}{!} 
{
$P(C_1\cdots C_n|A_1\cdots A_n B_1\cdots B_n)= \frac{(1-\alpha)^2k_dk_l}{n_v}\left[(1-\alpha)^2\frac{k_dk_l}{n_b} \right]^{n-1}$
}
\end{equation}	
\end{Lem}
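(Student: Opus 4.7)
The plan is to apply Bayes' rule once and then evaluate each of the three resulting factors using the graph and measurement model set out just before the lemma. Concretely, I would write
\[
P(C_1\cdots C_n \mid A_1\cdots A_n B_1\cdots B_n) = \frac{P(A_1\cdots A_n B_1\cdots B_n \mid C_1\cdots C_n)\, P(C_1\cdots C_n)}{P(A_1\cdots A_n B_1\cdots B_n)},
\]
and then evaluate the numerator factors and the denominator one at a time, leveraging the independence of the hypothesis tests across the $n$ segments and across the two axes (heading and length).

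For the prior $P(C_1\cdots C_n)$, the uniform assumption on the starting vertex gives $P(C_1)=1/n_v$, and uniform neighbor selection gives $P(C_k\mid C_{k-1})=1/n_b$; multiplying yields $P(C_1\cdots C_n)=1/(n_v n_b^{n-1})$. For the likelihood under a correct match, both the heading test and the length test at each of the $n$ segments accept their (true) null with probability $1-\alpha$, and independence across tests and segments then gives $P(A_1\cdots A_n B_1\cdots B_n\mid C_1\cdots C_n)=(1-\alpha)^{2n}$. For the marginal $P(A_1\cdots A_n B_1\cdots B_n)$, I would marginalize over paths in $\mathcal{M}_h$ and observe that, against an incorrect alternative, the heading of each vertex matches the query heading only by coincidence with probability $1/k_d$ (one out of $k_d$ discrete levels) and, independently, the length matches with probability $1/k_l$. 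Segment-wise independence together with the negligibility of the true-match contribution under $n_v\gg k_d,k_l\ge n_b$ then collapses the marginal to $P(A_1\cdots A_n B_1\cdots B_n)\approx (k_d k_l)^{-n}$. Assembling the three factors yields $(1-\alpha)^{2n}(k_d k_l)^n/(n_v n_b^{n-1})$, which after pulling the $k=1$ factor out of the product becomes exactly the lemma's expression.

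The main obstacle is the marginal $P(A_1\cdots A_n B_1\cdots B_n)$: it formally requires summing contributions from every alternative path in the graph, and the clean $(k_d k_l)^{-n}$ form only emerges because the map model stated immediately before the lemma treats the heading and the length of each vertex as independent uniform draws over $k_d$ and $k_l$ discrete levels respectively. I would state that modelling assumption explicitly, verify that the true-match term $(1-\alpha)^{2n}/(n_v n_b^{n-1})$ is dominated by the aggregated contribution of the $\approx n_v n_b^{n-1}$ incorrect paths in the regime $n_v\gg k_d,k_l\ge n_b$, and then complete the Bayes assembly. The rest of the argument is a routine algebraic rearrangement.
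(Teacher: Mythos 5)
Your proposal is correct and follows essentially the same route as the paper's proof: one application of Bayes' rule, the likelihood $(1-\alpha)^{2n}$ from independent per-segment tests, the prior $1/(n_v n_b^{n-1})$ by induction over neighbors, and the marginal $(k_d k_l)^{-n}$ from the uniform independent heading/length model. Your added care in justifying the marginal (noting it is really a sum over alternative paths that collapses under the stated map model and the regime $n_v \gg k_d, k_l \ge n_b$) is a point the paper glosses over, but it does not change the argument.
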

\begin{proof}	
	Applying the Bayesian equation, we have
	\begin{align}
	\nonumber P(C_1 \cdots C_n &|A_1\cdots A_n B_1 \cdots B_n) = \\
	& \frac{P(A_1\cdots A_n B_1\cdots B_n|C_1\cdots C_n)P(C_1\cdots C_n) }{P(A_1\cdots A_nB_1\cdots B_n)}\label{eq:conditioned_matching}.
	\end{align}
	Indeed $P(A_1\cdots A_n B_1\cdots B_n|C_1\cdots C_n)$ is the conditional probability that a correct matched sequence survives $n$ hypothesis tests in~\eqref{eq:hypo_vertexmatch}. Due to independent measurement noises, we have $P(A_1B_1|C_1)= (1-\alpha)^2$. Besides, these tests are independent due to independent sensor noises, we have
	\begin{equation}\label{eq:Bayesian_format_reverse}
	P(A_1\cdots A_n B_1\cdots B_n|C_1 \cdots C_n) = (1-\alpha)^{2n}.
	\end{equation}
	Joint probability $P(C_1\cdots C_n)$ is actually the unconditional probability of being correct locations. We know $P(C_1)=1/n_v$ given there are $n_v$ possible solutions, and $P(C_2|C_1)=1/n_b$ because there are $n_b$ neighbors of $C_1$. By induction, 
	\begin{equation}\label{eq:Bayesian_bominator}
	P(C_1\cdots C_n) = \frac{1}{n_b^{n-1}}  \frac{1}{n_v}.
	\end{equation}
	 Lastly, each vertex takes a heading value and length value with equal and independent probabilities of $1/k_d$ and $1/k_l$. We have $P(A_kB_k) = \frac{1}{k_dk_l}$ and 
	\begin{equation}\label{eq:Bayesian_denominator}
	P(A_1\cdots A_n B_1\cdots B_n) = \frac{1}{(k_dk_l)^n}. 
	\end{equation}
	Plugging \eqref{eq:Bayesian_format_reverse}, \eqref{eq:Bayesian_bominator}, and \eqref{eq:Bayesian_denominator} into \eqref{eq:conditioned_matching}, we obtain the lemma.
\end{proof}

\begin{Cor} \label{remark:loc_speed}
We have shown in~\cite{cheng2018plam} that the conditional probability that $\Theta$ is the true matching given $\Theta_q$ is 
\begin{equation} \label{eq:prob_truematch}
\resizebox{.85 \linewidth}{!} 
{	
	$P(C_1\cdots C_n|B_1\cdots B_n)= \frac{(1-\alpha)k_d}{n_v}\left[(1-\alpha)\frac{k_d}{n_b} \right]^{n-1}$
}
\end{equation}
Compare \eqref{eq:prob_truematch1} with \eqref{eq:prob_truematch}, we have
\begin{equation}
\dfrac{P(C_1\cdots C_n|A_1\cdots A_n B_1\cdots B_n)}{P(C_1\cdots C_n|B_1\cdots B_n)}=[(1-\alpha)k_l]^{n}
\end{equation}
Since $k_l>\frac{1}{1-\alpha}$ is generally true, localization using both heading and length information $Q=\{\Theta_q,D_q\}$ is faster than using heading $\Theta_q$ only.    
\end{Cor}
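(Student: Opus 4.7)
My plan is to prove the corollary by direct algebraic cancellation and then translate the resulting ratio into a statement about localization speed. Since Lemma~\ref{lem:prob_truematch1} supplies a closed form for the numerator and the excerpt quotes a closed form from \cite{cheng2018plam} for the denominator, the first half is essentially an arithmetic exercise that should fit on one line.

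Concretely, I would split the ratio along the product structure shared by both formulas. The prefactor contribution $\tfrac{(1-\alpha)^2 k_d k_l / n_v}{(1-\alpha) k_d / n_v}$ collapses to $(1-\alpha)k_l$ after canceling $n_v$, $k_d$, and one factor of $(1-\alpha)$. The geometric-tail contribution $\tfrac{[(1-\alpha)^2 k_d k_l/n_b]^{n-1}}{[(1-\alpha) k_d/n_b]^{n-1}}$ collapses to $[(1-\alpha)k_l]^{n-1}$ by the same cancellations raised to the $(n-1)$-th power. Their product equals $(1-\alpha)k_l\cdot[(1-\alpha)k_l]^{n-1} = [(1-\alpha)k_l]^n$, which is exactly the ratio claimed.

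For the speed interpretation I would argue as follows. The GL thread terminates once the Otsu-separated high-probability group contains a single candidate, so convergence to a unique match is driven by how quickly the posterior of the true sequence pulls away from the posterior of any wrong sequence. Whenever $k_l > 1/(1-\alpha)$ the base $(1-\alpha)k_l$ exceeds one, so $[(1-\alpha)k_l]^n$ grows geometrically in $n$; hence the heading-length scheme concentrates posterior mass on the true vertex strictly faster than the heading-only scheme, reducing the number of straight segments required for disambiguation. The main potential pitfall, more conceptual than computational, is keeping the independence assumptions and the significance level $\alpha$ consistent between the two formulas. This is automatic because both length and heading hypothesis tests in~\eqref{eq:t_select} use the same $\alpha$ and the sensor-noise independence already invoked in Lemma~\ref{lem:prob_truematch1} is what made the product form valid in the first place, so no extra machinery is needed.
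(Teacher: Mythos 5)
Your proposal is correct and follows essentially the same route as the paper: the corollary is established by direct term-by-term cancellation of the prefactor and the geometric tail of \eqref{eq:prob_truematch1} against \eqref{eq:prob_truematch}, yielding $[(1-\alpha)k_l]^{n}$, and the speed claim follows from this base exceeding one when $k_l>\frac{1}{1-\alpha}$. Your added remark on keeping $\alpha$ and the independence assumptions consistent across the two formulas is a sensible sanity check but does not change the argument.
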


According to \eqref{eq:prob_truematch1}, $(1-\alpha)^2\frac{k_dk_l}{n_b}$ determines localization efficiency which is related to both $k_d$ and $k_l$, the spreading of both heading and road length. To better understand how it stands in real world, we analyze map proprieties in the following section. 

\subsubsection{Map Entropy Analysis} \label{ssec:mapentropy}
To provide a measure of variation and spreading in heading and road length, we introduce the Shannon information entropy to measure road heading and length distributions\cite{mohajeri2014evolution}. To minimize the effect of bin size on calculated entropy, we set orientation bin widths to be $5$\textdegree, and 20 meters for road length. Let us denote orientation range set by $\{O_\mathtt{j} \vert \mathtt{j} = 1,2,\cdots,n_\mathtt{j}\}$ and length range set by $\{L_\mathtt{i} \vert \mathtt{i} = 1,2,\cdots,n_\mathtt{i}\}$. We define $n_{ji}=n_jn_i$ and $\rho_\mathtt{ji}$ be the relative frequency that $\theta_i\in O_\mathtt{j}$ and $d_i\in L_\mathtt{i}$.
The joint Shannon entropy in heading and road length is
\begin{align}
H_{\theta,d}(\mathcal{V}_{h,l})
&=-\sum_{\mathtt{j}}\sum_{\mathtt{i}} \rho_{\mathtt{ji}}\log_{n_\mathtt{ji}}\rho_{\mathtt{ji}}. 
\end{align}
By analyzing the entropy of different maps, we predict localization efficiency of our algorithm, which will be shown in Section~\ref{sec:exps}.

\subsection{Location Alignment and Verification Thread}

\begin{figure}[th!]
	\subfigure[]{\includegraphics[width=0.51\linewidth, viewport=16 68 993 720, clip=true]{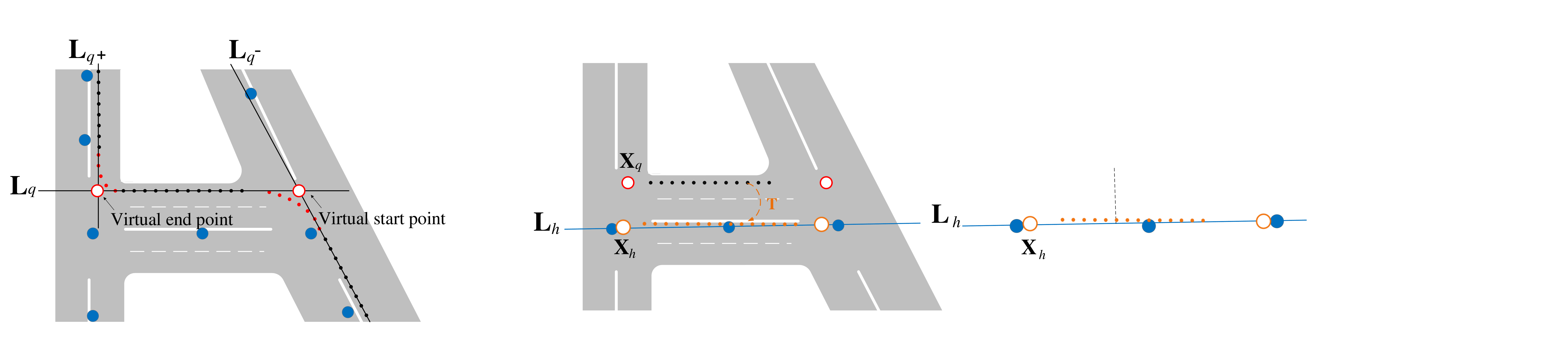}\label{fig:findVpoint}}
	\subfigure[]{\includegraphics[width=0.45\linewidth, viewport=1181 68 2016 662, clip=true]{figures/Road_intersection_v2.pdf}\label{fig:alignVpoint}}
	\caption{Illustration of LAV. The solid small dots represent vehicle trajectory where red points are turn points and black points belong to SSPTE. The roads are shaded gray regions characterizing their width, and GPS waypoints in $\mathcal{M}_p$ are represented in larger blue dots. (a) Virtual starting and end points (i.e. red circles) of an SSPTE. (b) Left: misalignment between $\mathbf{X}_q$ and $\mathbf{X}_h$. It is clear that SSPTM only has three points. Exact point-to-point matching is not appropriate. We fit a line $\mathbf{L}_h$ using SSPTM which is used as reference line for finding the best transformation between SSPTE and SSPTM points.}
	\label{fig:VPoints}
\end{figure}

If the GL thread finds a unique position, we can start LAV thread to continuously report vehicle location. The key is to fix the EKF drift issue using the prior map information. This is achieved by monitoring if the vehicle makes a turn. Once a turn is identified, the straight segment prior to the turn (SSPT) can be extracted. Comparing the SSPT from EKF estimation (SSPTE) to the corresponding SSPT on the map $\mathcal{M}_p$ (SSPTM), we can reset EKF parameters which rectifies the drifting issue. 

Let us define the set of points in SSPTE by
\begin{equation} \label{eq:Xq_SSPTE}
\mathbf{X}_q=\{\mathbf{p}_{\iota}\in\mathbb{R}^2 \vert \iota=1,\cdots,n_q\}
\end{equation}
with each element obtained from EKF $\mathbf{p}^I_{1:2}=[x, y]^\mathsf{T}$ where $\mathbf{p}^I_{1:2}$ is the first and second element of $\mathbf{p}^I$. The distribution of $\mathbf{p}_{\iota}$ is 
$\mathbf{p}_{\iota}\sim\mathcal{N}(\mu_{\mathbf{p}_{\iota}},\Sigma_{\mathbf{p}_{\iota}})$,
where $\mu_{\mathbf{p}_{\iota}}$ is the mean vector and $\Sigma_{\mathbf{p}_{\iota}}$ is the covariance matrix obtained from the EKF. The corresponding GPS SSPTM points are defined by 
\begin{equation} \label{eq:Xh_SSPTM}
\mathbf{X}_h=\{\mathbf{x}_l\vert l =1,\cdots,n_h \}
\end{equation}
and the covariance of GPS points is denoted by $\Sigma_g=diag(\sigma^2_g,\sigma^2_g)$ as mentioned in Section~\ref{ssec:GraphConstruction}. Thus we have
$\mathbf{x}_l\sim\mathcal{N}(\mu_{\mathbf{x}_l},\Sigma_{g})$.

\subsubsection{Virtual Starting-Point and End-Point Estimation}
However, SSPTE points do not necessary follow SSPTM as shown in Fig.~\ref{fig:findVpoint}. This is because we do not know which lane the vehicle is driving in and the map may not provide lane-level waypoint accuracy. Fig.~\ref{fig:findVpoint} also shows the effect of vehicle turn radius which makes the length of SSPTE shorter than that of the corresponding SSPTM. To address the problem, we estimate virtual starting and end points for an SSPTE.

We find the virtual starting and end points by computing line intersection of two consecutive SSPTE segments. With the current segment positions $\mathbf{X}_q$, we denote the set of points from previous and next SSPTE segments by $\mathbf{X}_{q^-}$ and $\mathbf{X}_{q^+}$, respectively. Applying line fitting to $\mathbf{X}_q$, $\mathbf{X}_{q^-}$, and $\mathbf{X}_{q^+}$, we obtain three 2D lines $\mathbf{L}_{q}$, $\mathbf{L}_{q^-}$, and $\mathbf{L}_{q^+}$, respectively.  We parameterize each line by two reference points.  Thus we denote $\mathbf{L}_{q}=[\mathbf{a}_q^\mathsf{T},\mathbf{b}_q^\mathsf{T}]^\mathsf{T}$, $\mathbf{L}_{q^-}=[\mathbf{a}_{q^-}^\mathsf{T},\mathbf{b}_{q^-}^\mathsf{T}]^\mathsf{T}$, and 
$\mathbf{L}_{q^+}=[\mathbf{a}_{q^+}^\mathsf{T},\mathbf{b}_{q+}^\mathsf{T}]^\mathsf{T}$. Also, the line direction vectors are $\mathbf{v}_q = \mathbf{b}_{q} - \mathbf{a}_{q}$, $\mathbf{v}_{q^+} = \mathbf{b}_{q^+} - \mathbf{a}_{q^+}$, and $\mathbf{v}_{q^-} = \mathbf{b}_{q^-} -\mathbf{a}_{q^-}$. Finding the intersection between $\mathbf{L}_{q}$ and $\mathbf{L}_{q^-}$ allows us to obtain the virtual starting point. We denote the virtual starting point of $\mathbf{X}_q$ by $\mathbf{p}_{s}$.  
\begin{equation} \label{eq:virtualStartPt}
\mathbf{p}_{s} = \mathbf{a}_q - \dfrac{\mathbf{v}^{\perp}_{q^{-}}.(\mathbf{a}_q-\mathbf{a}^{-}_q)}{\mathbf{v}^{\perp}_{q^{-}}.\mathbf{v}_q}\mathbf{v}_q,
\end{equation}
where $\cdot$ is dot product and $\mathbf{v}^{\perp}_{q^{-}}$ is the perp operator of $\mathbf{v}_{q^{-}}$. Similarly, the intersection between $\mathbf{L}_{q}$ and $\mathbf{L}_{q^+}$ gives us the virtual end point $\mathbf{p}_{e}$. We have 
\begin{equation} \label{eq:virtualEndPt}
\mathbf{p}_{e} = \mathbf{a}_q - \dfrac{\mathbf{v}^{\perp}_{q^{+}}.(\mathbf{a}_q-\mathbf{a}^{+}_q)}{\mathbf{v}^{\perp}_{q^{+}}.\mathbf{v}_q}\mathbf{v}_q,
\end{equation}
where $\mathbf{v}^{\perp}_{q^{+}}$ is the perp operator of $\mathbf{v}_{q^{+}}$.
When SSPTE is connected with an curve segment (e.g. caused by vehicle turn), we add $\mathbf{p}_{s}$ and $\mathbf{p}_{e}$ to $\mathbf{X}_q$ to help alignment process. $\mathbf{p}_{s}$ and $\mathbf{p}_{e}$ become the first and the last points in $\mathbf{X}_q$, respectively.

\subsubsection{Location Alignment and Verification}
With augmented $\mathbf{X}_q$, we can match $\mathbf{X}_q$ to $\mathbf{X}_h$ to rectify drifting issue by finding the transformation $\mathbf{T}$ between them (see Fig.~\ref{fig:loc_align}). Here $\mathbf{T}$ is 3-DoF rigid body transformation represented by a 2x2 rotation matrix $\mathbf{R}$, and a 2x1 translation vector $\mathbf{t}$, 
\begin{equation} \label{eq:rigid_transform}
\mathbf{T}(\mathbf{x}):=\mathbf{R}\mathbf{x}+\mathbf{t},
\end{equation}
where $\mathbf{x}$ is a 2D point. $\mathbf{X}_q$ usually contains significantly more entries than that of $\mathbf{X}_h$ due to its higher sampling frequency ($n_q \gg n_h$). Directly matching two point sets is not the best solution. Instead, we fit a line through points in $\mathbf{X}_h$ and minimizing the distance of all points in $\mathbf{X}_q$ to this line (Fig.~\ref{fig:alignVpoint}). 

\begin{figure}[ht!]
	\subfigure[$n=4$]{\includegraphics[width=0.32\linewidth, viewport=110 243 480 540, clip=true]{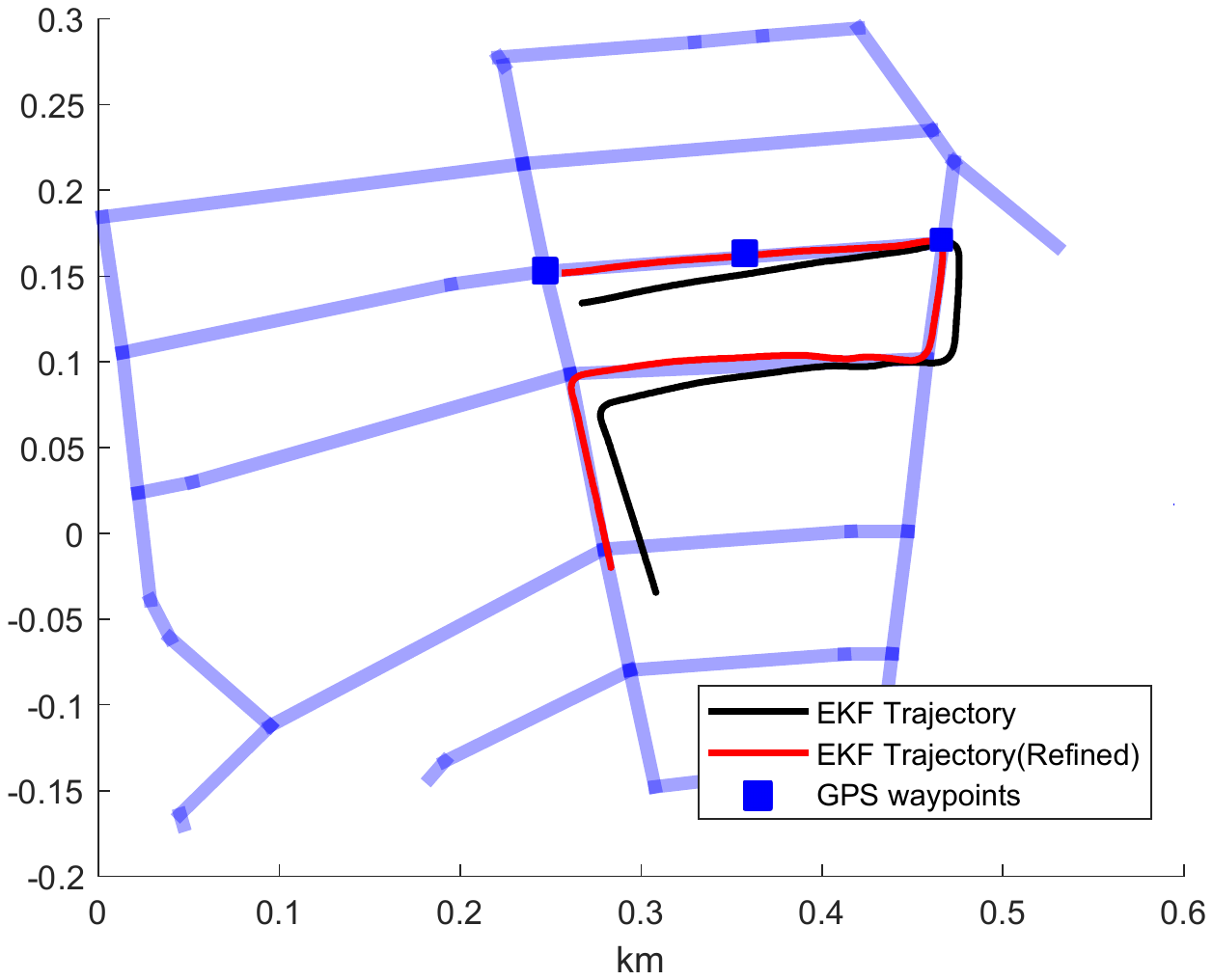}}
	\subfigure[$n=5$]{\includegraphics[width=0.32\linewidth, viewport=110 243 480 540, clip=true]{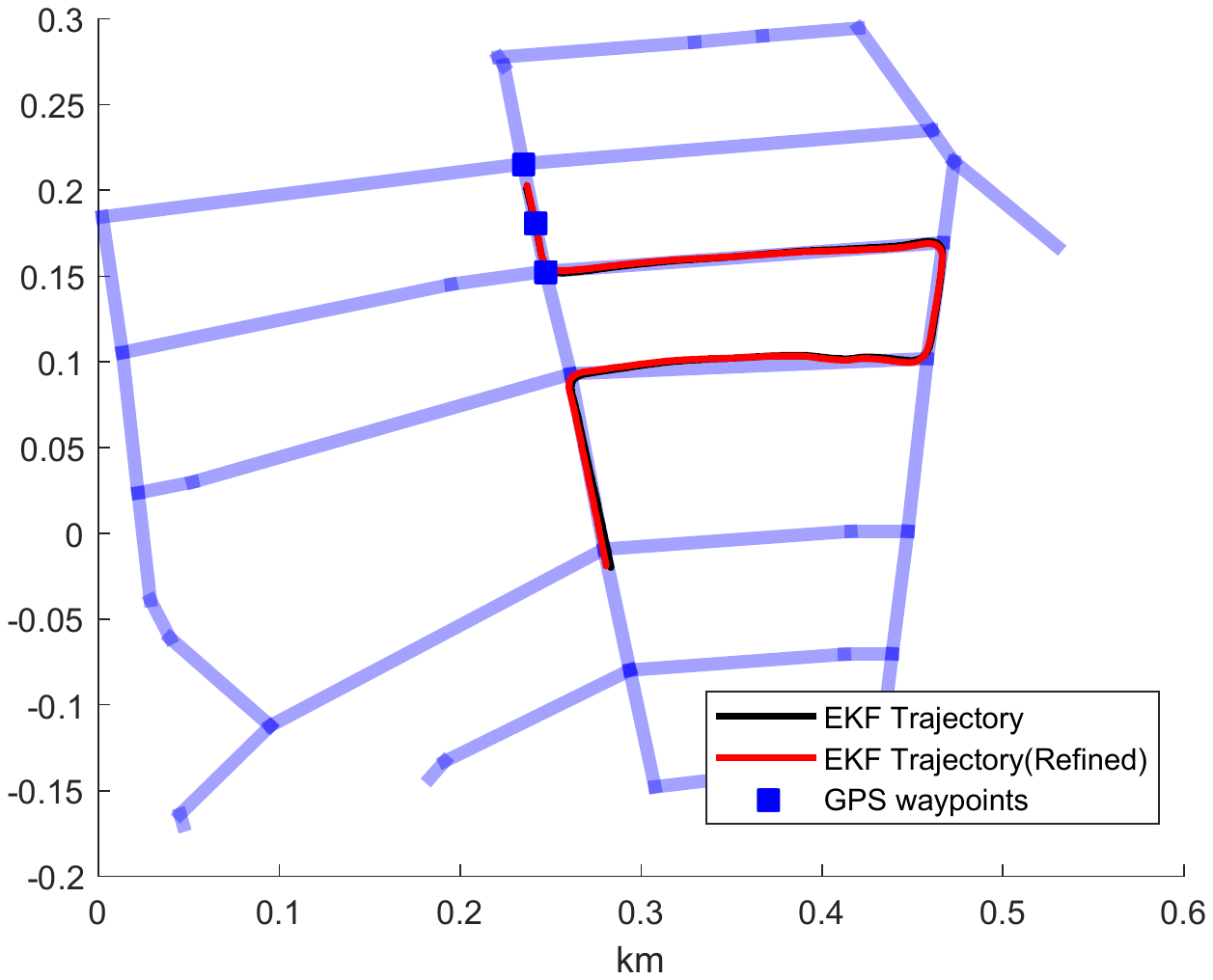}}
	\subfigure[$n=6$]{\includegraphics[width=0.33\linewidth, viewport=110 243 480 540, clip=true]{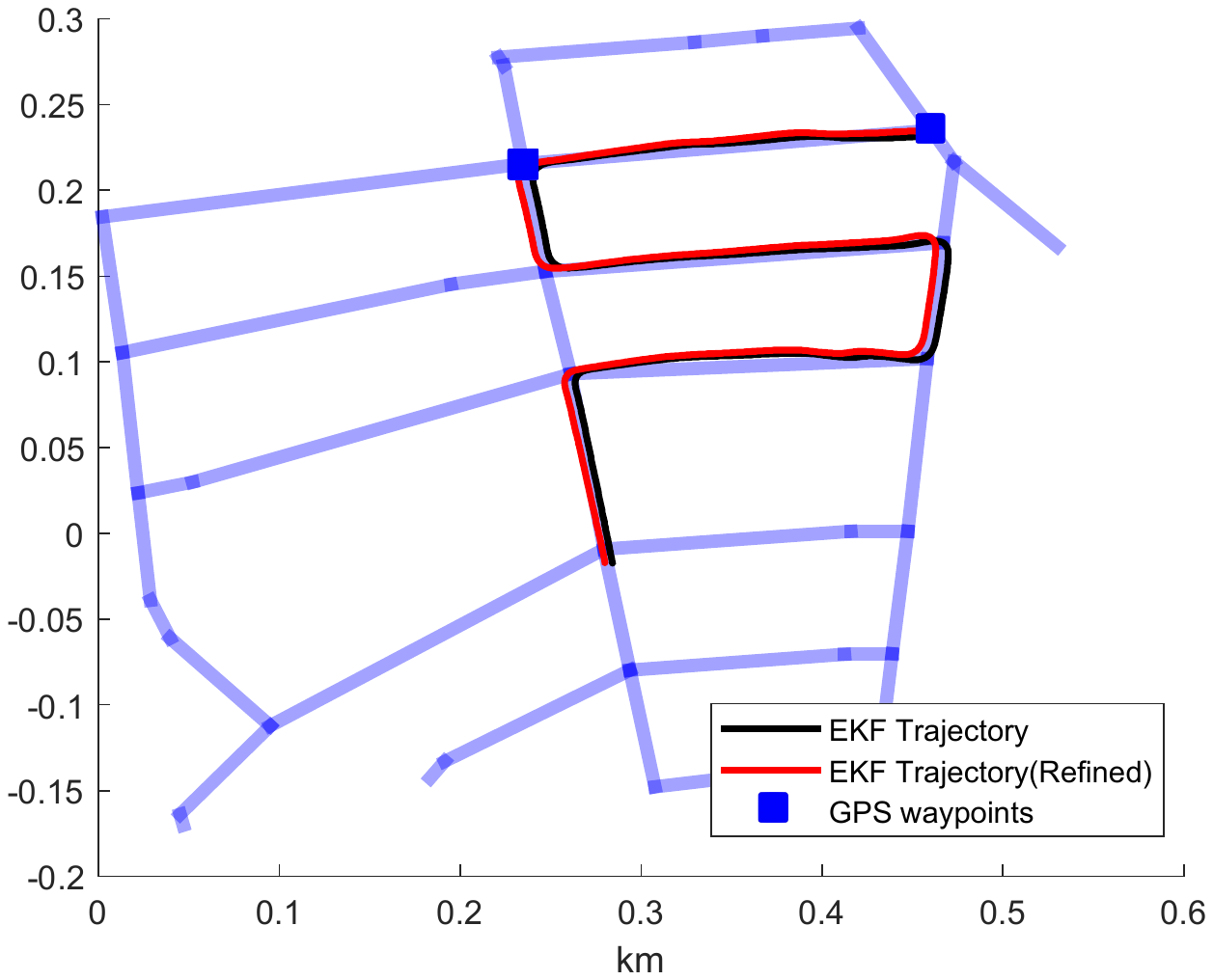}}
	\caption{An example of location alignment and verification that keeps drifting under control where $n$ is the number of long straight segments for the vehicle. The unaligned trajectory is shown in black, the aligned trajectory is shown in red, and GPS waypoints are shown in dark blue square.}
	\label{fig:loc_align}
\end{figure}

Let us denote $\mathbf{L}_{h}=[\mathbf{a}_h^\mathsf{T},\mathbf{b}_h^\mathsf{T}]^\mathsf{T}$ where $\mathbf{a}_h$ and $\mathbf{b}_h$ are two reference points on the line. For every point $\mathbf{p}_j$ in $\mathbf{X}_q$, the point after transformation is denoted by $\mathbf{T}(\mathbf{p}_{\iota})$. The point-to-line distance between $\mathbf{T}(\mathbf{p}_{\iota})$ and $\mathbf{L}_h$ is defined as
\begin{equation}
d_{\perp}(\mathbf{T}(\mathbf{p}_{\iota}),\mathbf{L}_h) = \dfrac{||(\mathbf{a}_h-\mathbf{T}(\mathbf{p}_{\iota})\times(\mathbf{a}_h-\mathbf{b}_h)||}{||\mathbf{a}_h-\mathbf{b}_h||},
\end{equation}
where `$\times$' is the cross product and $||\cdot||$ is the $L^2$ norm.
We define the cost function $\mathbf{C}_\mathbf{T}$ by
\begin{equation}
\mathbf{C}_\mathbf{T}=
\begin{bmatrix}
d_{\perp}(\mathbf{T}(\mathbf{p}_s),\mathbf{L}_h)\\
d_{\perp}(\mathbf{T}(\mathbf{p}_1),\mathbf{L}_h)\\
\vdots\\
d_{\perp}(\mathbf{T}(\mathbf{p}_{n_q}),\mathbf{L}_h)\\
d_{\perp}(\mathbf{T}(\mathbf{p}_e),\mathbf{L}_h)\\
\end{bmatrix},
\end{equation}  and formulate the following optimization problem
\begin{equation} \label{eq:opt_traj_unconstrained}
\argmin{\mathbf{T}}\mathbf{C}_\mathbf{T}^{\mathsf{T}}\Sigma_C^{-1}\mathbf{C}_\mathbf{T}+ \lambda||\mathbf{T}(\mathbf{p}_s)- \mathbf{x}_1||+\lambda||\mathbf{T}(\mathbf{p}_e)-\mathbf{x}_{n_h}||,
\end{equation}
where $\Sigma_C=diag(\sigma^2_{d_{\perp},\mathbf{p}_s},\cdots,\sigma^2_{d_{\perp},\mathbf{p}_e})$, $\beta$ is a nonnegative weight, and $\mathbf{x}_1$ and $\mathbf{x}_{n_h}$ are the first and the last entries in \eqref{eq:Xh_SSPTM}, respectively. $\sigma^2_{d_{\perp},\mathbf{p}_\iota}$ is obtained using error propagation. In detail, let $d_{\perp}(\mathbf{T}(\mathbf{p}_\iota),\mathbf{L}_h)= f_d(\mathbf{p}_\iota,\mathbf{L}_h)$ and $\xi=[\mathbf{p}_s^\mathsf{T},\mathbf{L}_h^\mathsf{T}]^\mathsf{T}$, we have $\sigma^2_{d_{\perp},\mathbf{p}_\iota} = J_d\Sigma_d J_d^\mathsf{T}$, where $J_d = \frac{\partial f_d}{\partial\xi}$ and $\Sigma_d=diag(\Sigma_{\mathbf{p}_\iota}, \Sigma_{\mathbf{L}_h})$ because $\mathbf{p}_\iota$ is independent of $\mathbf{L}_h$ which comes from $\mathbf{X}_h$. Define $\mathbf{L}_h=f_L({\mathbf{X}_h})$, we have $\Sigma_{\mathbf{L}_h}=J_L\Sigma_{{X}_h}J_L^{\mathsf{T}}$ where  $J_L=\frac{\partial{f_L}}{\partial{\mathbf{X}_h}}$ and $\Sigma_{{X}_h}=diag(\Sigma_g,~\cdots, \Sigma_g)$. 
The second and third terms are soft constraints due to potential alignment errors. To solve~\eqref{eq:opt_traj_unconstrained}, we start with a small positive weight for $\lambda$ and apply a nonlinear optimization solver, e.g. Levenberg-Marquardt algorithm. Initially, we set $\mathbf{R}=\mathbf{I}_{2\times2}$, and $\mathbf{t}$ from the result of the global location obtained from Section~\ref{ssec:GlobalLoc}. For each turn, we use previous solution as the initial solution and increase $\lambda$  gradually until the change in solution is negligible.

Now we have optimized $\mathbf{T}$ and we denote the aligned locations by $\mathbf{\hat{X}}_q=\mathbf{T}(\mathbf{X}_q)$. We need to verify if the matching result is reliable by performing hypothesis testing. We have two hypotheses:
\begin{eqnarray}
& \mathbf{H_{0}}:& \mbox{$\mathbf{X}_h$ and $\mathbf{\hat{X}}_q$ are from the same distribution,}\nonumber\\
& \mathbf{H_{1}}:& \mbox{~otherwise.}  \label{eq:hypo_vertexmatch2}
\end{eqnarray}
We set the significance level by $\alpha$ and reject $H_0$ if the statistic is less than $\alpha$. Note $H_0$ is examined by the Mahalanobis distance $\mathbf{C}_\mathbf{T}^{\mathsf{T}}\Sigma_C^{-1}\mathbf{C}_\mathbf{T}$ which follows a $\chi^2$ distribution with $2(n_q + 2)$ DoFs. Thus we reject $H_0$ if 
$$\mathbf{C}_\mathbf{T}^{\mathsf{T}}\Sigma_C^{-1}\mathbf{C}_\mathbf{T}>\chi^2_{2(n_q+2)}(\alpha).$$
Correspondingly, we set localization status indicator variable $I_G$ values by
\begin{equation}  \label{eq:Ig_reset}
I_G =
\begin{cases}
0,\mbox{~$H_{0}$ is rejected},\\
1,\mbox{~otherwise}.
\end{cases}
\end{equation}	
If $I_G=1$, we accept $\mathbf{T}$ and use the aligned trajectory $\hat{\mathbf{X}}_q:=\mathbf{T}(\mathbf{X}_q)$ which is used to reset the EKF states (Fig.~\ref{fig:SystemDiagram}). After LAV execution, we keep acquiring the vehicle locations EKF $\mathbf{p}^{I}_{1:2}$ until next turn. When turn is detected and $I_G=1$, we execute LAV thread repeatedly. If $I_G=0$, it means that we cannot find the position and we lose the global position. Thus we terminate the LAV thread and start the GL thread again.

\subsubsection{SSF Estimation}\label{ssec:ScaleAdjustment} 
To further reduce drift in the dead-reckoning process, we consider SSF in the EKF-based trajectory estimation. There are two sources of biases: systematic and non-systematic biases from wheel encoder 
inputs~\cite{borenstein1995correction}. The systematic error can be caused by tire radius error such as inflation level, tire wear, gear ratio, etc. Non-systematic error comes from wheel slippage on road. To compensate for those errors, we introduce scale and slip factor  $s_{ssf}$ in \eqref{eq:obs_scale}. To compute $s_{ssf}$, we need the travel length for each vertex on HLG for both query data and map data. We obtain the travel length $d_q$ on the query data using the virtual starting/end points $\mathbf{p}_e$ and $\mathbf{p}_s$ in \eqref{eq:virtualStartPt} and \eqref{eq:virtualEndPt}. That is,
\begin{equation}
d_q = ||\mathbf{p}_e - \mathbf{p}_s||
\end{equation}
According to \eqref{eq:Xh_SSPTM}, the corresponding travel length on the map is denoted by $d:=||\mathbf{x}_{n_h} - \mathbf{x}_{1}||$. Assuming GL thread ends at the $n$-th turn, for $k=(n+1), \cdots, n'$ we estimate $s_{ssf}$ by computing the ratio of accumulated
length $d_{q,k}$ and $d_{k}$:
\begin{equation}
\label{eq:scale_est}
s_{ssf} = {\sum\limits_{k = n+1}^{n'}d_{k}}\bigg/{\sum\limits_{k = n+1}^{n'} d_{q,k}}.
\end{equation}

We then model the variance of $s_{ssf}$ to be used in the EKF measurement variance in Section~\ref{ssec:ETE}. It is not accurate to set a constant variance value for $s_{ssf}$, since at the beginning traveling length is short and thus $s_e$ has larger variance. As the traveling length increases, the variance of $s_{ssf}$ ought to decrease. Denote the variance of $s_{ssf}$ by $\sigma^2_{s_{ssf}}$, we derive the following Lemma.
\begin{Lem}
	\label{lem:scale}
	The variance of scale and slip factor $s_{ssf}$ is 
	\begin{equation} 
	\sigma^2_{s_{ssf}} =\frac{1}{L^2_q}(2n_s\sigma^2_{g} + \frac{L^2_g}{L^2_q}\sum_{k = n+1}^{n'}\sigma^2_{d_q,k}).
	\label{eq:scale_var}
	\end{equation}
\end{Lem}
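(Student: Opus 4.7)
The plan is to apply first-order variance propagation (the delta method) to the ratio in \eqref{eq:scale_est}. First I would set $L_g := \sum_{k=n+1}^{n'} d_k$ and $L_q := \sum_{k=n+1}^{n'} d_{q,k}$ so that $s_{ssf} = L_g/L_q$, and let $n_s := n' - n$ denote the number of aggregated straight segments used by the estimator. This reduces the lemma to two sub-tasks: (i) compute $\sigma^2_{L_g}$ and $\sigma^2_{L_q}$ from the per-segment variances, and (ii) propagate those two aggregate variances through the quotient.

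For the numerator, \eqref{eq:dist_di} already provides $d_k \sim \mathcal{N}(\mu_{d_k}, 2\sigma_g^2)$. Because distinct HLG segments are separated by intersections or curves and thus are computed from disjoint pairs of GPS waypoints, I would treat the $d_k$ as independent across $k$ and conclude $\sigma^2_{L_g} = 2 n_s \sigma_g^2$ by linearity of variance. For the denominator, the EKF supplies an independent variance estimate $\sigma^2_{d_{q,k}}$ for each query segment length $d_{q,k}$ (as discussed after Sec.~\ref{ssec:TurnDetection_QuerySeq}); since consecutive straight segments are separated by turns and the LAV step periodically resets the EKF state, I would again assume cross-segment independence and obtain $\sigma^2_{L_q} = \sum_{k=n+1}^{n'} \sigma^2_{d_{q,k}}$.

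With those two pieces in hand, I would apply the standard delta-method formula to $f(L_g, L_q) = L_g / L_q$. The Jacobian evaluated at the means gives $\partial f/\partial L_g = 1/L_q$ and $\partial f/\partial L_q = -L_g/L_q^2$, and since $L_g$ is derived from the prior map while $L_q$ is derived from the onboard sensors, the two are independent, so
\[
\sigma^2_{s_{ssf}} \approx \frac{1}{L_q^2}\sigma^2_{L_g} + \frac{L_g^2}{L_q^4}\sigma^2_{L_q}.
\]
Plugging in $\sigma^2_{L_g}=2n_s\sigma_g^2$ and the sum expression for $\sigma^2_{L_q}$, then factoring $1/L_q^2$, recovers \eqref{eq:scale_var} exactly.

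The hard part is not the algebra but justifying the independence and linearization assumptions: I would need to argue that adjacent HLG segments do not share GPS endpoints (so the $d_k$'s are genuinely uncorrelated rather than merely uncorrelated in a first-order sense), that the EKF-returned length variances across different straight stretches can be approximated as independent given the intervening turns and LAV re-anchorings, and that the ratio is well-approximated by its first-order Taylor expansion, which requires $L_q$ to stay safely away from zero (trivially satisfied once any nontrivial distance has been traveled). Once those modeling assumptions are accepted, the result follows immediately.
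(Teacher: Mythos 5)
Your proposal is correct and follows essentially the same route as the paper: the paper also applies first-order error propagation with $s_{ssf}=f_s(d_{n+1},\dots,d_{n'},d_{q,n+1},\dots,d_{q,n'})$, a diagonal covariance $\Sigma_s=\mathrm{diag}(2\sigma^2_g,\dots,2\sigma^2_g,\sigma^2_{d_{q,n+1}},\dots,\sigma^2_{d_{q,n'}})$ (i.e., exactly your independence assumptions), and Jacobian entries $1/L_q$ and $-L_g/L_q^2$, yielding \eqref{eq:scale_var}. Your two-stage organization (aggregate into $L_g$, $L_q$ first, then propagate through the ratio) is an equivalent rewriting of the paper's single-step Jacobian computation, and your added discussion of when the independence and linearization assumptions are defensible goes slightly beyond what the paper states explicitly.
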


\begin{proof}
	First, we write $s_{ssf}$ as function of measurements from $d_k$ and $d_{q,k}$ according to \eqref{eq:scale_est}. That is, $s_{ssf} =f_s(d_{n+1},\cdots,d_{n'},d_{q,n+1},\cdots,d_{q,n'})$. We know the variance of $d_{k}$ is $\sigma^2_{d_k}=2\sigma^2_{g}$ from \eqref{eq:dist_di} and the variance of $d_{q,k}$ is $\sigma^2_{d_q,k}$ which is defined in Section~\ref{ssec:TurnDetection_QuerySeq}. Let us define $L_q = \sum_{k = n+1}^{n'}d_{q,k}$, $L_g = \sum_{k = n+1}^{n'}d_{k}$, and $n_s=n'-n$. 
	Through forward error propagation,  
	\begin{equation}
	\label{eq:scale_var1}
	\sigma^2_{s_{ssf}} = J_s\Sigma_sJ_s^{\mathtt{T}},
	\end{equation}
	where $\Sigma_s=diag(2\sigma^2_{g},\cdots,2\sigma^2_{g},\sigma^2_{d_{q,n+1}}\cdots\sigma^2_{d_{q,n'}})$  and $J_s$ is 
	\begin{align} \label{eq:scale_Js}
	\notag J_s &= [\frac{\partial f_s}{\partial d_{n+1}},\cdots,\frac{\partial f_s}{\partial d_{n'}}, \frac{\partial f_s}{\partial d_{q,n+1}},\cdots,\frac{\partial f_s}{\partial d_{q,n'}}]\\
	& = [\dfrac{1}{L_q}\cdots,\dfrac{1}{L_q},\dfrac{-L_g}{L^2_q},\cdots,\dfrac{-L_g}{L^2_q}].
	\end{align}
	Plug~\eqref{eq:scale_Js} into~\eqref{eq:scale_var1}, we have
	\begin{align} \label{eq:scale_var2}
	\notag \sigma^2_{s_{ssf}} = J_s\Sigma_sJ_s^{\mathtt{T}}  &= 2n_s\frac{\sigma^2_{g}}{L^2_q} + \sum_{k = n+1}^{n'}\sigma^2_{d_q,k}\frac{L^2_g}{L^4_q} \\
	&=\frac{1}{L^2_q}(2n_s\sigma^2_{g} + \frac{L^2_g}{L^2_q}\sum_{k = n+1}^{n'}\sigma^2_{d_q,k}).
	\end{align}	
\end{proof}	
\begin{remark}
	Let us take a close look at \eqref{eq:scale_var2}. We have $L_q\approx L_g$ because the estimated travel length should be similar to the corresponding path in map. Therefore, we can approximate $\sigma^2_{s_{ssf}}$ as 
	\begin{align} 
	\notag \sigma^2_{s_{ssf}} = J_s\Sigma_sJ_s^{\mathtt{T}}  
	&=\frac{1}{L^2_q}(2n_s\sigma^2_{g} + \sum_{k = n+1}^{n'}\sigma^2_{d_q,k}).
	\end{align}	
	Thus we show that $\sigma^2_{s_{ssf}}$ decrease as $L_q = \sum\limits_{k = n+1}^{n'}d_{q,k}$ increases. As time goes, we have longer travel length and the estimation of $s_{ssf}$ becomes more accurate. Using the accumulated travel length to adjust SSF is suitable to compensate systematic biases. If the traveling length is long and systematic biases are compensated, setting a sliding window for accumulated distance can be used to detect non-systematic biases that varies through traveling.
\end{remark}

The resulting $s_{ssf}$ and $\sigma^2_{s_{ssf}}$ are fed into the EKF in Section~\ref{ssec:ETE}. This completes our overall method. 

\section{Experiments} \label{sec:exps}
We have implemented the proposed GBPL method using MATLAB and validated the algorithm in both simulation and physical experiments. We first validate the proposed global localization approach. Second, we test the LAV performance. 

For physical experiments, we evaluate our approach on three maps with seven outdoor data sets, as described below. We obtain the corresponding three maps from OSM:
\begin{itemize}
	\item CSMap  : College Station, Texas, U.S.
	\item KITTI00Map: Karlsruhe, Germany, and
	\item KITTI05Map: Karlsruhe, Germany.
\end{itemize}
Map information including map size, total length of drivable roads, HLG entropy, and $\#$nodes in HLG is shown in the first four columns of Tab.~\ref{tab:MapsInfo}. 

The seven query sequences are three self-collected CSData sequences and four KITTI sequences: 
\begin{itemize}
	\item CSData: We record IMU readings at 400Hz and compass readings at 50Hz using a Google Pixel phone mounted on a passenger car. Also, we read the vehicle speed at 46.6Hz sampling frequency in average using a Panda OBD-II Dongle which provides the velocity feedback from vehicle wheel encoder. We have collected three sequences: CS-1, CS-2 and CS-3.
	\item KITTI: We use the KITTI GPS/IMU dataset~\cite{geiger2012KITTI} which contains synchronized IMU readings from its inertial navigation system (INS) as inputs. We only use the GPS readings to synthesize compass readings to test our algorithm since the data sets do not provide compass readings. We have four sequences: KITTI00-1, KITTI00-2, KITTI05-1, and KITTI05-2.
\end{itemize}

\begin{table}[b!]
	\caption{Map info. and $\#$straight segments $n$ for localization} 
	\label{tab:MapsInfo}
	\resizebox{\textwidth}{!}{ 
		\begin{tabular}{cccccccc|c}
			\hline
			Maps               & Size~($km^2$) & Drivable road~($km$) &Entropy &\#nodes & $n$(PLAM) & $n$ (GBPL)\\
			\hline
		    CSMap              & 3.24  &52.7   &0.724  & 483     &  9,5,6 &\textbf{3,3,2} \\   
			KITTI00Map         & 4.75  &44.2   &0.877   & 583     & 10,5   &\textbf{4,3} \\  
			KITTI05Map         & 3.24  &43.7   &0.797   & 548     &  4,5   &\textbf{3,4} \\  
		
			\hline                                             
		\end{tabular}
	}
\end{table}

\subsection{Global Localization Test}

\subsubsection{Evaluation Metrics and Methods Tested} It is worth noting that the speed of methods are characterized by $n$, number of straight segments in the query. Since computation speed is not a concern, we are more interested in how many inputs it takes to localize the vehicle. Therefore, $n$ is a good metric for this.  For a given $n$, the algorithms may provide multiple solutions if there is many similar routes in the map. If the number of solutions is one, then the vehicle is uniquely localized. The number of solutions is also an important measure for algorithm efficiency. Two algorithms are compared in our experiments:
\begin{itemize}
	\item GBPL: Current method that uses both heading and length information of straight segments.
	\item PLAM: The counterpart method using heading only~\cite{cheng2018plam}. 
\end{itemize}

\subsubsection{Map Entropy Evaluation} \label{ssec:toymaps}
Map entropy describes how much the heading and distance distribution spread out in a given map. Higher entropy means distributions are more spread out and hence it is easier for the vehicle to localize itself, as proved in Lem.~\ref{lem:prob_truematch1}. Therefore, we want to find out what are map entropy range of real cities and use the range to test our GBPL. As shown in Fig.~\ref{fig:city_entropy}, we calculate map entropy distributions of 100 cites based on the data from \cite{boeing2018entropyexp}. For comparison, the normalized sum of heading entropy and length entropy are in orange bars, and the heading entropy are in blue bars. For each city, the sum of heading entropy and length entropy is the upper bound of the joint entropy. We generate histogram plots for entropy distribution in Fig.~\ref{fig:city_anglehist} and Fig.~\ref{fig:city_hist}. As shown in Fig.~\ref{fig:city_hist}, 95 cities have entropy values higher than 0.70 and the lowest entropy is around 0.6. This determines that entropy range of maps that we will use to test our algorithm is from 0.60 to 0.99.

To better understand the relationship among HLG entropy, $n$, and the number of solutions, we simulate 40 maps with joint entropy of heading and length ranging from 0.60 to 0.99. Building on the simulation in~\cite{cheng2018plam}, we expand it from Heading Graph to HLG in this work. For completeness, we repeat information about experimental settings here. The simulated maps are with a fixed graph structure, and we increase the entropy level in both heading and length by perturbing selected road intersection positions. For each map, we generate 20 query sequence samples with $n=1, \cdots, 20$ and the uncertainties of orientation and length are considered by setting $\sigma_{\theta_{q,k}}=5^\circ$,  $\sigma_{d_{q,k}}=\sqrt{2}\sigma_g$, and $\sigma_g=5$ meters.  We compute the number of solutions by averaging the results of 20 sequences for each map. The simulation result is shown in Fig.~\ref{fig:Entropy_exp_hl} and we adapt Fig.~\ref{fig:Entropy_exp_h} from~\cite{cheng2018plam} for comparison. 

For PLAM which uses heading only (Fig.~\ref{fig:Entropy_exp_h}), the vehicle can be localized with $n \leq 10$ if the entropy in orientation is above 0.9\cite{cheng2018plam}. Under GBPL, the vehicle can be localized with $n \leq 7$ even if the heading/length entropy is 0.6. It is worth noting that lower entropy means less spreading of heading and segment length and road network is closer to be a rectilinear grid and hence it is more challenging to localize a vehicle in such settings. GBPL appears to be more robust to low map entropy than PLAM. 

Fig.~\ref{fig:Entropy_exp_h} and Fig.~\ref{fig:Entropy_exp_hl} show the number of solutions with regard to $n$ values and different HLG entropy values. We fix the entropy as 0.87 and $n=3$ in Figs.~\ref{fig:obs_sol} and \ref{fig:entro_sol}, respectively to observe how quickly the number of solutions decreases in each setting. It shows the $\#$solutions decreases more rapidly in GBLP than that of PLAM using heading only. This result is consistent with Cor.~\ref{remark:loc_speed}.

\begin{figure}[ht!]
	\subfigure[]{\includegraphics[width=0.95\linewidth, viewport=6 252 612 536, clip=true]{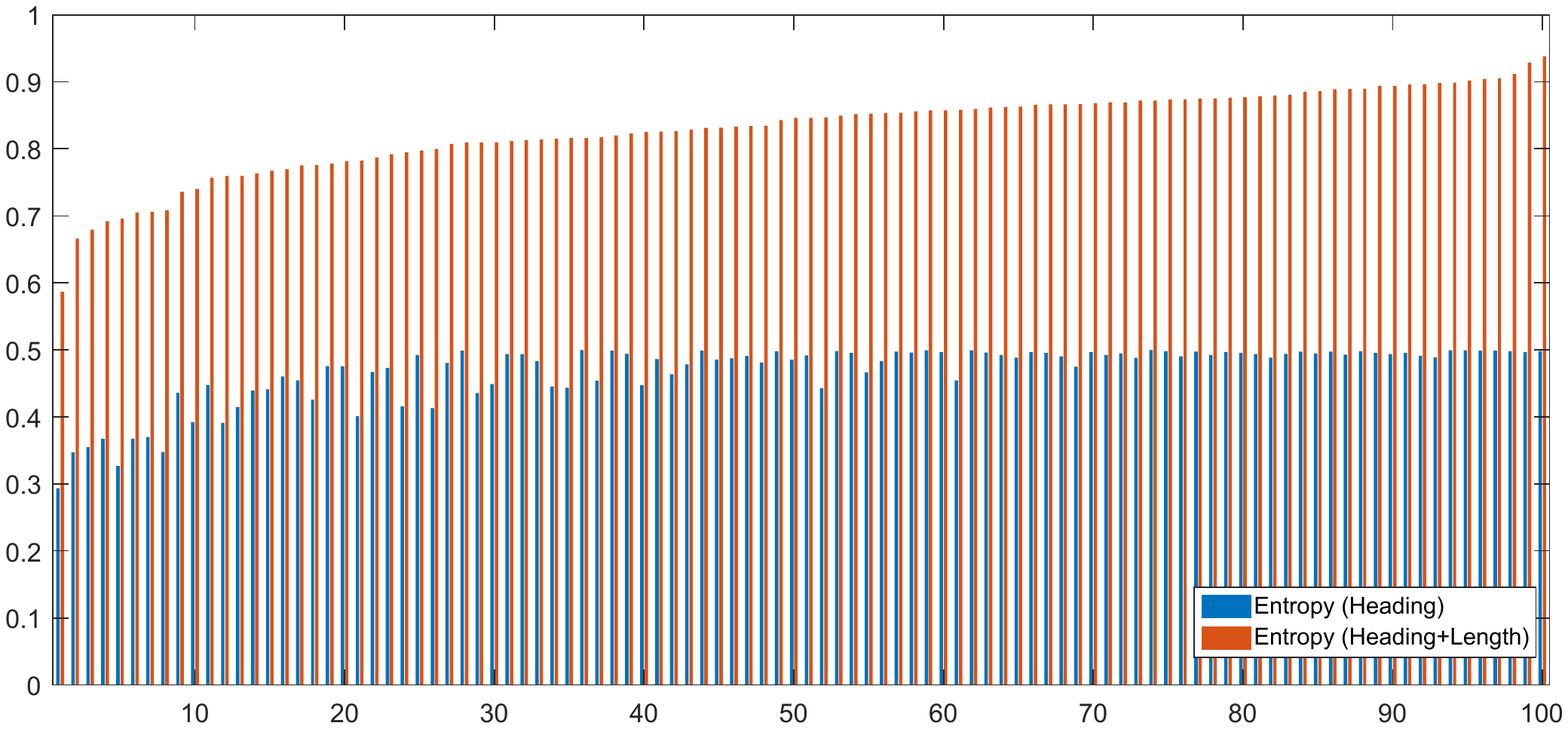}\label{fig:city_entropy}}
	\subfigure[]{\includegraphics[width=0.47\linewidth, viewport=110 240 500 540, clip=true]{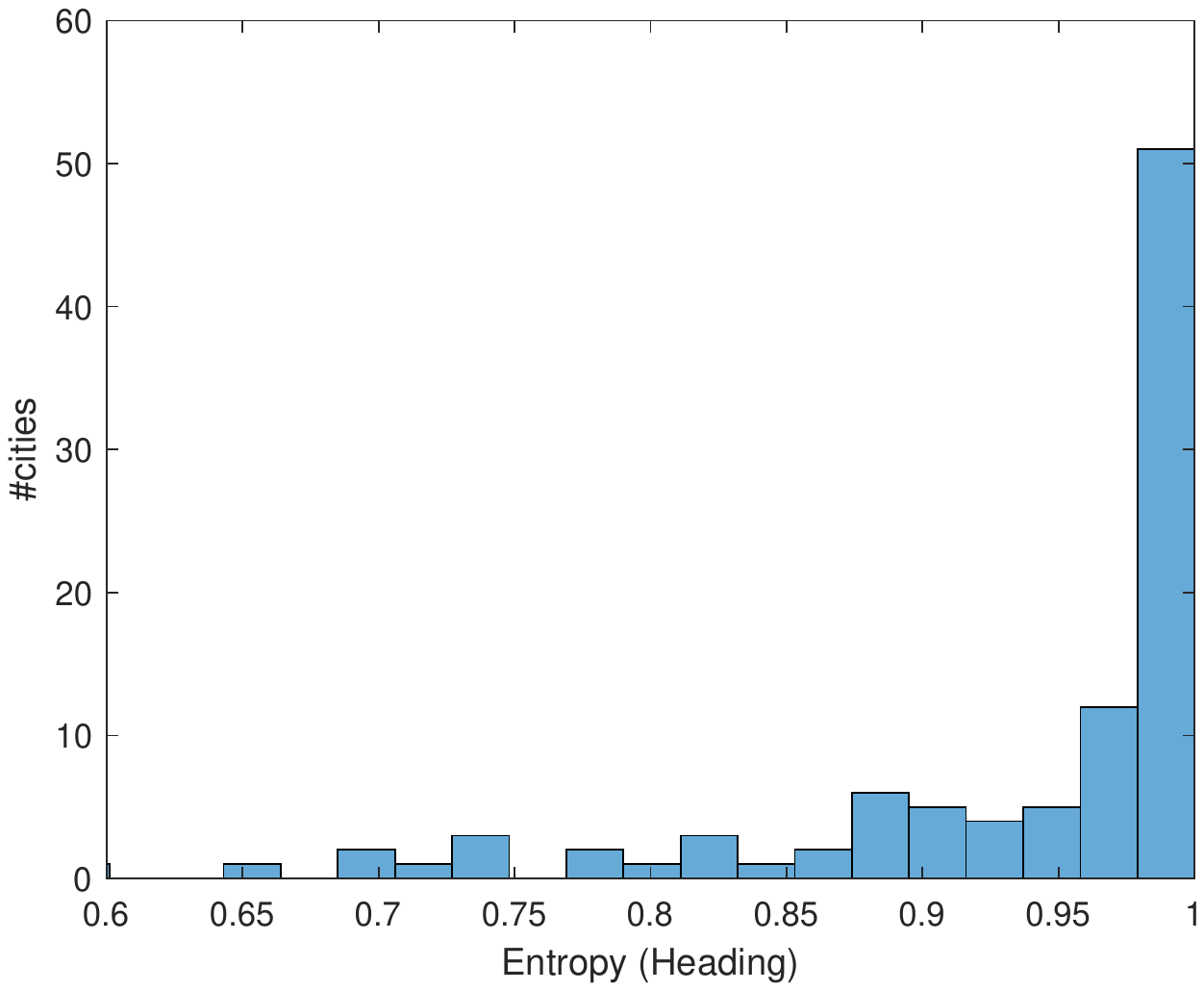}\label{fig:city_anglehist}}
	\subfigure[]{\includegraphics[width=0.47\linewidth, viewport=110 243 500 540, clip=true]{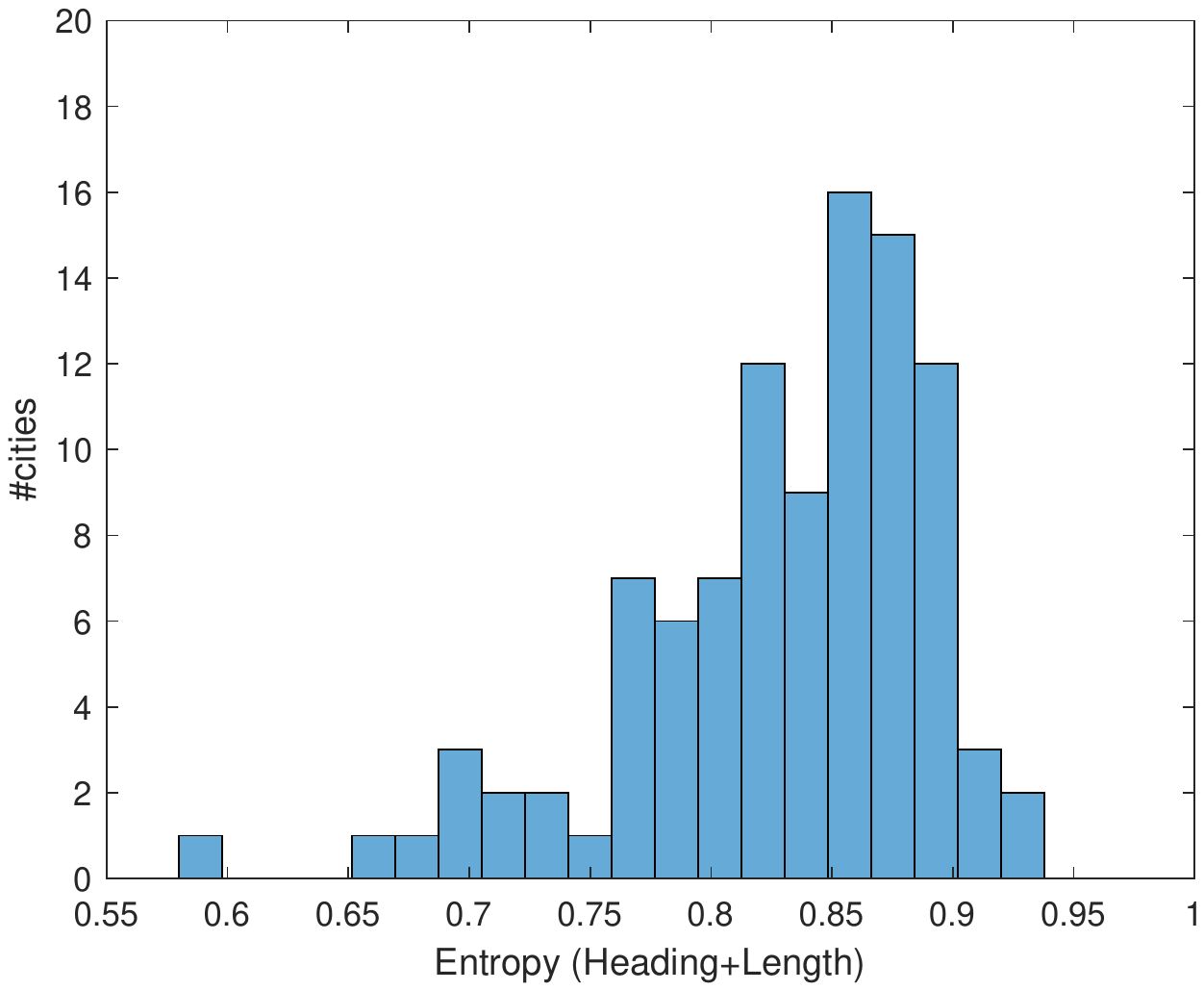}\label{fig:city_hist}}
	\subfigure[]{\includegraphics[width=0.47\linewidth, viewport=110 243 500 540, clip=true]{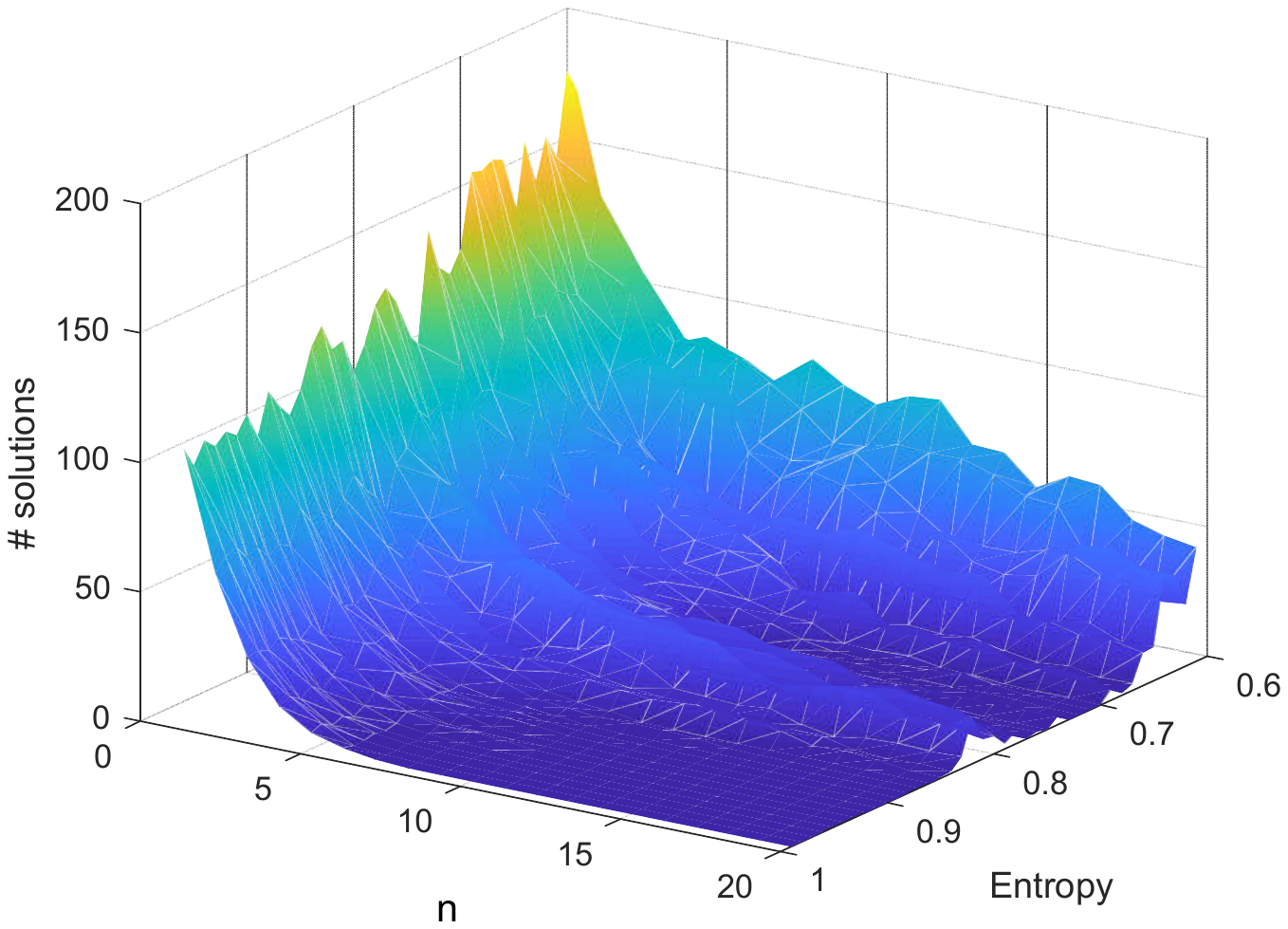}\label{fig:Entropy_exp_h}}
	\subfigure[]{\includegraphics[width=0.47\linewidth, viewport=110 243 500 540, clip=true]{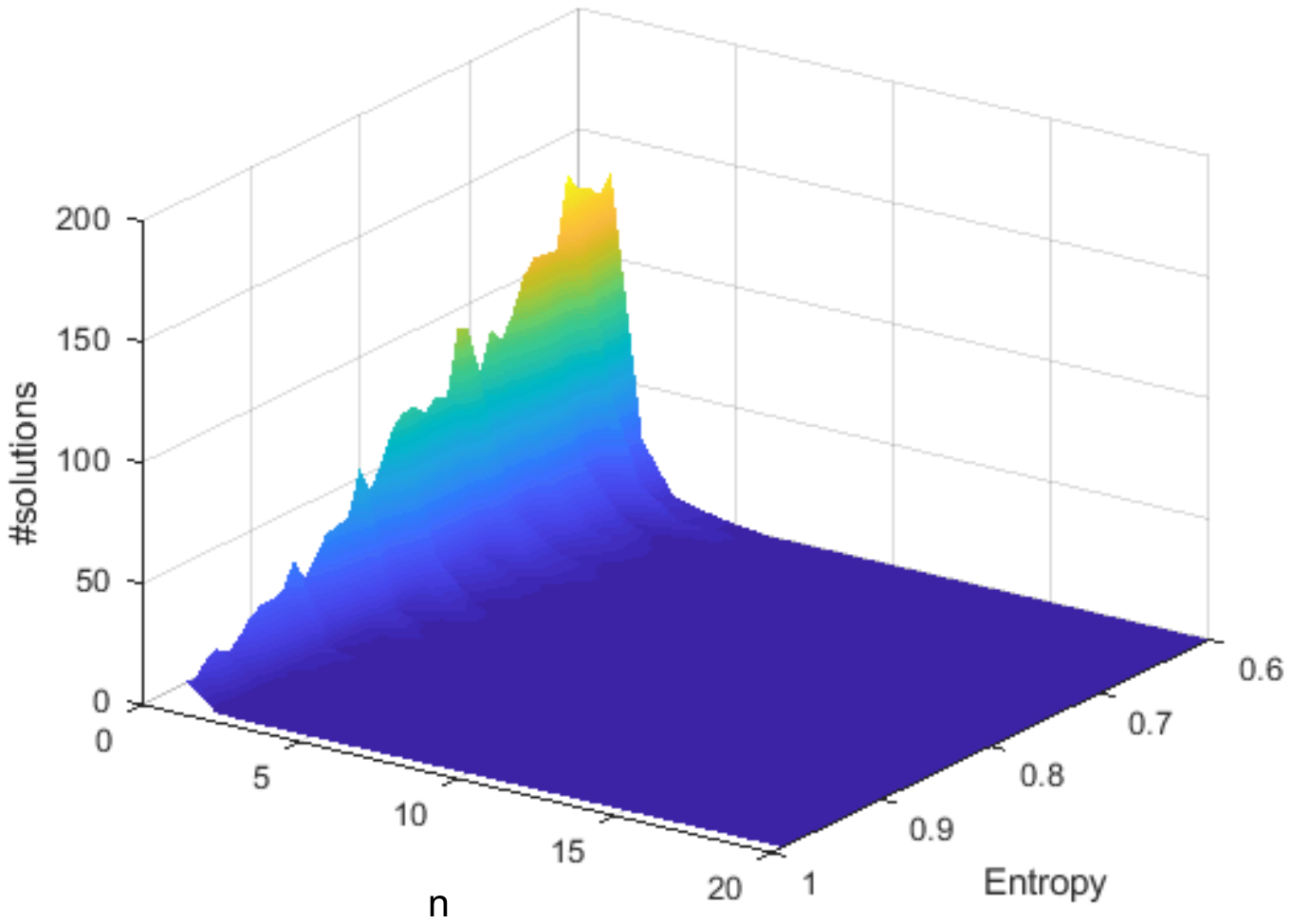}\label{fig:Entropy_exp_hl}}
	\subfigure[]{\includegraphics[width=0.47\linewidth, viewport=64 105 720 552, clip=true]{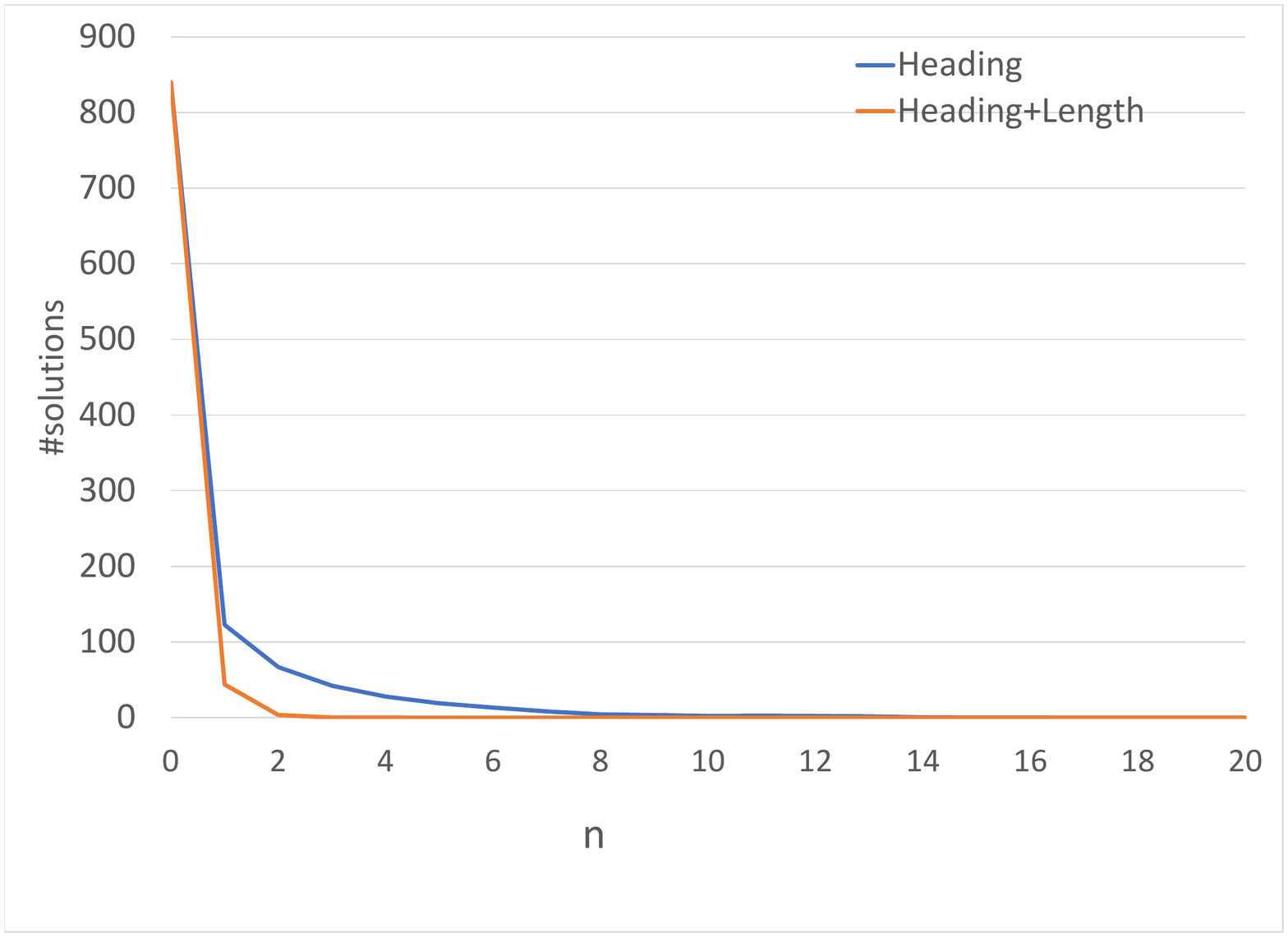}\label{fig:obs_sol}}
	\subfigure[]{\includegraphics[width=0.47\linewidth, viewport=64 90 716 552, clip=true]{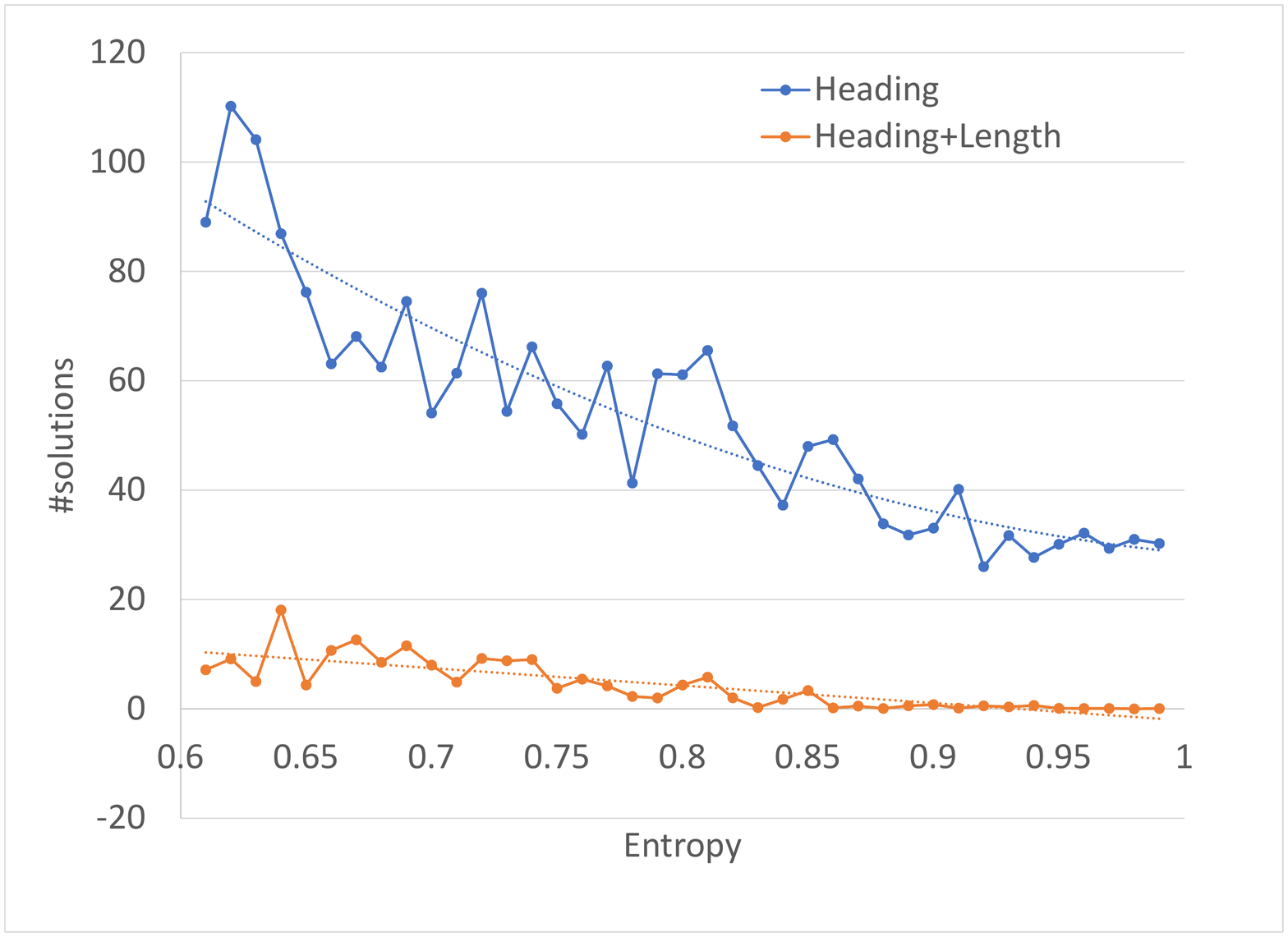}\label{fig:entro_sol}}
	\caption{ (a) Entropy of 100 cities. (b) Heading entropy distribution of 100 cities. (c) Heading and length entropy distribution of 100 cites. (d) $\#$solutions with respect to map entropy values (heading only) and $n$. (e) $\#$solutions with respect to map entropy values (heading+length) and $n$. (f) $n$ versus $\#$solutions with fixed map entropy = 0.86. (g) Map entropy values versus $\#$solutions with $n = 3$.}
	\label{fig:MapEntropyAnaysis}
\end{figure}

\subsubsection{Physical Experiments}\label{ssec:global_exp}
We also compare the two aforementioned methods in physical experiments. Again, the speed is described in $n$ needed to reach a unique solution. Smaller $n$ is more desirable. 
We test three sequences from CSData on CSMap, two sequences on KITTI00Map and two sequences on KITTI05Map. The comparison results are shown in the last two columns of Tab.~\ref{tab:MapsInfo}. 
In all tests, GBPL takes $n=3.1$ in average with a standard deviation of $0.69$ to localize the vehicle while PLAM takes $n=6.3$ on average with a standard deviation of $2.29$ in comparison. As expected, GBPL has a faster localization speed than that of PLAM. As shown in Tab.~\ref{tab:MapsInfo}, the entropy values (heading+length) of CSMap, KITTI00Map and KITTI05Map are 0.724, 0.877, and 0.797, respectively. By checking the results in Fig.~\ref{fig:Entropy_exp_hl}, $n$ required for reaching a unique solution in the real map agrees with simulation results.

\subsection{Localization Alignment and Verification Test}
Global localization only provides an initial position and the accuracy of continuous localization is determined by the LAV thread. We show localization accuracy result for all seven test sequences. PLAM does not have the capability of continuous localization and hence is not tested here. We only compare GBPL result with the ground truth.

\subsubsection{Ground Truth and Evaluation Metric} The ground truth in our experiments is the actual GPS trajectory. The localization error is defined as the Euclidean
distance between the estimated aligned trajectory and the ground truth. The localization errors are measured in meters.

\begin{figure}[ht!]
	\subfigure[]{\includegraphics[width=0.48\linewidth, viewport={119 239 477 534}, clip=true]{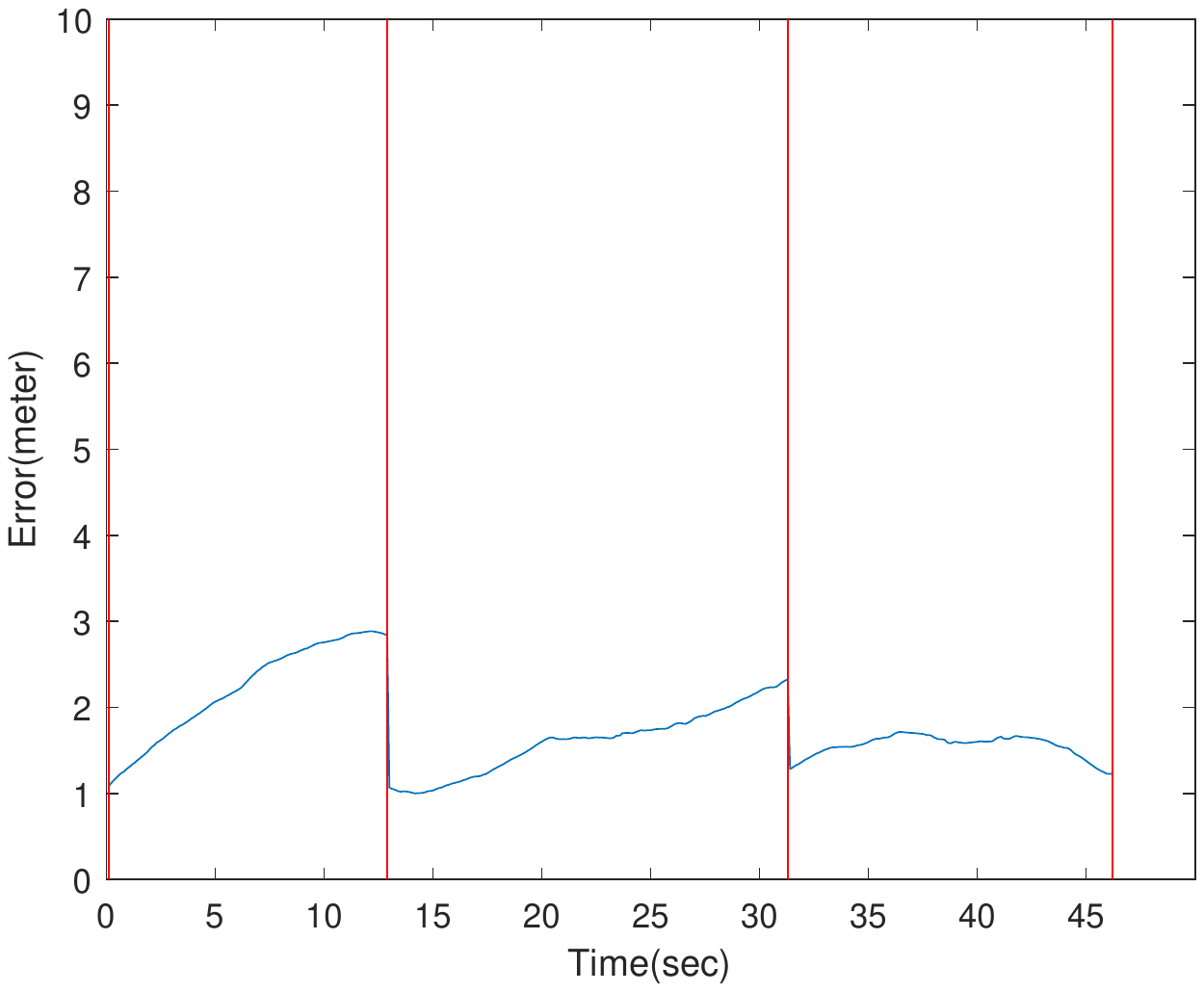}}
	\subfigure[]{\includegraphics[width=0.48\linewidth, viewport={119 239 477 534}, clip=true]{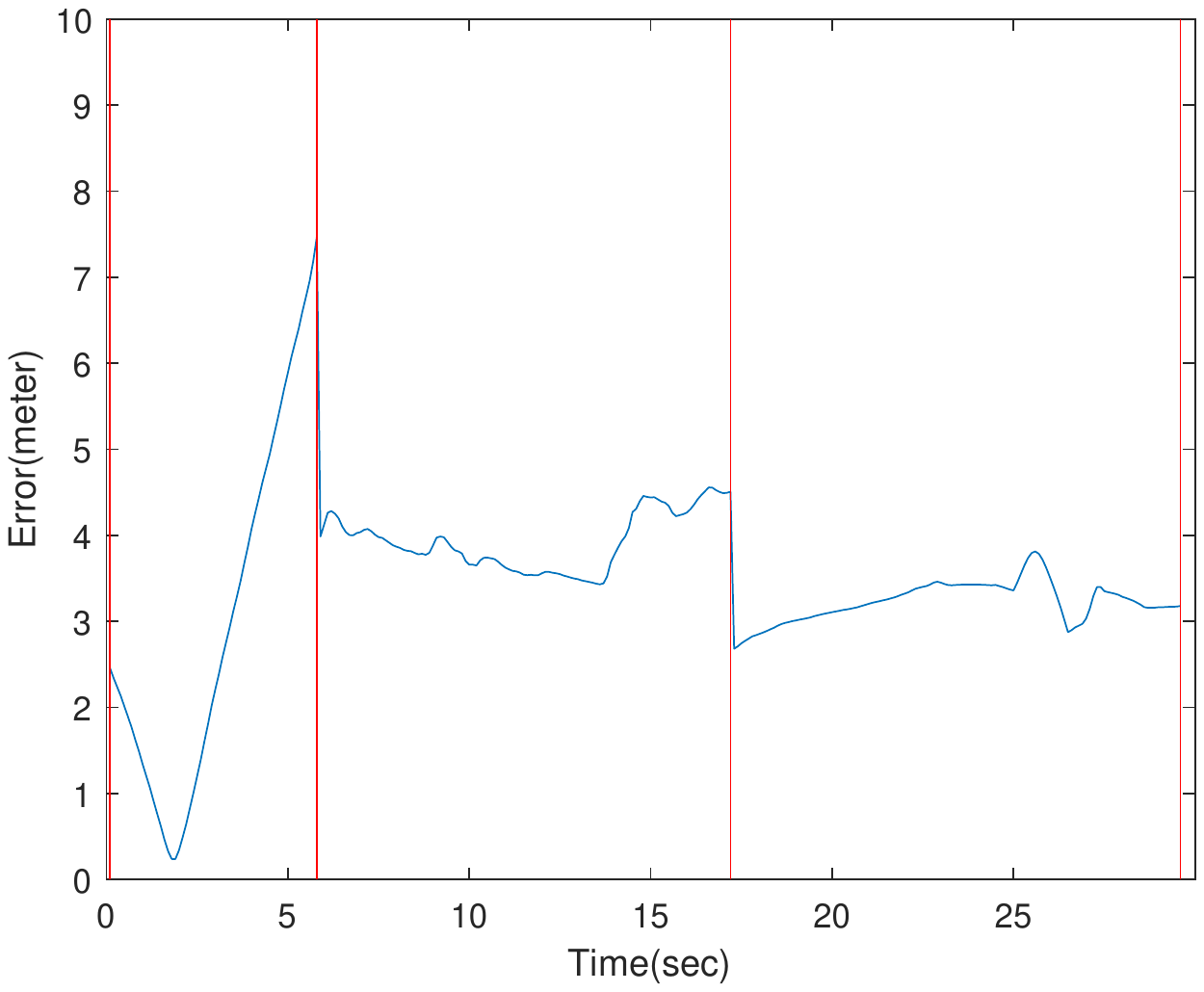}}
	\subfigure[]{\includegraphics[width=0.48\linewidth, viewport={119 239 477 534}, clip=true]{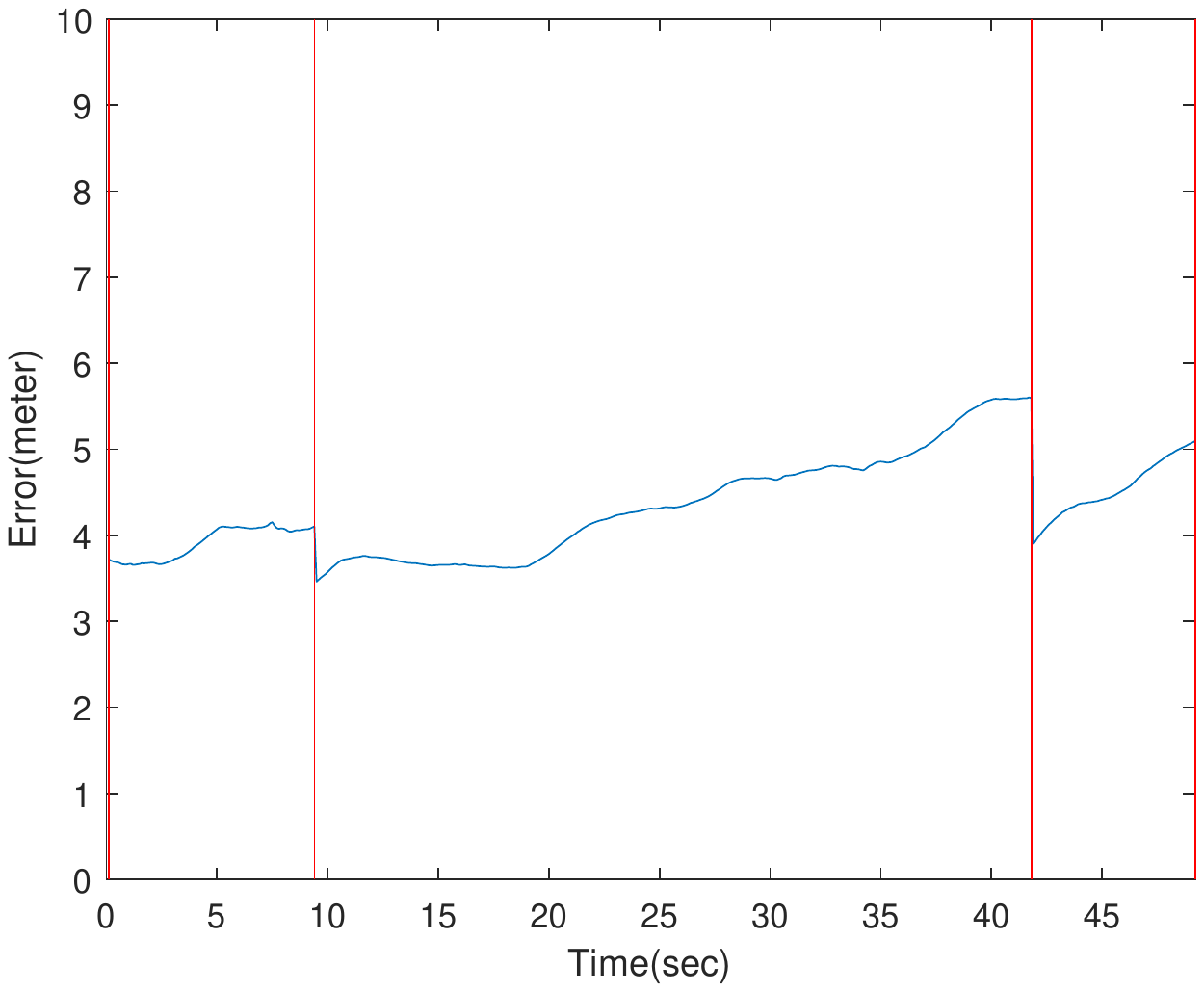}}
	\subfigure[]{\includegraphics[width=0.48\linewidth, viewport={119 239 477 534}, clip=true]{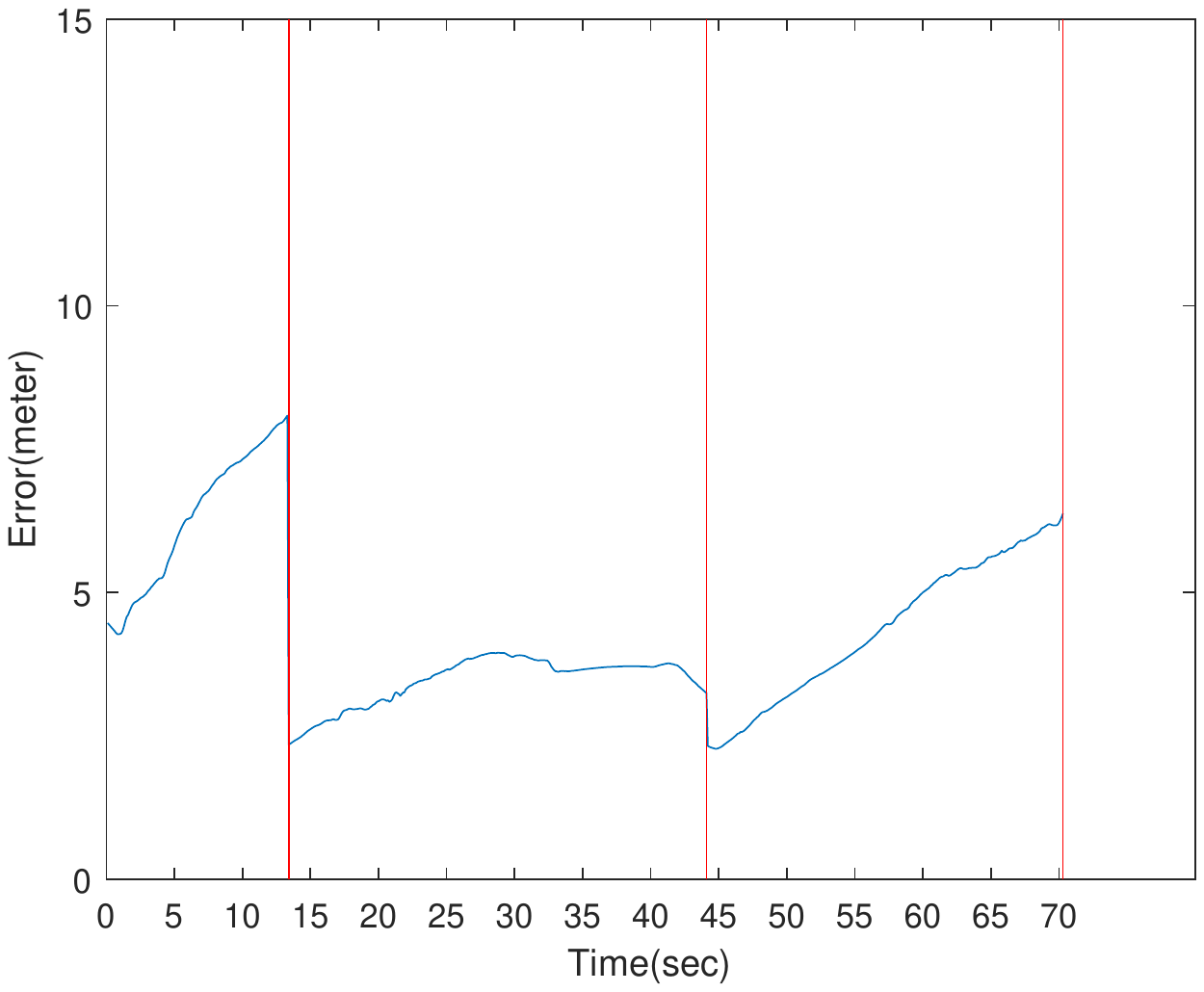}}
	\caption{LAV accuracy results using KITTI sequences on KITTI00Map and KITTI05Map: (a)  KITTI00-1, (b)  KITTI00-2, (c)  KITTI05-1, and (d)  KITTI05-2.}\label{fig:loc_error_kitti}
\end{figure}
\begin{figure}[ht!]
	\subfigure[]{\includegraphics[width=0.48\linewidth, viewport={119 239 477 534}, clip=true]{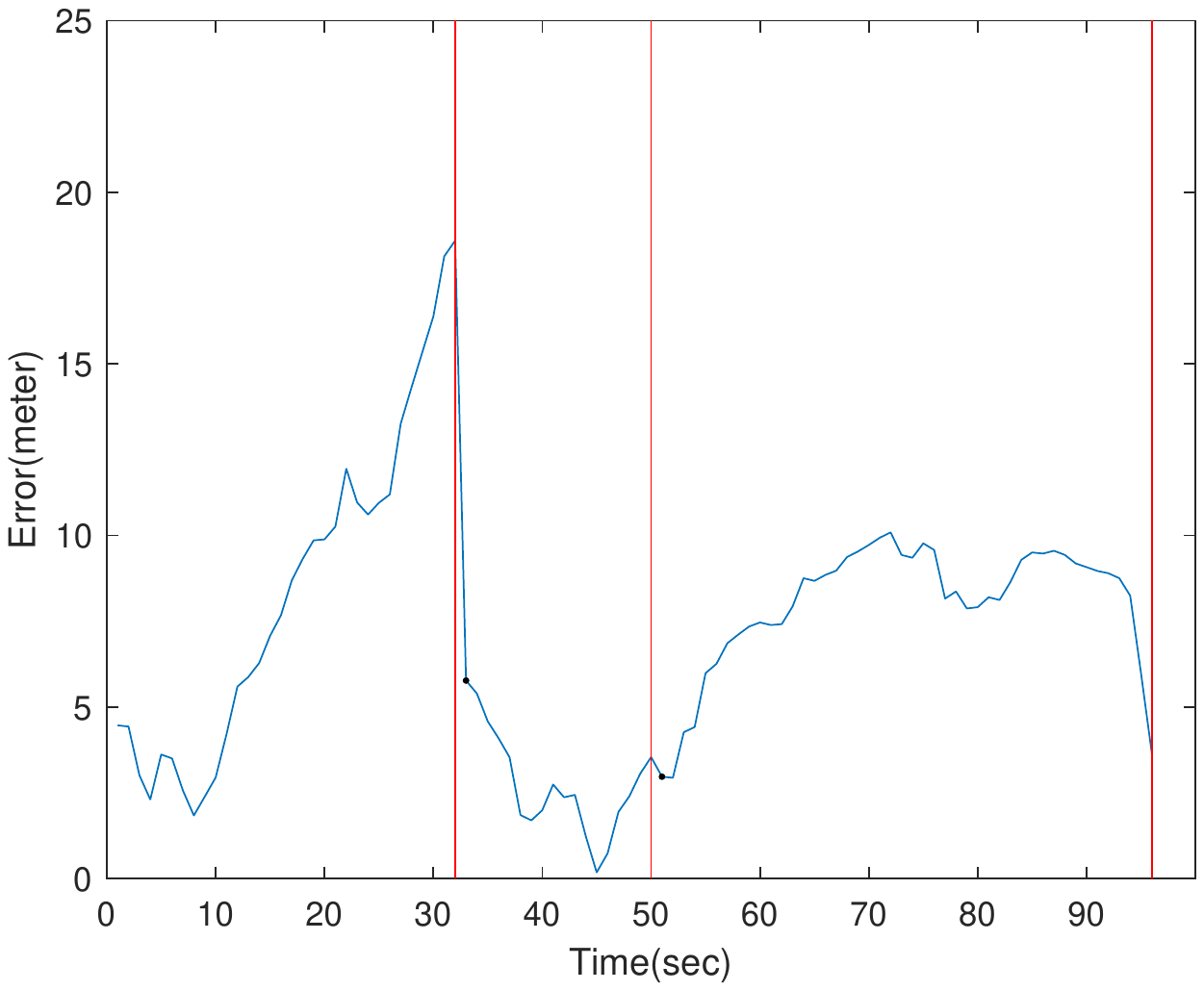}}
	\subfigure[]{\includegraphics[width=0.48\linewidth, viewport={119 239 477 534}, clip=true]{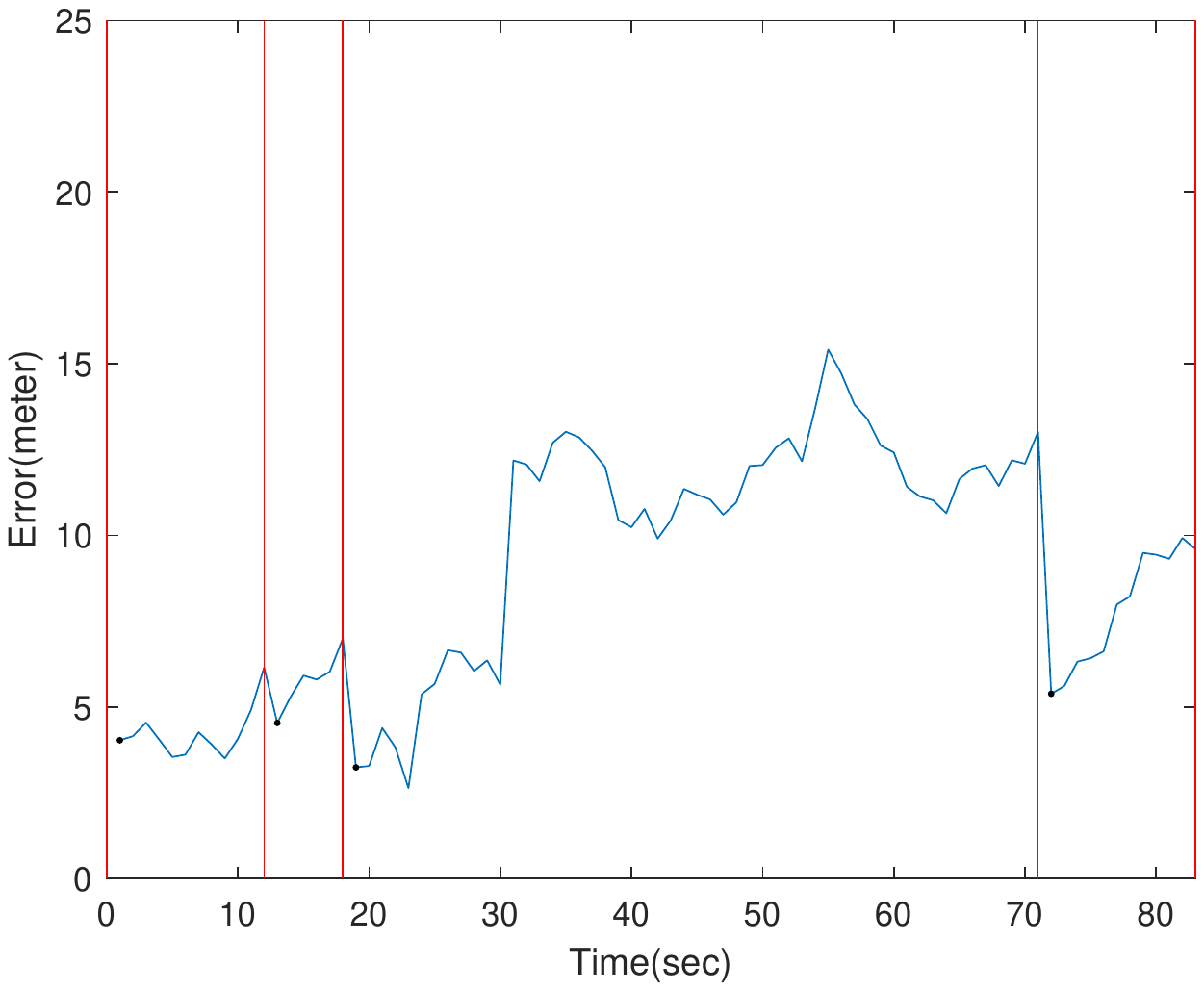}}
	\subfigure[]{\includegraphics[width=0.48\linewidth, viewport={119 239 477 534}, clip=true]{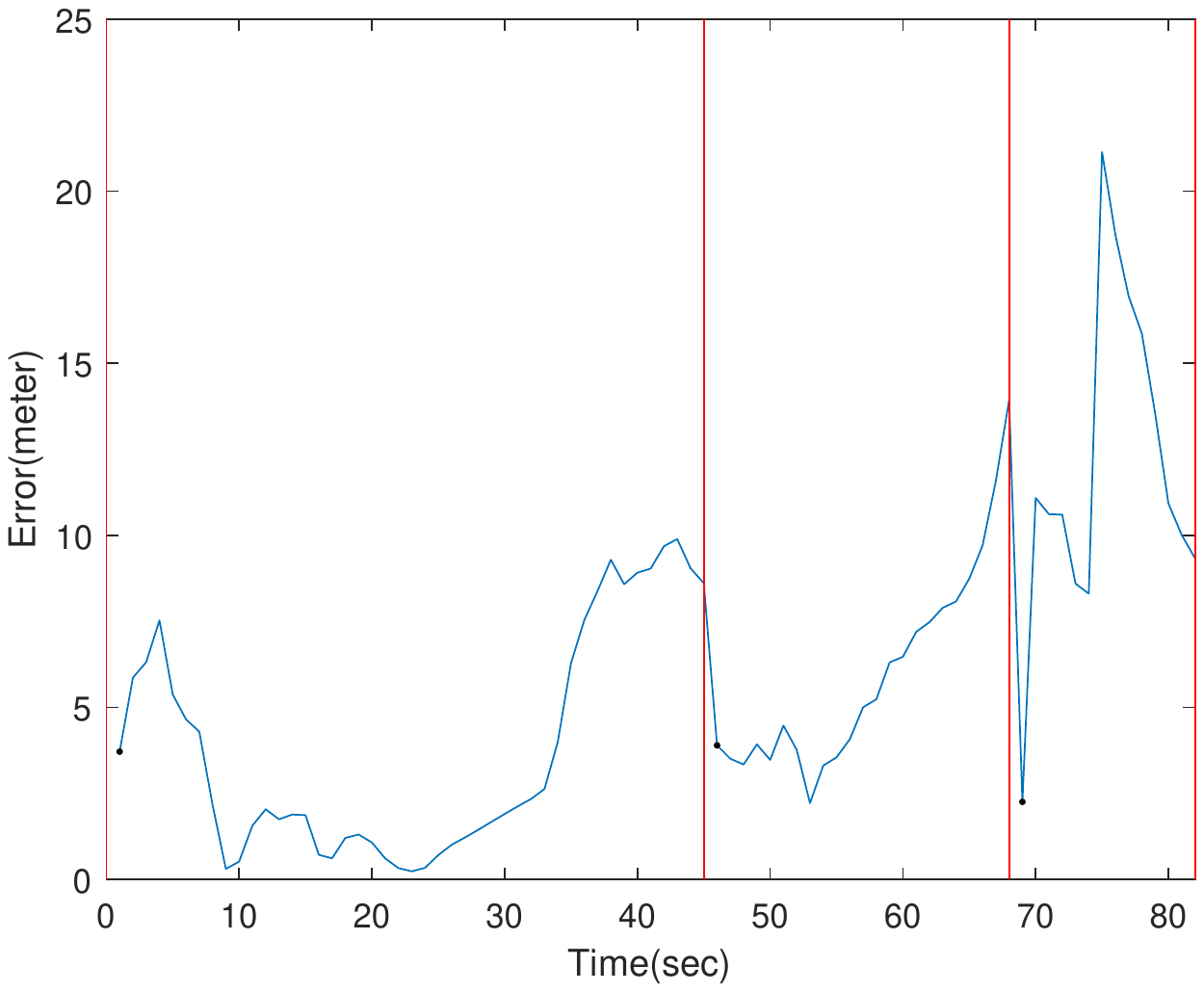}}
	\caption{LAV accuracy results using CSData on CSMap: (a) CS-1, (b)  CS-2, and (c) CS-3.}\label{fig:loc_error_BCS}
\end{figure}
\subsubsection{Accuracy Results} Figs.~\ref{fig:loc_error_kitti} and \ref{fig:loc_error_BCS} show the accuracy results by plotting the localization errors of each sequence. Red vertical lines are where LAV is excuted, i.e., when turns are detected. The first red vertical line corresponds to where we obtain global location. In all test sequences, the error in vehicle position is reduced to less than $5m$ when LAV runs at the moments indicated by the red lines. After that error slowly grows until reaching the next LAV moment.  This matches the expected map uncertainty (around $10m$). The localization accuracy of CSData on CSMap appears to be less than that of KITTI data. This is mostly due to the fact that the ground truth of CSData is not as accurate as that of the KITTI dataset. CSData uses the GPS receiver on the cell phone with an accuracy of about 10 meters or worse while the GPS receiver for KITTI data set is high quality GPS (model RT3000v3) with an accuracy of 1 centimeter.

\begin{figure}[ht!]
	\subfigure[]{\includegraphics[width=0.49\linewidth, viewport=104 238 478 534, clip=true]{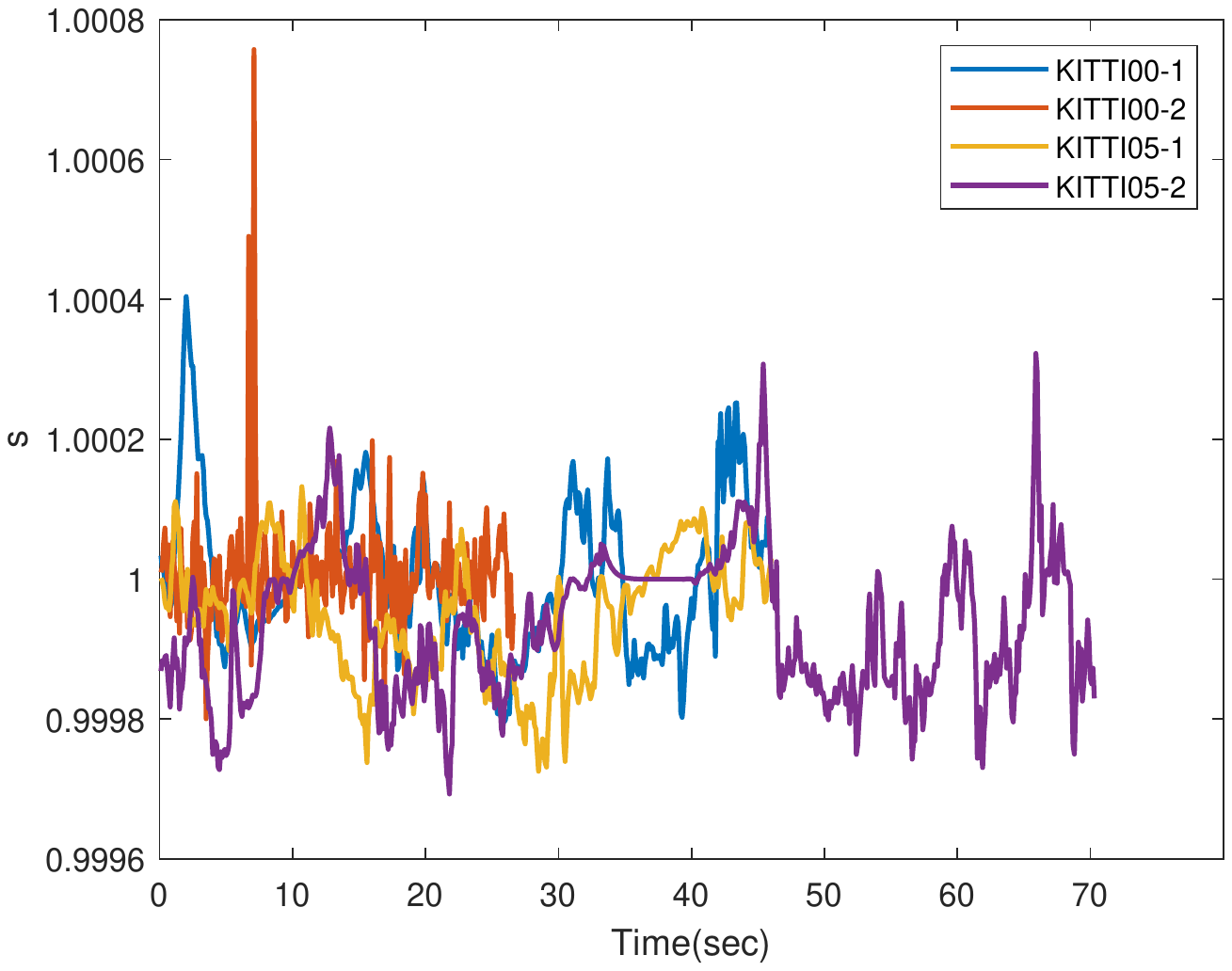}}
	\subfigure[]{\includegraphics[width=0.49\linewidth, viewport=104 238 478 534, clip=true]{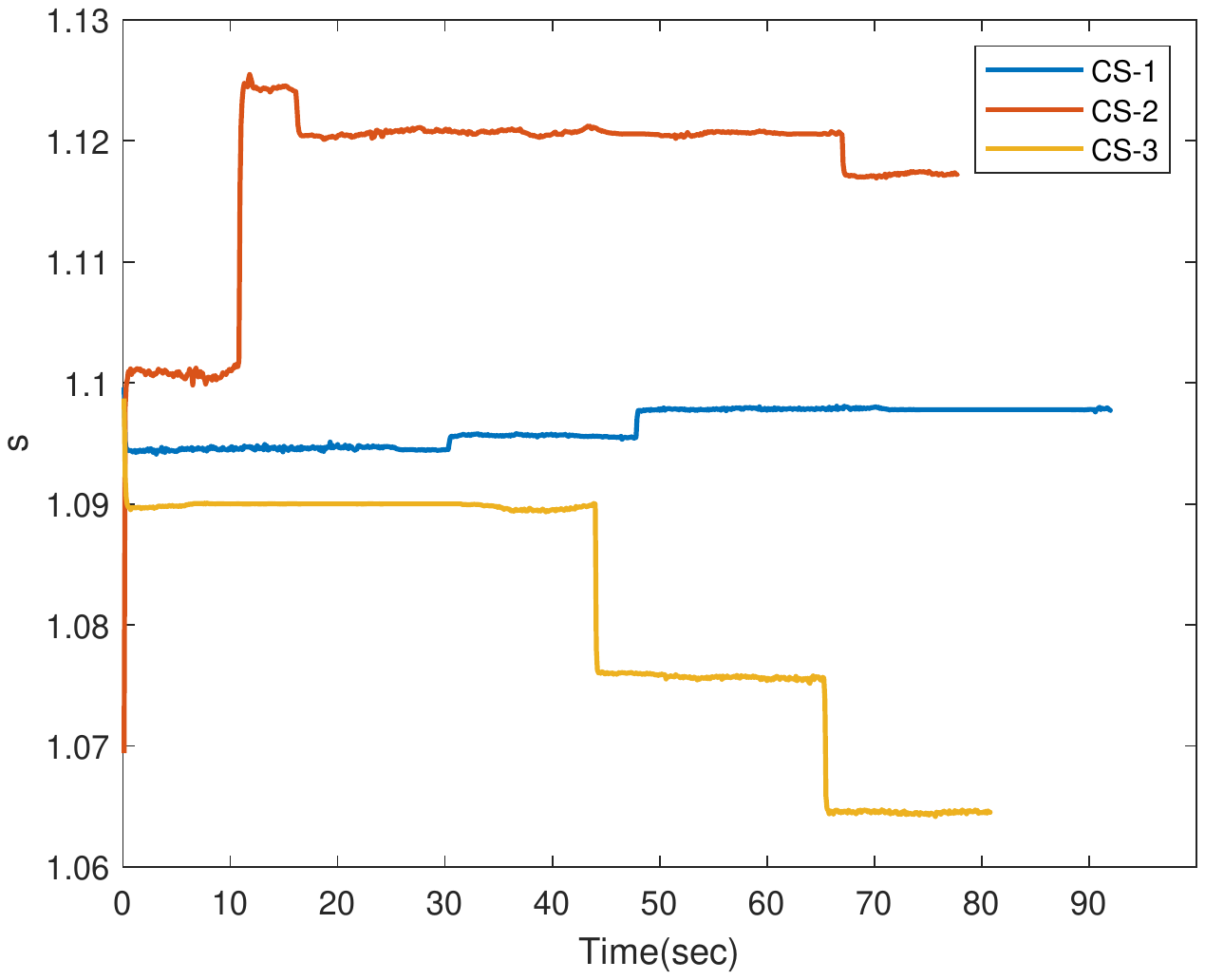}}
	\caption{Scale and slip factor value over time in EKF \eqref{eq:obs_scale}: (a) KITTI data and (b) CSData. Note the sequences are color coded and are not of the same length in time.}
	\label{fig:loc_scale}
\end{figure}
\begin{figure}[ht!]
	\subfigure[]{\includegraphics[width=0.49\linewidth, viewport=104 238 478 545, clip=true]{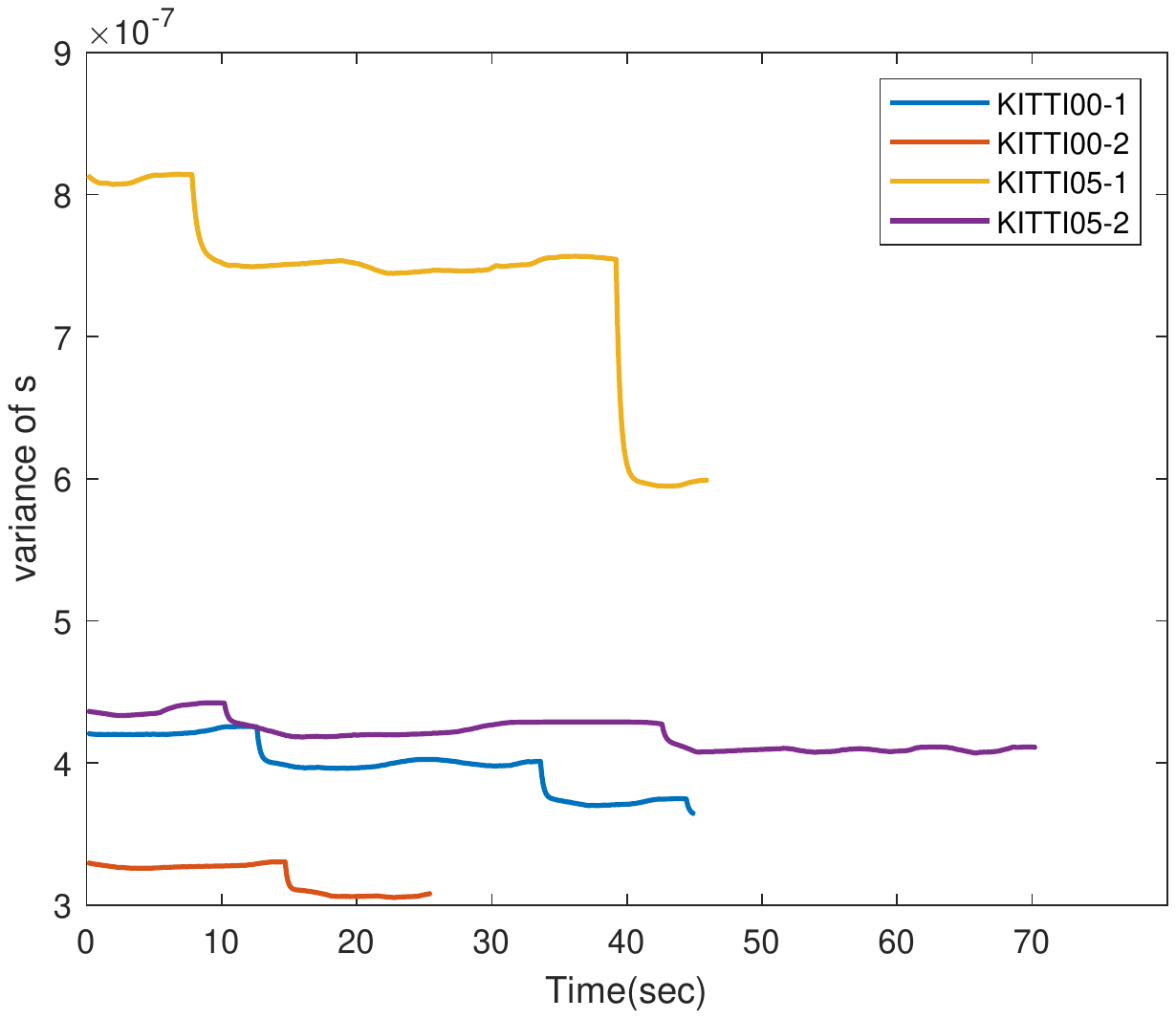}}
	\subfigure[]{\includegraphics[width=0.49\linewidth, viewport=104 238 478 545, clip=true]{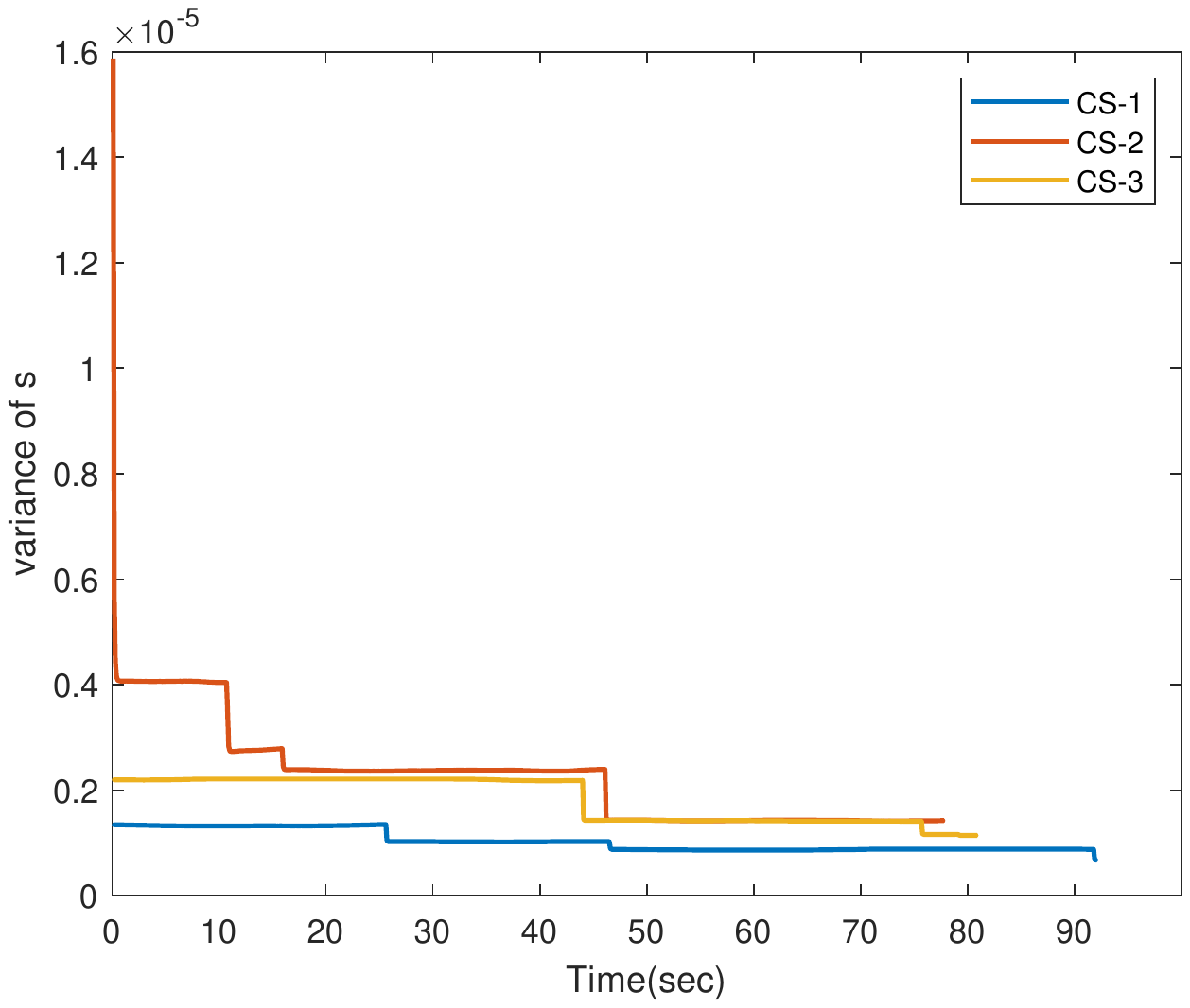}}
	\caption{Scale and slip factor variance over time in EKF: (a) KITTI data and (b) CSData. Note the sequences are color coded and are not of the same length in time.}
	\label{fig:loc_scalevar}
\end{figure}
\subsubsection{Scale and Slip Factor} Fig.~\ref{fig:loc_scale} shows the estimated SSF in EKF (i.e. $s_j$ in \eqref{eq:obs_scale}). These results show the effectiveness of LAV in detecting systematic bias in wheel odometry. For CSData, SSF values are between 1.09 to 1.15 while the SSF values from KITTI data are close to 1.00. It is clear that the vehicle velocity from the Panda OBD II dongle contains bias. It tends to underestimated vehicle velocity by about 10\%. This may be due to incorrect parameters in gear ratio or wheel/tire size. Also, the fluctuation in SSF in CSData is also large. This may also be a result of less accurate GPS values or variable tire inflation status since data is collected at different times over several months. Nonrigid mounting of the cellphone also contributes to the issue. Nevertheless, our GBPL algorithm is robust to these factors and still provides a good localization result.
We also shows the variance of $s_j$ in Fig.~\ref{fig:loc_scalevar}. These results show $\sigma^2_{s_ssf}$ decreasing as travel length increases as in Lemma~\eqref{lem:scale}.  

\section{Conclusion and Future Work} \label{sec:conclusion}
We reported our GBPL method that did not rely on the perception and recognition of external landmarks to localize robots/vehicles in urban environments. The proposed method
is designed to be a fallback solution when everything else fails due to poor lighting conditions or bad weather conditions. The method estimated a rudimentry vehicle trajectory computed from an IMU, a compass, and a wheel encoder and matched it with a prior road map. To address the drifting issue in the dead-reckoning process and the fact that the vehicle trajectory may not overlap with road waypoints on the map, we developed a feature-based Bayesian graph matching where features are long and straight road segments. GBPL pre-processed maps into an HLG which stores all long and straight segments of road as nodes to facilitate global localization process. Once the map matching is successful, our algorithm tracks vehicle movement and use the map information to regulate EKF's drifting issue. The algorithm was tested in both simulation and physical experiments and results are satisfying.  

In the future, we are interested in extending the work to design a multiple vehicle/robot collaborative localization scheme under \emph{ad hoc} vehicle-to-vehicle communication framework.  We will report new results in the future publications.
\section*{Acknowledgment}
We would like to thank C. Chou, B. Li, S. Yeh, A. Kingery, A. Angert,  D. Wang, and S. Xie for their input and contributions to the NetBot Lab at Texas A\&M University.


\bibliographystyle{IEEEtran}
\bibliography{./bibs/jas_gyroloc,./bibs/jas,./bibs/Yan,./bibs/dez}

\begin{thebibliography}{10}
\providecommand{\url}[1]{#1}
\csname url@samestyle\endcsname
\providecommand{\newblock}{\relax}
\providecommand{\bibinfo}[2]{#2}
\providecommand{\BIBentrySTDinterwordspacing}{\spaceskip=0pt\relax}
\providecommand{\BIBentryALTinterwordstretchfactor}{4}
\providecommand{\BIBentryALTinterwordspacing}{\spaceskip=\fontdimen2\font plus
\BIBentryALTinterwordstretchfactor\fontdimen3\font minus
  \fontdimen4\font\relax}
\providecommand{\BIBforeignlanguage}[2]{{%
\expandafter\ifx\csname l@#1\endcsname\relax
\typeout{** WARNING: IEEEtran.bst: No hyphenation pattern has been}%
\typeout{** loaded for the language `#1'. Using the pattern for}%
\typeout{** the default language instead.}%
\else
\language=\csname l@#1\endcsname
\fi
#2}}
\providecommand{\BIBdecl}{\relax}
\BIBdecl

\bibitem{cheng2018plam}
H.~Cheng, D.~Song, A.~Angert, B.~Li, and J.~Yi, ``Proprioceptive localization
  assisted by magnetoreception: A minimalist intermittent heading-based
  approach,'' \emph{IEEE Robotics and Automation Letters}, 2018.

\bibitem{Brubaker16selfloc}
M.~A. Brubaker, A.~Geiger, and R.~Urtasun, ``Map-based probabilistic visual
  self-localization,'' \emph{IEEE Transactions on Pattern Analysis and Machine
  Intelligence}, vol.~38, no.~4, pp. 652--665, April 2016.

\bibitem{lowry2016visual}
S.~Lowry, N.~S{\"u}nderhauf, P.~Newman, J.~J. Leonard, D.~Cox, P.~Corke, and
  M.~J. Milford, ``Visual place recognition: A survey,'' \emph{IEEE
  Transactions on Robotics}, vol.~32, no.~1, pp. 1--19, 2016.

\bibitem{yan-mfg-lba-tro-2015}
Y.~Lu and D.~Song, ``Visual navigation using heterogeneous landmarks and
  unsupervised geometric constraints,'' in \emph{IEEE Transactions on Robotics
  (T-RO)}, vol.~31, no.~3, June 2015, pp. 736--749.

\bibitem{hahnel2003efficient}
D.~Hahnel, W.~Burgard, D.~Fox, and S.~Thrun, ``An efficient {FastSLAM}
  algorithm for generating maps of large-scale cyclic environments from raw
  laser range measurements,'' in \emph{IEEE/RSJ International Conference on
  Intelligent Robots and Systems (IROS)}, vol.~1, 2003, pp. 206--211.

\bibitem{hess2016real}
W.~Hess, D.~Kohler, H.~Rapp, and D.~Andor, ``Real-time loop closure in 2d lidar
  slam,'' in \emph{Robotics and Automation (ICRA), 2016 IEEE International
  Conference on}.\hskip 1em plus 0.5em minus 0.4em\relax IEEE, 2016, pp.
  1271--1278.

\bibitem{levinson2010robust}
J.~Levinson and S.~Thrun, ``Robust vehicle localization in urban environments
  using probabilistic maps,'' in \emph{Robotics and Automation (ICRA), 2010
  IEEE International Conference on}.\hskip 1em plus 0.5em minus 0.4em\relax
  IEEE, 2010, pp. 4372--4378.

\bibitem{hunter2014path}
T.~Hunter, P.~Abbeel, and A.~Bayen, ``The path inference filter: model-based
  low-latency map matching of probe vehicle data,'' \emph{IEEE Transactions on
  Intelligent Transportation Systems}, vol.~15, no.~2, pp. 507--529, 2014.

\bibitem{cui03gpsloc}
Y.~Cui and S.~S. Ge, ``Autonomous vehicle positioning with gps in urban canyon
  environments,'' \emph{IEEE transactions on robotics and automation}, vol.~19,
  no.~1, pp. 15--25, 2003.

\bibitem{Aly17DrAnchor}
H.~Aly, A.~Basalamah, and M.~Youssef, ``Accurate and energy-efficient gps-less
  outdoor localization,'' \emph{ACM Trans. Spatial Algorithms Syst.}, vol.~3,
  no.~2, pp. 4:1--4:31, Jul. 2017.

\bibitem{chou-icra-2018}
C.~{Chou}, A.~{Kingery}, D.~{Wang}, H.~{Li}, and D.~{Song},
  ``Encoder-camera-ground penetrating radar tri-sensor mapping for surface and
  subsurface transportation infrastructure inspection,'' in \emph{2018 IEEE
  International Conference on Robotics and Automation (ICRA)}, May 2018, pp.
  1452--1457.

\bibitem{li2013high}
M.~Li and A.~I. Mourikis, ``High-precision, consistent {EKF}-based
  visual--inertial odometry,'' \emph{The International Journal of Robotics
  Research}, vol.~32, no.~6, pp. 690--711, 2013.

\bibitem{yi2007imu}
J.~Yi, J.~Zhang, D.~Song, and S.~Jayasuriya, ``Imu-based localization and slip
  estimation for skid-steered mobile robots,'' in \emph{Intelligent Robots and
  Systems, 2007. IROS 2007. IEEE/RSJ International Conference on}.\hskip 1em
  plus 0.5em minus 0.4em\relax IEEE, 2007, pp. 2845--2850.

\bibitem{paull2013auv}
L.~Paull, S.~Saeedi, M.~Seto, and H.~Li, ``Auv navigation and localization: A
  review,'' \emph{IEEE Journal of Oceanic Engineering}, vol.~39, no.~1, pp.
  131--149, 2013.

\bibitem{kang2014smartpdr}
W.~Kang and Y.~Han, ``Smartpdr: Smartphone-based pedestrian dead reckoning for
  indoor localization,'' \emph{IEEE Sensors journal}, vol.~15, no.~5, pp.
  2906--2916, 2014.

\bibitem{constandache2010compacc}
I.~Constandache, R.~R. Choudhury, and I.~Rhee, ``Compacc: Using mobile phone
  compasses and accelerometers for localization,'' in \emph{IEEE
  INFOCOM}.\hskip 1em plus 0.5em minus 0.4em\relax Citeseer, 2010, pp. 1--9.

\bibitem{karras2010line}
G.~C. Karras, S.~G. Loizou, and K.~J. Kyriakopoulos, ``On-line state and
  parameter estimation of an under-actuated underwater vehicle using a modified
  dual unscented kalman filter,'' in \emph{IEEE/RSJ International Conference on
  Intelligent Robots and Systems(IROS)}.\hskip 1em plus 0.5em minus 0.4em\relax
  IEEE, 2010, pp. 4868--4873.

\bibitem{gustafsson2002particle}
F.~Gustafsson, F.~Gunnarsson, N.~Bergman, U.~Forssell, J.~Jansson, R.~Karlsson,
  and P.-J. Nordlund, ``Particle filters for positioning, navigation, and
  tracking,'' \emph{IEEE Transactions on signal processing}, vol.~50, no.~2,
  pp. 425--437, 2002.

\bibitem{huang2010autonomous}
L.~Huang, B.~He, and T.~Zhang, ``An autonomous navigation algorithm for
  underwater vehicles based on inertial measurement units and sonar,'' in
  \emph{2010 2nd International Asia Conference on Informatics in Control,
  Automation and Robotics (CAR 2010)}, vol.~1.\hskip 1em plus 0.5em minus
  0.4em\relax IEEE, 2010, pp. 311--314.

\bibitem{siciliano2016springer}
B.~Siciliano and O.~Khatib, \emph{Springer handbook of robotics}.\hskip 1em
  plus 0.5em minus 0.4em\relax Springer, 2016.

\bibitem{merriaux2015fast}
P.~Merriaux, Y.~Dupuis, P.~Vasseur, and X.~Savatier, ``Fast and robust vehicle
  positioning on graph-based representation of drivable maps,'' in
  \emph{Robotics and Automation (ICRA), 2015 IEEE International Conference
  on}.\hskip 1em plus 0.5em minus 0.4em\relax IEEE, 2015, pp. 2787--2793.

\bibitem{ruchti2015localization}
P.~Ruchti, B.~Steder, M.~Ruhnke, and W.~Burgard, ``Localization on
  openstreetmap data using a 3d laser scanner,'' in \emph{Robotics and
  Automation (ICRA), 2015 IEEE International Conference on}.\hskip 1em plus
  0.5em minus 0.4em\relax IEEE, 2015, pp. 5260--5265.

\bibitem{jiang2017geometric}
R.~Jiang, S.~Yang, S.~S. Ge, H.~Wang, and T.~H. Lee, ``Geometric map-assisted
  localization for mobile robots based on uniform-gaussian distribution,''
  \emph{IEEE Robotics and Automation Letters}, vol.~2, no.~2, pp. 789--795,
  2017.

\bibitem{jin2016robust}
Y.~Jin and Z.~Xiang, ``Robust localization via turning point filtering with
  road map,'' in \emph{Intelligent Vehicles Symposium (IV), 2016 IEEE}.\hskip
  1em plus 0.5em minus 0.4em\relax IEEE, 2016, pp. 992--997.

\bibitem{Thrun2005}
S.~Thrun, W.~Burgard, and D.~Fox, \emph{Probabilistic Robotics}.\hskip 1em plus
  0.5em minus 0.4em\relax MIT Press, 2005.

\bibitem{OpenStreetMap}
{OpenStreetMap contributors}, ``{Planet dump retrieved from
  https://planet.osm.org },'' \url{ https://www.openstreetmap.org }, 2017.

\bibitem{GoogleMaps}
{Google Maps contributors}, \url{ https://www.google.com/maps/ }, 2017.

\bibitem{quddus2015GPSmap}
M.~Quddus and S.~Washington, ``Shortest path and vehicle trajectory aided
  map-matching for low frequency gps data,'' \emph{Transportation Research Part
  C: Emerging Technologies}, vol.~55, pp. 328--339, 2015.

\bibitem{rublee2011orb}
E.~Rublee, V.~Rabaud, K.~Konolige, and G.~Bradski, ``{ORB: An efficient
  alternative to SIFT or SURF},'' in \emph{IEEE International Conference on
  Computer Vision (ICCV)}, 2011, pp. 2564--2571.

\bibitem{Wahlstrom2016MapDR}
J.~{Wahlström}, I.~{Skog}, J.~G.~P. {Rodrigues}, P.~{Händel}, and
  A.~{Aguiar}, ``Map-aided dead-reckoning using only measurements of speed,''
  \emph{IEEE Transactions on Intelligent Vehicles}, vol.~1, no.~3, pp.
  244--253, Sep. 2016.

\bibitem{yu2019hybrid}
B.~Yu, L.~Dong, D.~Xue, H.~Zhu, X.~Geng, R.~Huang, and J.~Wang, ``A hybrid dead
  reckoning error correction scheme based on extended kalman filter and map
  matching for vehicle self-localization,'' \emph{Journal of Intelligent
  Transportation Systems}, vol.~23, no.~1, pp. 84--98, 2019.

\bibitem{hartley2003multiple}
R.~Hartley and A.~Zisserman, \emph{Multiple view geometry in computer
  vision}.\hskip 1em plus 0.5em minus 0.4em\relax Cambridge university press,
  2003.

\bibitem{bar2004estimation}
Y.~Bar-Shalom, X.~R. Li, and T.~Kirubarajan, \emph{Estimation with applications
  to tracking and navigation: theory algorithms and software}.\hskip 1em plus
  0.5em minus 0.4em\relax John Wiley \& Sons, 2004.

\bibitem{Yi2009SkidEKF}
J.~Yi, H.~Wang, J.~Zhang, D.~Song, S.~Jayasuriya, and J.~Liu, ``Kinematic
  modeling and analysis of skid-steered mobile robots with applications to
  low-cost inertial-measurement-unit-based motion estimation,'' \emph{IEEE
  Transactions on Robotics}, vol.~25, no.~5, pp. 1087--1097, Oct 2009.

\bibitem{cheng2017localization}
H.-M. Cheng and D.~Song, ``Localization in inconsistent wifi environments,'' in
  \emph{The International Symposium on Robotics Research (ISRR), Puerto Varas,
  Chile}, 2017.

\bibitem{otsu1979threshold}
N.~Otsu, ``A threshold selection method from gray-level histograms,''
  \emph{IEEE transactions on systems, man, and cybernetics}, vol.~9, no.~1, pp.
  62--66, 1979.

\bibitem{kronmal1979alias}
R.~A. Kronmal and A.~V. Peterson~Jr, ``On the alias method for generating
  random variables from a discrete distribution,'' \emph{The American
  Statistician}, vol.~33, no.~4, pp. 214--218, 1979.

\bibitem{mohajeri2014evolution}
N.~Mohajeri and A.~Gudmundsson, ``The evolution and complexity of urban street
  networks,'' \emph{Geographical Analysis}, vol.~46, no.~4, pp. 345--367, 2014.

\bibitem{borenstein1995correction}
J.~Borenstein and L.~Feng, ``Correction of systematic odometry errors in mobile
  robots,'' in \emph{Proceedings 1995 IEEE/RSJ International Conference on
  Intelligent Robots and Systems. Human Robot Interaction and Cooperative
  Robots}, vol.~3.\hskip 1em plus 0.5em minus 0.4em\relax IEEE, 1995, pp.
  569--574.

\bibitem{geiger2012KITTI}
A.~Geiger, P.~Lenz, and R.~Urtasun, ``Are we ready for autonomous driving? the
  kitti vision benchmark suite,'' in \emph{Computer Vision and Pattern
  Recognition (CVPR), 2012 IEEE Conference on}.\hskip 1em plus 0.5em minus
  0.4em\relax IEEE, 2012, pp. 3354--3361.

\bibitem{boeing2018entropyexp}
G.~Boeing, ``Urban spatial order: Street network orientation, configuration,
  and entropy,'' \emph{Applied Network Science}, vol.~4, no.~1, p.~67, 2019.

\end{thebibliography}
\end{document}